\renewcommand{\bm}{}
\definecolor{cpurple}{rgb}{0.6,0,0.6}
\newtheorem{theorem}{Theorem}[]
\newtheorem{lemma}{Lemma}[]
\newtheorem{informal theorem}[theorem]{Theorem (informal statement)}
\newtheorem{corollary}[theorem]{Corollary}
\newtheorem{claim}[theorem]{Claim}
\newtheorem{remark}[theorem]{Remark}
\newtheorem{definition}{Definition}
\newtheorem{question}[theorem]{Question}
\newcommand{\eqdef}{\stackrel{{\mathrm {\footnotesize def}}}{=}}
\def\nnewcolor{0}
\newcommand{\new}[1]{{\color{red} #1}}
\newcommand{\inote}[1]{\footnote{{\bf [[Ilias: {#1}\bf ]] }}}
\newcommand{\dnote}[1]{\footnote{{\bf [[Daniel: {#1}\bf ]] }}}
\newcommand{\snote}[1]{\footnote{{\bf [[Sihan: {#1}\bf ]] }}}
\newcommand{\hnote}[1]{\footnote{{\bf [[Hanshen: {#1}\bf ]] }}}
\newcommand{\new}[1]{#1}
\newcommand{\inote}[1]{}
\newcommand{\dnote}[1]{}
\newcommand{\snote}[1]{}
\newcommand{\hnote}[1]{}
\newcommand{\R}{\ensuremath{\mathbb R}}
\newcommand{\Z}{\ensuremath{\mathbb Z}}
\newcommand{\pr}{{\bf {\rm Pr}}}
\newcommand{\floor}[1]{\ensuremath{\left\lfloor#1\right\rfloor}}
\newcommand{\poly}{\operatorname{poly}}
\newcommand{\E}{\mathbf{E}}
\newcommand{\junk}[1]{}
\newcommand{\eps}{\epsilon}
\newcommand\snorm[2]{\left\| #2 \right\|_{#1}}
\newcommand{\abs}[1]{\left | #1 \right |}
\newcommand{\lp}{\left}
\newcommand{\rp}{\right}
\newcommand{\cov}{\textbf{Cov}}
\newlength\myindent
\newcommand{\iid}{\text{i.i.d.}}
\newcommand{\Binary}{\text{Binary}}
\newcommand{\erf}{\text{erf}}
\DeclareMathOperator{\argmax}{argmax}
\DeclareMathOperator{\LRS}{Left}
\DeclareMathOperator{\RRS}{Right}
\newcommand{\cl}{\preccurlyeq}
\newcommand\blfootnotea[1]{%
  \begingroup
  \renewcommand\thefootnote{}\footnote{#1}%
  \endgroup
}
\title{Online Robust Mean Estimation\blfootnotea{Author last names are in randomized order.}}
\author{
Daniel M. Kane\thanks{Supported by NSF Medium Award CCF-2107547, NSF Award CCF-1553288 (CAREER),  and a grant from
CasperLabs.}\\
UC San Diego\\
{\tt dakane@ucsd.edu }\\
\and
Ilias Diakonikolas\thanks{Supported by NSF Medium Award CCF-2107079,
a Sloan Research Fellowship, and a DARPA Learning with Less Labels (LwLL) grant.}\\
UW Madison\\
{\tt ilias@cs.wisc.edu}\\
\and
Hanshen Xiao\\
MIT\\
{\tt hsxiao@mit.edu } \\
\and
Sihan Liu\\
UC San Diego\\
{\tt sil046@ucsd.edu}
}
\date{}
\begin{document}

\maketitle

\begin{abstract}
We study the problem of high-dimensional robust mean estimation in an online setting. 
Specifically, we consider a scenario where $n$ sensors are measuring some common, ongoing phenomenon. 
At each time step $t=1,2,\ldots,T$, the $i^{th}$ sensor reports its readings $x^{(i)}_t$ for that time step. 
The algorithm must then commit to its estimate $\mu_t$ for the true mean value 
of the process at time $t$. We assume that most of the sensors observe 
independent samples from some common distribution $X$, 
but an $\eps$-fraction of them may instead behave maliciously. 
The algorithm wishes to compute a good approximation $\mu$ to the true mean $\mu^\ast := \E[X]$.
We note that if the algorithm is allowed to wait until time $T$ to report its estimate, 
this reduces to the well-studied problem of robust mean estimation. 
However, the requirement that our algorithm produces partial estimates 
as the data is coming in substantially complicates the situation.

We prove two main results about online robust mean estimation in this model. 
First, if the uncorrupted samples satisfy the standard condition of $(\eps,\delta)$-stability, 
we give an efficient online algorithm that outputs estimates $\mu_t$, $t \in [T],$ 
such that with high probability it holds that $\|\mu-\mu^\ast\|_2 = O(\delta \log(T))$, 
where $\mu = (\mu_t)_{t \in [T]}$.
We note that this error bound is nearly competitive with the best offline algorithms, 
which would achieve $\ell_2$-error of $O(\delta)$. 
Our second main result shows that 
with additional assumptions on the input (most notably that $X$ is a product distribution) 
there are inefficient algorithms whose error does not depend on $T$ at all.
\end{abstract}

\setcounter{page}{-1}

\thispagestyle{empty}

\newpage

\tableofcontents

\thispagestyle{empty}

\newpage

\section{Introduction} \label{sec:intro}

\subsection{Motivation and Background} \label{ssec:background}

One of the most fundamental problems in statistics is that of mean estimation: 
given a collection of $n$ i.i.d.\ samples drawn from an unknown distribution $X$ 
assumed to lie in some known distribution family $\mathcal{F}$, 
the goal is to output an accurate estimate of the unknown mean $\mu^{\ast}$ of $X$.
While this vanilla setting is fairly well understood, it does not capture 
a number of practically pressing real-world scenarios, where 
(i) due to modeling issues, the underlying distribution $X$ we sample from 
does not lie in the known family $\mathcal{F}$ but is only close to it, and 
(ii) a fraction of the samples are arbitrarily corrupted by malicious users.

The field of \emph{robust statistics} aims to design estimators 
that can tolerate up to a \emph{constant} fraction of corruptions, 
independent of the data dimensionality~\cite{Tukey60,Huber64, Huber09}.
Classical works in the field have identified the statistical limits of several problems 
in the robust setting, both in terms of constructing robust estimators and proving 
information-theoretic lower bounds~\cite{Yatracos85,DonLiu88a,Donoho92,Huber09}. 
However, the early estimators proposed in the statistics literature
were not computationally efficient, typically requiring exponential running time 
in the number of dimensions, see, e.g.,~\cite{Bernholt, Huber09}.

A relatively recent line of work, originating in computer science~\cite{DKKLMS16, LaiRV16}, 
has developed the field of \emph{algorithmic} high-dimensional robust statistics, 
aiming to design estimators that not only attain tight robustness 
guarantees, but are also efficiently computable.
This line of research has provided computationally efficient estimators 
for a variety of statistical tasks, including mean and covariance estimation, 
linear regression, and many others, 
under natural distributional assumptions on the uncorrupted data; 
see \cite{DK19-survey, DK23-book} for an overview of this area.

This recent progress notwithstanding, the vast majority of the recent literature on algorithmic 
robust statistics focuses on the {\em offline} setting, where the (corrupted) dataset is given in the input
and the goal is to produce a {\em single} accurate estimate. For example, in (offline) robust mean estimation,
we are given a dataset of $n$ points in $\R^M$, an $\eps$-fraction of which are corrupted, 
and the goal is to estimate the mean of the distribution that generated the uncorrupted samples.

The aforementioned offline setting fails to model some commonly arising situations.
First, we may need to produce estimations for a series of related statistical tasks 
that come in sequentially.
Second, we are often able to identify the \emph{providers} of the data.
This can be modeled abstractly as follows.
Consider the scenario that we have $n$ sensors over which an $\eps$-fraction may be hijacked by an adversary or simply malfunctioning. 
These sensors are collecting information about some common, ongoing stochastic process. 
In particular, if the stochastic process has $T$ stages, we can model it mathematically as a $T$-dimensional distribution $X$ such that $X_t$ encodes the state of the process at time step $t$.
Then, at each time step $t$, each uncorrupted sensor give us a report which is an i.i.d.\ sample from $X_t$ and the corrupted ones may give some arbitrary out-of-distribution reports.
Our goal is then to compute some statistics related 
to $X_t$ {\em at each time step} given the reports received so far.
A concrete scenario is described below.

\paragraph{Online Decision Making with User Feedback}
A company is trying to deploy a series of new features. 
Before deployment, a random set of users are selected for trials. 
After the trial session of each feature ends, 
the development team needs an estimate of a typical user's rating to the feature to
decide whether it is ready for public deployment.
While most feedbacks from the trial users probably do follow a stochastic pattern, 
some may be significantly ``out of distribution''. 
For example, they may originate from a non-typical user who has special demands 
or even a fake user account registered by competitors.
Ideally, we woudly like to \emph{identify} these outlier users 
so as to minimize their \emph{total} impact to our estimations in the long run.

Indeed, \new{similar scenarios arise whenever we face a \emph{sequence} of statistical estimation tasks 
which share the \emph{same} set of data providers that may not be completely trustworthy.
Though the statistical tasks themselves may be independent of each other, the underlying statistical estimation algorithms should not run independently as that will allow adversarial data providers to disturb the outcomes in \emph{every} estimation task.
The more favorable way is always to get rid of the suspicious data providers during early tasks so as to minimize their influence in the future. 
}

\new{At a more philosophical level, we aim at providing a mathematical framework through which one can develop algorithmic ways to establish \emph{trust} over different information sources over time.}
Almost on a daily basis we are required to make decisions or judgement{s} 
based on information collected from different channels, such as social media, 
television or even gossip. 
How much we believe a new story 
we hear may depend upon the degree 
to which we trust the source (based on our judgement of previous data from that source) 
and on how consistent the story is with others.

In this work, we make a concrete step in formulating such scenarios. 
Specifically, we define and study a natural notion of high-dimensional 
robust mean estimation in the online setting.

\vspace{-0.2cm}

\paragraph{Online Robust Mean Estimation: Problem Setup}
Throughout this work, we consider the standard strong contamination model.

\begin{definition}[Strong Contamination Model] \label{def: strong-contamination}
Given a parameter $0 \leq \eps <1/2$ 
and a set $\mathcal{C}$ of $n$ samples, 
the strong contamination adversary operates as follows. 
After observing the entire set $\mathcal{C}$, the adversary 
can remove up to $\eps n$ samples from $\mathcal{C}$ 
and replace them by arbitrary points. 
The resulting set $\mathcal{X}$ is called an $\eps$-corrupted version of $\mathcal{C}$.
\end{definition}

We are now ready to define our notion of
{\em robust online mean estimation.}
Intuitively, our goal is to model the scenario
where a series of mean estimation tasks 
need to be completed sequentially, 
using data collected from a set of sensors 
over which $\eps$-fraction are either hijacked or malfunctioning.
In more detail, we introduce the following definition.
\begin{definition}[Online Mean Estimation under Strong Contamination] \label{def:orm}
Given $M, T \in \Z^+$, such that $M$ is an integer multiple of $T$, 
and $0\leq \eps <1/2$, 
let $\mathcal{X}= \{x^{(1)}, \cdots, x^{(n)}\}$ be an $\eps$-corrupted version 
of a clean set of \iid~samples from a distribution $X$ on $\R^M$
with unknown mean $\mu^{*}$. 
The $M$ coordinates of each datapoint
are divided into $T$ batches, 
each of size $d \eqdef M/T$ \footnote{We remark that we require the division to be even only for convenience. The model can be generalized to work with any kind of partition depending on specific application scenarios and most of our algorithmic ideas are still applicable.}, 
i.e., $x^{(i)}$ is the concatenation of $x^{(i)}_1, \cdots, x^{(i)}_T$, 
where $x^{(i)}_t \in \R^d$, $t \in [T]$. 
The interaction with the learner proceeds in $T$ rounds as follows:
\begin{enumerate}[leftmargin=*]
\item In the $t$-th round, the $t$-th batch of coordinates 
$x^{(1)}_t, \cdots x^{(n)}_t \in \R^{d}$ are revealed. \new{(See Figure~\ref{fig:divide} for an illustration of this process).}
\item After the $t$-th round, the algorithm is required 
to output $\mu_t \in \R^{d}$ as an estimate of $\mu_t^{\ast}$ -- the $t$-th batch of coordinates of $\mu^{\ast}$.
\end{enumerate}
At the end of this process, we say that 
the algorithm estimates the mean of $X$ under $\eps$-corruption 
in the $T$-round online setting with error $\eps' > 0$, 
failure probability $\tau \in (0,1)$ and sample complexity $n$, 
if with probability at least $1 - \tau$ the following holds 
$$\snorm{2}{ \mu - \mu^{\ast} } = 
\sqrt{ \sum_{t=1}^T \snorm{2}{ \mu_{t} - \mu_{t}^{\ast} }^2 } \leq \eps' \;.$$
\end{definition}

\begin{figure}[H]
\centering
\includegraphics[width=8cm]{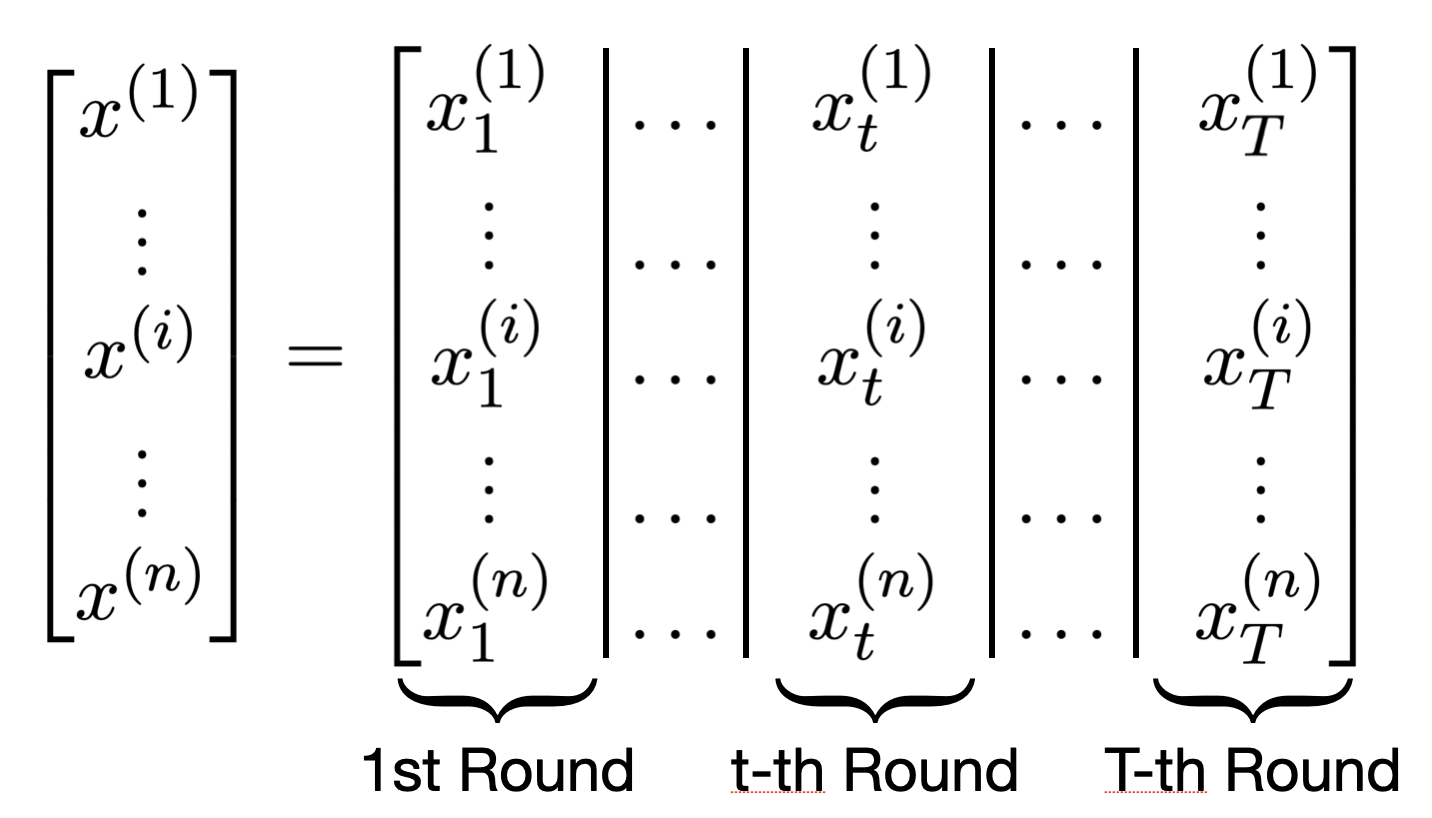}
\caption{Each sample $x^{(i)} \in \R^{M}$ collected is divided into $T$ sub-vectors 
$x^{(i)}_1 ,  \cdots,  x^{(i)}_T  \in \R^{d}$ for $d = M/T$. 
Only the $x^{(i)}_t$'s are revealed in the $t$-th round.}
\label{fig:divide}
\end{figure}

Before we proceed, some remarks are in order.
We start by noting that the task of online mean estimation without contamination
(corresponding to the special case of $\eps=0$ in Definition~\ref{def:orm})
is not significantly more difficult than offline mean estimation. Indeed, in the noise-free setting, one can simply compute the sample mean and computing the $t$-th coordinate of the sample mean only requires the $t$-th coordinate of each sample. 
The contamination, however, dramatically complicates the situation. 
Specifically, all known robust mean estimators (even inefficient ones!) 
--- including the Tukey median and its generalizations~\cite{Tukey75}
or filtering based methods~\cite{DKKLMS16, DK19-survey, DK23-book} --- 
that achieve dimension-independent errors 
require looking at all coordinates of the sample at the same time.
Prior to the current work, even the information-theoretic
aspects of online robust  mean estimation were not understood
(i.e., what is the optimal error achievable 
when the sample size goes to infinity without computational considerations).
Second, we remark that our formulation allows the mean estimation tasks across different time steps to be \emph{correlated}.
In particular, even though the sampling process between 
any two non-adversarial sensors are independent, 
the sampling results 
of one sensor (adversarial or not) at different time steps, namely the variables $x_t^{(i)}$ and $x_{t'}^{(i)}$ for $t \neq t'$, can be correlated.

In the rest of this section, we provide additional 
motivation for our robust online distribution learning model.

\paragraph{Federated Learning under Byzantine Failure}
Federated learning is the practice of training an ML model {in a distributed fashion} 
on multiple decentralized worker devices containing local data;  
see \cite{kairouz2021advances} for an overview of the field. 
The typical framework is the following.
At the $t$-th round, the central server broadcasts $w^{(t)} \in \R^d$ 
-- the parameters of the central model -- to all the worker devices. 
Then, each worker device makes updates to the central model 
received with their local data and sends back to the central 
server $w^{(t)}_i \in \R^d$ -- the parameters of the updated local model. 
After receiving the responses from all local devices, 
the central server updates the central model by aggregating all the local models, 
{producing a new estimate} $ w^{(t+1)} \leftarrow Aggregate( w_i^{(t)} ) $. 
A commonly used aggregation rule is called the \textsf{FederatedAveraging} algorithm: 
the central server simply computes the arithmetic 
mean of the updates~\cite{mcmahan2017communication}.
One then iterates the training process until the central model reaches 
high accuracy on some validation set prepared in advance.

The distributed nature of the learning framework makes the task 
particularly vulnerable to {\em Byzantine failures}~\cite{lamport2019byzantine} --- a subset 
of malicious machines that behave adversarially in the computing network. 
As noted in the work of~\cite{pillutla2019robust}, 
the \textsf{FederatedAveraging}~algorithm, despite being one of the most 
commonly used aggregation protocols in practice, 
is especially vulnerable to Byzantine errors: 
even if only one worker device is controlled by the adversary, 
it can ruin the entire training process 
by giving wildly off local parameters in just one iteration.

The inherently unpredictable and possibly colluding adversarial behaviors 
of the Byzantine devices make them hard or even impossible to distinguish. 
This is especially true in an online or iterative learning procedure, 
including that of learning from streaming data or running distributed 
SGD, see, e.g.,~\cite{chen2017distributed}, \cite{blanchard2017machine}.

Proposed solutions usually involve performing robust estimation 
at each iteration \emph{independently} \cite{pillutla2019robust}, \cite{li2019rsa}. 
This means that, though the adversary cannot corrupt the aggregation by too much 
at a single round, it can steadily create consistent errors 
in the training process.
This scenario motivates our setup of considering robust mean estimation 
of multiple rounds in a holistic manner. In particular, our goal 
is to minimize the {\em total error} incurred in all rounds. 
As we will see later, it is indeed possible to design an efficient estimator 
such that the total errors only grow logarithmically with the number of rounds. 
This then opens up the hope of limiting the influence of Byzantine failures on online systems in the long run.

The above discussion illustrates two key principles 
for dealing with untrustworthy data. On the one hand, 
we can use \emph{outlier detection}, to flag datapoints that might be erroneous. 
On the other hand, for sources that have been around for a while, 
we can additionally develop \emph{trust} in a source 
based on the accuracy of previous predictions. 
In this paper, we will see how the interplay of these ideas 
can be used to maintain accurate estimates during ongoing data collection.

\subsection{Our Results}
We study the problem of high-dimensional online robust mean estimation 
in the setting of Definition~\ref{def:orm}. 
Our main results consist of 
(i) a computationally efficient robust online algorithm 
that achieves nearly optimal error rate (up to a factor of $\log T$, where $T$ is the number of rounds), 
and (ii) an inefficient robust online estimator that achieves 
the information-theoretically optimal error (within a constant factor). 

{Our} first main result is a statistically and computationally efficient robust online algorithm 
that works generically for families of distributions commonly studied 
in the robust statistics literature (see \Cref{thm:filter_main} for a more general statement).

\begin{theorem}[Efficient Online Robust Mean Estimation] \label{thm:main1}
Let $\eps \leq \eps_0$ for a sufficiently small universal constant $\eps_0>0$.
Suppose $X$ is an $M$-dimensional distribution 
with unknown mean vector $\mu^{\ast} \in \R^M$.
There exists a computationally efficient algorithm 
which robustly estimates the mean of $X$ 
under $\eps$-corruption in the $T$-round online setting 
with failure probability $1/10$, 
sample complexity $n = \poly(M, 1/\eps)$, 
and achieves the following error guarantees:
\begin{itemize}[leftmargin=*]
    \item If $X$ has unknown identity-bounded covariance (i.e., $\Sigma_X \preceq I$), 
    then the algorithm achieves error $O(\sqrt{\eps} \log T)$.
    \item If $X$ is subgaussian with identity covariance, 
    then the algorithm achieves error $O(\eps \sqrt{ \log(1/\eps)} \log T )$.
\end{itemize}
\end{theorem}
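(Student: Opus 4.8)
The plan is to reduce \Cref{thm:main1} to the general stability-based guarantee of \Cref{thm:filter_main} and to build an online analogue of the iterative filter. First I would invoke the standard facts that, with $n = \poly(M,1/\eps)$ samples, the clean set $G$ is with high probability $(\eps, O(\sqrt{\eps}))$-stable with respect to $\mu^{\ast}$ when $\Sigma_X \preceq I$, and $(\eps, O(\eps\sqrt{\log(1/\eps)}))$-stable when $X$ is subgaussian with identity covariance, and moreover that this holds simultaneously for the restriction of $G$ to every block of revealed coordinates. It therefore suffices to exhibit an efficient online algorithm that, given an $\eps$-corrupted version of an $(\eps,\delta)$-stable clean set, attains $\ell_2$-error $O(\delta\log T)$. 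The algorithm maintains a weight vector $w\in[0,1]^n$, initialized to all ones. In round $t$, once the $t$-th batch is revealed, it runs a filtering subroutine on the coordinates seen so far: repeatedly form the $w$-weighted empirical covariance $\Sigma$ of the vectors $x^{(i)}_{1:t}$ (the concatenation of the first $t$ batches of $x^{(i)}$), and while $\|\Sigma-I\|_{\mathrm{op}}>C\delta^2/\eps$, downweight each point in proportion to its squared projection onto the top eigenvector of $\Sigma$; once the covariance is within $C\delta^2/\eps$ of identity, output $\mu_t$ equal to the $w$-weighted mean of the batch-$t$ coordinates. The key design point is that the weights persist across rounds, so filtering done now also cleans the sample for all future rounds.

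Next I would establish the \emph{budget invariant}: over the entire run, the total weight removed from points of $G$ is at most $\eps|G|$, and at every filter step the bad weight removed is at least the good weight removed. This is the usual filter argument lifted to the online setting: whenever $\|\Sigma-I\|_{\mathrm{op}}$ exceeds $C\delta^2/\eps$ along a direction $v$, $(\eps,\delta)$-stability of $G$ restricted to the revealed coordinates caps the second moment of the good points along $v$ at $1+\delta^2/\eps$, so the excess variance -- and hence a constant fraction of the total downweighting score along $v$ -- sits on the corrupted points. Since there are at most $\eps n$ corrupted points, the good weight removed never exceeds $\eps|G|$. Applying this at $t=T$, where the revealed coordinates are all of $\R^M$, the final weight vector $w^{(T)}$ yields a weighted set whose covariance is within $C\delta^2/\eps$ of $I$ and which is $\eps$-close to $G$; the offline stability analysis then gives $\|\hat\mu^{(T)}-\mu^{\ast}\|_2=O(\delta)$, where $\hat\mu^{(T)}\in\R^M$ is the final weighted mean over all $M$ coordinates.

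The heart of the argument is the error accounting across rounds. Writing $\hat\mu^{(t)}$ for the weighted mean right after round $t$'s filtering, I would split $\mu_t-\mu^{\ast}_t=(\hat\mu^{(t)}-\hat\mu^{(T)})_{|\,\text{batch }t}+(\hat\mu^{(T)}-\mu^{\ast})_{|\,\text{batch }t}$. Summing the squared norms of the second term over $t$ gives exactly $\|\hat\mu^{(T)}-\mu^{\ast}\|_2^2=O(\delta^2)$. The first term measures the drift of an already-committed estimate caused by filtering performed in later rounds; because weights only decrease and the covariance of revealed coordinates was kept close to $I$ at each round, the batch-$t$ mean moves by at most roughly $\sqrt{q_t}$ between round $t$ and round $T$, where $q_t$ is the fraction of weight removed after round $t$ -- but $q_t$ need not be small for every $t$, and a term-by-term bound only gives $O(\delta\sqrt T)$. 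To get $O(\delta\log T)$ I would run an amortized analysis: introduce a potential (the operator norm of the covariance of revealed coordinates together with the surviving total weight), charge the mean-shift of each filter step against its decrease, and organize the rounds dyadically, so that for each scale $j\le\log T$ the drift inflicted by batches in one dyadic block on earlier batches telescopes and contributes $O(\delta)$; summing over the $O(\log T)$ scales by the triangle inequality yields $O(\delta\log T)$. A union bound over the $T$ rounds (failure probability $1/(10T)$ in the stability and concentration estimates, which is what forces $n=\poly(M,1/\eps)$) upgrades everything to hold simultaneously.

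The main obstacle is precisely this cumulative-drift bound. The subtle case is that a filter step triggered by a late batch can move the mean along a direction carrying large mass on an early batch, and one must show -- through the choice of potential and the dyadic bookkeeping -- that such contributions cannot conspire to exceed $O(\delta\log T)$ in total, rather than the $O(\delta\sqrt T)$ a naive estimate suggests. Everything else (the stability reductions, the per-step filter guarantee, and the end-of-run covariance bound) is a fairly direct adaptation of standard offline robust mean estimation machinery.
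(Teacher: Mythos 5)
Your proposal is correct and follows essentially the same route as the paper: the same persistent-weight online filter on the concatenated revealed coordinates with threshold $\Theta(\delta^2/\eps)$, the same ``more bad mass than good'' budget invariant, the same decomposition of the error into the final offline $O(\delta)$ term plus the drift of already-committed blocks, and the same key fact that this drift is controlled by $\delta\sqrt{\|\Delta w\|_1/\eps}$ (the paper's Lemma~\ref{lemma:bounded-second-difference}, proved via a mixture decomposition of the weight vectors). Your dyadic amortization is just the unrolled form of the paper's recursive midpoint induction in Lemma~\ref{lem:T-difference}, so the two arguments coincide up to bookkeeping.
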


Note that except for the $\log(T)$ factors above, 
the error bounds in Theorem \ref{thm:main1} are optimal even for offline algorithms, 
i.e., algorithms allowed to observe the entire sample set $\mathcal{X}$ before 
having to make any predictions. 
\new{Moreover, even though it is not explicitly 
specified in our theorem statements, we note that the 
sample complexity of our algorithm is near-optimal, 
matching that of the best known offline algorithm.}

It is natural to ask whether this extra factor of $\log T$ is information-theoretically 
necessary for online robust mean estimation. In our second main contribution, 
we show that the $\log T$ factor can be removed
for certain families of {\em product} distributions 
(albeit using an inefficient algorithm). See Theorem~\ref{thm:optimal-gaussian}, 
Corollaries~\ref{cor:bounded-k},~\ref{cor:subgaussian} for details, 
and \Cref{thm:f-tail} for a more general statement.

\begin{theorem}[Optimal Error for Product Distributions] \label{thm:products-intro}
Let $\eps \leq \eps_0$ for a sufficiently small universal constant $\eps_0>0$.
Fix two positive integers $M, T$ such that $T$ divides $M$.
Suppose $X$ is an $M$-dimensional product distribution.
Then there exists an (inefficient) algorithm which robustly estimates the mean of $X$ 
under $\eps$-corruption in the $T$-round online setting 
with failure probability $1/10$, sample complexity
$ n =  2^{M} \cdot \poly( M, 1/\eps )$,
and achieves the following error guarantees:
\begin{itemize}[leftmargin=*]
\item If  $X$ is a Gaussian with identity covariance, 
     then the algorithm achieves error $O( \eps  )$.
    \item If  $X$ has bounded $k$-th moments for $k\geq4$, 
            then the algorithm achieves error $O(\eps^{1-1/k} )$.
    \item If $X$ is subgaussian with identity covariance,  
           then the algorithm achieves error $O(\eps\sqrt{ \log(1/\eps)})$.
\end{itemize}
\end{theorem}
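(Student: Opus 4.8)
The plan is to prove Theorem~\ref{thm:products-intro} by exploiting the product structure of $X$ to reduce the $T$-round online problem to a collection of essentially independent low-dimensional estimation problems, and then showing that a batch of coordinates revealed at round $t$ can be estimated with error $O(\delta_t)$ for an appropriate ``per-batch stability parameter'' $\delta_t$, in such a way that $\sqrt{\sum_t \delta_t^2}$ telescopes to the claimed dimension-free bound. The key structural observation is that when $X = X_1 \otimes \cdots \otimes X_M$ is a product distribution, the adversary's corruption budget of $\eps n$ samples is a \emph{global} resource: across all $T$ rounds the corrupted samples are the same $\eps n$ indices. So although at any single round the adversary could in principle hide a full $\eps$-fraction of bad points, it cannot do so \emph{persistently} without eventually being detected, because on the clean coordinates its behavior must be consistent with the product distribution.

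First I would set up an exponential-sized ``cover'' or ``tournament''-style estimator (this is where the $2^M$ sample complexity enters): for each round $t$, and each candidate value of the partial mean, we use the samples-so-far to test consistency against the product distribution, reusing the Yatracos/Scheff\'e-type or Tukey-median machinery that underlies the offline information-theoretically optimal bounds quoted earlier (these are the sources of the $O(\eps)$ Gaussian, $O(\eps^{1-1/k})$ bounded-moment, and $O(\eps\sqrt{\log(1/\eps)})$ subgaussian rates). The crucial point is that for a \emph{product} distribution, the optimal robust estimate of the mean of batch $t$ given all data through round $t$ is almost as good as the offline estimate: the coordinates in later batches are independent of batch $t$, so withholding them costs nothing information-theoretically. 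This is in sharp contrast to the efficient algorithm of Theorem~\ref{thm:main1}, where the $\log T$ loss comes from not being able to ``undo'' filtering decisions; here, because the algorithm is allowed to be inefficient, it can maintain the full Bayesian/combinatorial posterior over which samples are corrupted and over the true mean, and simply output the best round-$t$ marginal.

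Then I would carry out the error accounting. The natural decomposition writes the adversary's total budget as $\eps = \sum_t \eps_t$ where $\eps_t$ is the ``effective'' fraction of corruptions the adversary commits to influencing batch $t$ (formalized via the amount by which the empirical distribution on coordinates $t$ deviates from what the retained sample set predicts). Each batch then suffers error governed by its own $\eps_t$: e.g.\ $O(\eps_t)$ in the Gaussian case, and by $(\eps,\delta)$-stability-type arguments $O(\sqrt{\eps_t})$ or $O(\eps_t^{1-1/k})$ in the others. The total error is $\sqrt{\sum_t (\text{err}_t)^2}$; in the Gaussian case this is $\sqrt{\sum_t O(\eps_t^2)} \le O(\sum_t \eps_t) = O(\eps)$ by concavity/subadditivity of the square root of sums, giving the dimension-free bound. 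For the bounded-moment and subgaussian cases one uses the analogous inequality $\sqrt{\sum_t \eps_t^{2(1-1/k)}} = O(\eps^{1-1/k})$, which holds because $x \mapsto x^{1-1/k}$ is concave for $k \ge 1$ (here we crucially use $k \ge 4 \ge 2$ so that $2(1-1/k) \ge 1$ and the sum-of-powers inequality goes the right way), and similarly $\sqrt{\sum_t \eps_t^2 \log^2(1/\eps_t)}$ is maximized, up to constants, at a single batch.

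The main obstacle, and the step I would spend the most care on, is making the ``budget splitting'' $\eps = \sum_t \eps_t$ rigorous in the \emph{online} setting: the adversary chooses its corruptions adaptively and need not commit in advance to how it allocates its budget across batches, so $\eps_t$ cannot simply be defined as a fixed allocation. The right formulation is likely to define $\eps_t$ after the fact as (roughly) the total-variation-type distance between the true batch-$t$ distribution and the closest batch-$t$ distribution consistent with the inlier set the algorithm commits to retaining, and then argue that because the inlier set is a single global set of $(1-O(\eps))n$ samples drawn from the product distribution, the sum of these discrepancies over all rounds is controlled by the single global corruption fraction $\eps$. Establishing this ``conservation of corruption across rounds'' cleanly --- so that it composes with the per-round information-theoretic error bounds --- is the technical heart of the argument; the rest is the (by now standard, if intricate) offline robust-estimation analysis applied batch by batch.
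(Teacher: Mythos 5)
There is a genuine gap, and it sits exactly where you placed your bets: the ``conservation of corruption across rounds'' step. Your accounting posits per-round effective corruptions $\eps_t$ with a \emph{linear} budget $\sum_t \eps_t \lesssim \eps$ and per-round error $O(\eps_t)$, where $\eps_t$ is a TV-type discrepancy between the true batch-$t$ distribution and the batch-$t$ behavior of a retained inlier set. No such linear conservation law holds in the online setting. The paper itself gives the counterexample (in Section~\ref{sec:filter-online}): the adversary can place its $\eps n$ points so that in \emph{every} round the revealed coordinates of the full dataset are statistically indistinguishable from i.i.d.\ draws of a Gaussian whose mean is shifted by $c\eps$; then any online-committed retained set cannot exclude these points when the round's estimate must be produced, each round's marginal discrepancy is $\Theta(\eps)$, and your $\eps_t$'s sum to $\Theta(T\eps)$, not $O(\eps)$. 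Relatedly, your claim that ``the optimal robust estimate of batch $t$ given data through round $t$ is almost as good as the offline estimate, so withholding later batches costs nothing'' is beside the point: each round's estimate is offline-optimal only up to $\Theta(\eps)$ error, and $T$ rounds of per-round-optimal estimation gives $\eps\sqrt{T}$. The whole difficulty is to make the rounds' errors \emph{correlated} so that the adversary cannot extract $\Theta(\eps)$ error repeatedly, and maintaining a posterior or a single retained set does not by itself accomplish this.

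What the paper actually proves is a \emph{quadratic} accounting, $\sum_t \mathrm{err}_t^2 = O(\eps^2)$ (resp.\ $O(\eps\gamma)$ in the $\gamma$-bounded case), and the mechanism is constructive rather than a budget identity: samples are partitioned into groups according to their entire revealed history, the round-$t$ output is the weighted median of capped group means, and a convexity-based potential $\Phi(t)=\frac1n\sum_i g_\gamma(\eps_i^{(t)})|S_i^{(t)}|$ over the groups' adversarial densities is shown to increase by $\Omega(\mathrm{err}_t^2/\gamma)$ whenever the median is off by $\mathrm{err}_t$ (Lemmas~\ref{lem:eps-deviation} and~\ref{lem:potential-increment-var}), while being globally bounded by $O(\min(\eps,\eps^2/\gamma))$ (Claim~\ref{lem:modified-potential-bound}). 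In words: biasing a round forces the corrupted samples to split unevenly across the next round's groups, so repeated biasing concentrates them in low-weight groups that the median ignores. Your proposal contains no substitute for this mechanism, and without it the reduction ``product structure $\Rightarrow$ independent per-batch problems with a shared budget'' does not go through. A second, smaller omission: for the bounded-moment and subgaussian rates the paper does not run offline machinery per batch but reduces to \emph{$\gamma$-bounded binary} products via CDF indicators $Y(q)_t=\mathbbm 1\{X_t\le q\}$ and the identity $\E[X]=\int_0^\infty\Pr(X>q)\,dq-\int_0^\infty\Pr(X<-q)\,dq$; the refined error $O(\min(\eps,\sqrt{\gamma\eps}))$ with $\gamma=F(q)$ is what makes $\int_0^\infty\min(\eps,\sqrt{\eps F(q)})\,dq$ converge and yields the $T$-independent rates $O(\eps^{1-1/k})$ and $O(\eps\sqrt{\log(1/\eps)})$. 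Your concavity computation on exponents presupposes exactly the per-round error bounds that this reduction (together with the potential argument) is needed to establish.
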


Finally, we obtain a generalization of Theorem~\ref{thm:products-intro} 
that allows the independence of coordinates assumption to be slightly relaxed. 
In particular, we assume there is an unknown distribution $X_t$ chosen for the $t$-th round 
and the overall distribution $X$ is exactly the product of these $T$ unknown distributions.
For this relaxed setting, we establish the following 
(see Theorem~\ref{thm:block-extension} for a more general statement).

\begin{theorem}[Optimal Error for Round-wise Independent Distributions] \label{thm:block-intro}
Let $\eps \leq \eps_0$ for a sufficiently small universal constant $\eps_0>0$.
Let $X_1, \cdots, X_T$ be $T$ unknown $d$-dimensional distributions.
Suppose $X$ is the product distribution of $X_1, \cdots, X_T$.
Then there exists an (inefficient) algorithm which robustly estimates the mean of $X$ 
under $\eps$-corruption in the $T$-round online setting 
with failure probability $1/10$, sample complexity
$n = 2^{O(Td^2)} \cdot \poly( 1/\eps )$, 
and achieves the following error guarantees:
\begin{itemize}[leftmargin=*]
    \item If  each $X_i$ has bounded $k$-th moments for $k\geq4$, 
     then the algorithm achieves error $O( \eps^{1-1/k} )$.
    \item If each $X_i$ is subgaussian with identity covariance, 
     then the algorithm achieves error $O( \eps \sqrt{ \log(1/\eps) } )$.
\end{itemize}
\end{theorem}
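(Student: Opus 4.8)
The plan is to reduce Theorem~\ref{thm:block-intro} to Theorem~\ref{thm:products-intro} (the product-distribution case) by a \emph{gridding} argument: since each round-distribution $X_t$ lives on $\R^d$ rather than $\R^1$, we discretize $\R^d$ into a fine mesh so that each $X_t$ is approximately supported on $\poly$-many points, which brings the "effective" per-round alphabet down to a finite size. Concretely, first I would argue that we may assume each $X_t$ is (after a negligible truncation and rounding to a grid of width $\delta$) supported on a set of size $2^{O(d^2)}$ — this is where the $2^{O(Td^2)}$ sample complexity comes from, since the full distribution $X$ then has an effective support of size $2^{O(Td^2)}$ and we need enough samples for empirical estimates over this support to concentrate. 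I would make sure that the rounding error is dominated by the target error $O(\eps^{1-1/k})$ or $O(\eps\sqrt{\log(1/\eps)})$ by taking $\delta$ polynomially small in $\eps$; the bounded-$k$-th-moment (resp.\ subgaussian) condition is inherited by the rounded distribution up to constant factors.

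Next, with each round now effectively over a finite alphabet, I would invoke whatever general machinery underlies Theorem~\ref{thm:products-intro} — presumably \Cref{thm:block-extension} or \Cref{thm:f-tail}, which the excerpt promises is a more general statement. The key structural fact to exploit is that $X = X_1 \otimes \cdots \otimes X_T$ is a product \emph{across rounds}, so the online adversary, who at round $t$ has only seen batches $1,\dots,t$, cannot use future-round information; the inefficient algorithm can maintain a posterior/credibility weight over sensors and, crucially, because the rounds are independent, past good behavior of a sensor is genuine evidence (not adversarially manufactured correlation) of future reliability. The estimator at each round is essentially a robust (Tukey-median-type or minimum-distance) estimate restricted to the surviving high-weight sensors, and the independence lets us bound the \emph{total} squared error by a geometric-type sum rather than a $\log T$-growing sum, exactly as in the product case.

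The main obstacle — and the step I would spend the most care on — is verifying that the per-round $d$-dimensional robust estimation enjoys the same error rate as the scalar case under only a bounded-moment or subgaussian assumption, and that the "no amplification over rounds" phenomenon survives the move from $d=1$ to general $d$. In the scalar product case one can often get away with coordinatewise one-dimensional robust estimation and Pythagorean aggregation; here each $X_t$ is an arbitrary $d$-dimensional distribution with only bounded moments, so within a round we genuinely need a multivariate robust mean estimator achieving $O(\eps^{1-1/k})$ (resp.\ $O(\eps\sqrt{\log(1/\eps)})$) in $\ell_2$, and we need its guarantee to hold \emph{simultaneously} across all $T$ rounds against an adaptive adversary. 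I expect this to go through by a union bound over the $2^{O(Td^2)}$-size effective support combined with the stability-based deterministic regularity conditions (the $(\eps,\delta)$-stability notion referenced in the abstract), so that once the clean samples are "good" the whole $T$-round interaction is deterministically controlled; quantifying the failure probability of the regularity event is what forces the stated sample complexity. A secondary subtlety is that the grid must be chosen \emph{before} seeing the data and must work uniformly over all admissible $X_t$, so I would fix it as a function of $\eps, d, k$ only and push the "$X_t$ has small effective support" claim through a standard tail/covering estimate.
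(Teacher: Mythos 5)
There is a genuine gap: your central reduction---gridding each $X_t$ to a finite support so as to ``bring the block case down to the product case''---does not work, because discretization does nothing to remove the correlations \emph{within} a round. \Cref{thm:f-tail} (and \Cref{lem:binary-product-estimation} underneath it) require all coordinates of $X$ to be mutually independent; after rounding, $X_t$ is still an arbitrary correlated $d$-dimensional distribution, just supported on a grid, so the product-distribution machinery simply does not apply. Indeed the paper explicitly notes that running the coordinate-by-coordinate reduction in this setting leads to estimating means of \emph{correlated} binary distributions, for which dimension-independent error is information-theoretically impossible even offline. Invoking \Cref{thm:block-extension} instead, as you suggest, is circular, since that is precisely the statement being proved. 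Your fallback within a round---a Tukey-median-type estimate over ``high-credibility'' sensors---is the filter-style idea, which the paper shows only yields $O(\delta\log T)$; nothing in your sketch explains the mechanism that prevents a fresh $\Theta(\eps^{1-1/k})$-size error in every round from accumulating to a $\sqrt{T}$ factor, which is the whole difficulty.

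What the paper actually does is quite different and its key steps are absent from your proposal. It estimates the projections $v^{T}\mu_t^{*}$ for a random net $\mathcal V$ of $2^{O(d)}\log(T/\tau)$ unit directions and recovers $\mu_t$ by a min-max program (\Cref{clm:lp-error}); for each direction and threshold it reduces to binary indicators $Y(v,q)_t$, but---crucially---it cannot run the binary estimator separately per direction, since that only controls $\max_{v}\sum_t(\cdot)^2$ while the LP recovery needs the stronger $\sum_t\max_v(\cdot)^2$ bound. To correlate the errors across directions, it runs a single estimator that splits the sample groups on \emph{all} indicators $\{v^{T}x_t^{(i)}>q\}$, $v\in\mathcal V$, simultaneously, and re-does the potential-function argument via an ``intermediate split'' on the worst direction followed by convexity for the remaining splits (\Cref{lem:potential-increase-2}). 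The $2^{O(Td^2)}$ sample complexity then comes from bounding the number of groups by $|\mathcal V|^{T(d+1)}$ via the VC dimension of halfspaces and Sauer's lemma---not, as you posit, from a per-round grid of size $2^{O(d^2)}$. Without the direction-net/LP recovery, the simultaneous group-splitting, and the $\sum_t\max_v$ potential argument, your outline does not reach the claimed $T$-independent error.
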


We remark that the aforementioned assumption on the distribution $X$
is a more general condition than the assumption
that the coordinates of $X$ are mutually independent. 
In particular, this assumption holds as long as the estimation tasks 
{\em for different rounds} are independent of each other.

\subsection{Overview of Techniques} \label{ssec:techniques}

The starting point of our efficient online algorithm is 
the weighted filtering algorithm for the offline robust mean estimation problem. 
The (offline) filtering algorithm works by assigning each sample a non-negative weight (initially set at $1/n$) 
that expresses our confidence that it is uncorrupted. 
Then, assuming that the uncorrupted samples satisfy 
a high probability \emph{stability} assumption 
(see Definition \ref{def: stability}), it applies a polynomial-time 
filtering technique in order to de-weight the worst outliers. 
In particular, assuming stability of the uncorrupted points, 
this filtering algorithm has two important properties. 
First, the total weight removed from uncorrupted points 
is at most the weight removed from corrupted ones 
(thus guaranteeing that at most $O(\eps)$ weight is removed overall). 
Second, after applying the filter, the weighted mean of the samples 
will be close to the true mean.

Our efficient algorithm essentially maintains an online version of this filter. 
This requires some new ideas that we explain in the proceeding discussion.
In the online setting, we maintain a set of weights 
for each sample along with that sample's
currently revealed coordinates. In each round, we add the information 
about the newly revealed coordinates to each sample and re-apply our filter. 
We then return the weighted sample mean as our estimate 
for the mean for the newest block of coordinates. 
In particular, letting $\bm{w}_t$ be the weight vectors 
at the end of the $t^{th}$ round and $\mu_t(\bm{w})$ 
be the average of the first $t$ blocks of data using weights $\bm{w}$, 
our algorithm's estimate for the $t^{th}$ block of coordinates 
is just the $t^{th}$ block of $\mu_t(\bm{w}_t)$.

Now we know from the standard properties of the (offline) filter 
that $\|\mu_t(\bm{w}_t) - \mu^\ast_t\|_2 = O(\delta)$, 
where $\mu^\ast_t$ is the first $t$ blocks of the true mean 
and $\delta$ is the stability parameter. 
Unfortunately, this property alone does not suffice: 
it could be the case that $\mu_t(\bm{w}_t)$ agrees exactly 
with $\mu_t^\ast$ in all but the $t^{th}$ block of coordinates, 
in which they differ by $\delta$. This would leave us with $\delta$ error 
on each block of coordinates, for a total error of $\delta \sqrt{T}$ finally. 
To avoid this possibility, we need a new and more subtle structural
property of the filter algorithm. 
In particular, we show (see Lemma   \ref{lemma:bounded-second-difference}) 
that for any $t'>t$ it holds 
$\sum_{j=1}^t \|\mu_j({\bm w}_{t}) - \mu_j(\bm{w}_{t'})\|_2^2 = O(\delta^2 \|\bm{w}_t-\bm{w}_{t'}\|_1/\eps).$ 
Since the total change in the weight vectors 
throughout the entire run of the algorithm is bounded by $O(\eps)$, 
this lemma allows us to show that once we assign values 
to a new block of coordinates, they cannot be changed too much by future reweightings. 
This property along with a careful recursive argument gives our final error bound of $O(\delta \log(T))$.

We now discuss the ideas behind our optimal error (inefficient) estimator. 
We start with the special case of binary product distributions. 
The high-level framework is the following. 
At the $t$-th round, the algorithm divides the samples into groups 
based on the revealed coordinates of the sample 
in the previous $t-1$ rounds (thus producing a total of $2^{t-1}$ groups). 
Equivalently, the samples within a group in some round 
will be divided into two child groups for the next round, 
based on the newly revealed coordinates. 
Given these groups, we then compute the mean 
of the $t^{th}$ coordinates of the samples in each group, 
and use as our final estimate the weighted median 
of these group means (weighted by group size). 
The robustness of this algorithm mainly follows from two observations. 
First, in each round, if the final estimation is $\eta$-far from the true mean, 
it must be the case that at least half of the group estimations 
(weighted by group size) are at least $\eta$-far from the true mean. 
Second, if the mean of a group is far from the true mean, 
then the adversarial samples must be divided unevenly 
among the two child groups in the next round. 
Consequently, as the algorithm accumulates more errors, 
the adversarial samples will become increasingly concentrated 
among a small fraction of groups. Since the final estimation 
is the median of all group estimations, 
it will become harder and harder to get the adversarial examples 
to influence the final mean. To formalize this intuition, 
we define a potential function, 
which is roughly the sum of the squares of the ``adversarial densities'' 
in each group weighted by its relative size 
(see Equation~\eqref{eq:potential-def} for further details). 
In particular, we show (see \Cref{lem:potential-increment-var}) 
that if the algorithm produces an error of $\eta$ in the $t^{th}$ round, 
then the potential function must increase by $\Omega(\eta^2)$ 
between rounds $t$ and $t+1$. 
Combined with the fact that the potential can never exceed $O(\eps^2)$, 
this implies that this algorithm produces $\ell_2$-error of at most $O(\eps)$. 
A slight refinement of this argument shows that if each coordinate 
is known to have mean at most $\gamma$, then 
we can obtain error $O(\min(\eps,\sqrt{\gamma\eps}))$.

To obtain an online robust mean estimator for other families of product distributions, 
we use a reduction to the case of binary product distributions. 
In particular, if we define the indicator variables $Y(q)_t := \mathbbm 1\{ X_t < q \}$, 
we note that for any $q$ that $Y(q)$ is a binary product distribution 
with mean $\E[Y(q)]_t = \pr(X_t < q)$. Applying our binary product estimator to $Y(q)$, 
we can obtain relatively good estimates for the cumulative density functions 
of $X_t$ for all $t$. Using this, 
along with the formula \mbox{$\E[X] = \int_0^\infty \pr(X<t)dt - \int_0^\infty \pr(X<-t)dt$}, 
gives a suitable estimation of the mean of $X$ in an online fashion. 
For the details of this argument, see Section~\ref{sec:gen-product}.

Finally, we discuss how our results can be further generalized 
to the case when the coordinates between rounds are independent 
--- but the coordinates within a round are allowed to have arbitrary correlations.
Once again, we would like to reduce to estimating the mean 
of binary product distributions by trying to estimate tail bounds. 
To see how this might work, we note that in the offline setting 
we can approximate the mean of $Z$ to error $O(\delta)$ 
if we can approximate the mean of $v\cdot Z$ to error $O(\delta)$ 
for every unit vector $v$ (or even for all $v$ in some finite cover of the sphere). 
This suggests the following idea. 
Denoting by $X_t$ the set of coordinates in the $t^{th}$ block, 
if we can estimate the mean of $v\cdot X_t$ for each unit vector $v$ and each $t$, 
this should provide the desired estimates for our mean. 
This idea seems promising as $[v\cdot X_1, v\cdot X_2,\ldots]$ 
is a product distribution. Unfortunately, a naive implementation of this 
will not work, as it might produce error on the order of 
$\sum_t \sup_v \mathrm{EstimationError}(v\cdot X_t)^2$ --- 
while our learner merely guarantees a bound on 
$\sup_v \sum_t \mathrm{EstimationError}(v\cdot X_t)^2$. 
If different $v$'s produce different errors in different rounds, 
this could be much larger than we require. 
To fix this issue, we need a way of combining all of these estimators 
in order to correlate their errors.

To achieve this, we need to modify our binary product estimator. 
To estimate the means of $v\cdot X_t$ for a single $v$, 
we would break our samples into groups,
based on whether or not $v\cdot x_i < s$ for each value of $i$, 
and then compute a mean in each group. 
For the new estimator, we instead break into groups 
based upon whether $v\cdot x_i < s$ for each $i$ \emph{and} each $v$. 
This divides our sample set into many more groups in
each round than the old algorithm did. However, 
if we are interested in estimating the mean of $v\cdot X_t$ 
for some particular vector $v$, we can think of this 
as first splitting into groups based on $v\cdot x_i < s$, 
and then breaking into smaller groups based on the other conditions. 
The thing to note here is that if our estimate of $\E[v\cdot X_t]$ had large error, 
then the first part of the subdivision would lead to 
a correspondingly large increase in our potential function, 
and then the further subdivisions based on other $v$'s 
would make it no smaller (despite not being independent anymore). 
This allows us to bound the errors in the stronger error model that we require. 
The details of this argument can be found in Appendix~\ref{sec:block}.

\subsection{Prior and Related Work}
Here we record related literature that was not discussed
earlier in the introduction. 

\vspace{-0.2cm}

\paragraph{(Offline) Algorithmic Robust Statistics}
The goal of high-dimensional robust statistics is to efficiently obtain 
dimension-independent error guarantees for various statistical tasks
in the presence of a constant fraction of adversarial outliers.
Since the pioneering early work from the statistics community~\cite{Ans60,Tukey60,Huber64,Tukey75}, 
there has been extensive work on designing robust estimators, 
see, e.g.,~\cite{HampelEtalBook86,Huber09} for early textbooks.
Alas, the estimators proposed in the statistics community 
are computationally intractable to compute in high dimensions. 
The first algorithmic progress on high-dimensional robust statistics 
came in two independent works from the 
theoretical computer science community~\cite{DKKLMS16,LaiRV16}.
Since the dissemination of these works, 
which mainly focused on high-dimensional 
robust mean and covariance estimation, 
the body of work in the field has grown rapidly.
Prior work has obtained efficient algorithms with 
dimension-independent guarantees for various robust 
problems, including linear regression~\cite{KlivansKM18,DKS19, BakshiP21}, 
stochastic optimization~\cite{PSBR18,DKK+19-sever}, 
and learning various mixture models~\cite{DKS18-list, KSS18-sos, HL18-sos, BakshiDHKKK20, BK20, DHKK20, LM20-gmm, BD+20-gmm, DKKLT21}. 
For a more detailed account, see the survey \cite{DK19-survey} and the recent 
book~\cite{DK23-book}.
We emphasize that all these prior algorithms work
in the offline setting, where the entire dataset is given in the input and the goal
is to output a single estimate.

\medskip

There are several natural ways to define ``robust online distribution learning'',
based on the underlying scenario to be modeled. Below we summarize
prior work that falls into this general domain along with a comparison 
to our model.

\vspace{-0.2cm}

\paragraph{Distributed Univariate ``Online Robust Mean Estimation''} 
The recent work~\cite{yao2022robust} studies 
the problem of robustly estimating the mean of a single {\em univariate}
distribution when the data is distributed among $n$ clients and arrive in real time. 
At each time step, each agent receives either an \iid~sample from the distribution 
or a corrupted sample with some probability $\eta$.
\cite{yao2022robust} gives a distributed algorithm 
such that the agents' estimations reach consensus 
and converge to the true mean asymptotically.
This contribution is largely orthogonal to our work.
In particular,  we point out two major differences with our setting.
First, in our setting, the samples received at different time steps need not to be independent and identically distributed. In some sense, the setting of \cite{yao2022robust} 
is a special case of our setup 
where the unknown distribution is the product of $T$ identical distributions.
Second, our corruption model is significantly stronger. 
In the setup of \cite{yao2022robust}, each client has a fraction 
of adversarially corrupted samples while in our case 
there are a fraction of adversarial clients 
having only adversarially corrupted samples.
We remark that our setup is closer to the Byzantine error model, 
typically assumed in the context of federated learning.

\vspace{-0.2cm}

\paragraph{Robust Distributed Learning}
A large number of works study distributed SGD in the presence of Byzantine Failures, 
see, e.g.,~\cite{blanchard2017machine}, \cite{su2019securing}, \cite{chen2017distributed}. 
In that setting, a central server collects  stochastic gradients from some worker devices. 
The gradients from most workers are assumed to be computed from \iid~samples 
and a small fraction of Byzantine devices may try to send arbitrary gradient 
updates to corrupt the training process. Typical approaches usually involve 
applying robust estimation techniques to aggregate the gradients received in each iteration. 
A closely related setting is that of robust federated learning; 
see, e.g.,~\cite{pillutla2019robust}, \cite{li2019rsa}, \cite{xie2021crfl}. 
Instead of aggregating the gradient, the central server now tries to directly 
aggregate the model parameters sent from the client device. 
Similarly, a small fraction of Byzantine devices may send arbitrary parameters 
to corrupt the central model parameters. The techniques applied in both settings 
are mostly iteration-independent, which means the accumulated estimation error 
always scales with the number of iterations. This is acceptable in these works, 
as the final goal is just to ensure that the \emph{final model output} in the last round converges. 
We remark that this is different from our setting where the outputs in \emph{all rounds} matter.  

\vspace{-0.2cm}

\paragraph{Robust Online Learning and Bandits}
The works~\cite{lykouris2018stochastic, gupta2019better, bogunovic2021stochastic} 
study robust (linear) stochastic bandits, 
where the data is generated either from some i.i.d.\ distributions 
or adversarially corrupted data. In contrast to the typical contamination 
model assumed in robust statistics, the adversary can corrupt 
the reward of any action at any round, 
and the only restriction is that the difference between the actual reward 
and the corrupted reward needs to be bounded.  

Another type of corruption model, investigated 
in~\cite{altschuler2019best, mukherjee2021mean, kapoor2019corruption, chen2022online}, 
is the contaminated bandit model. Under this model, the rewards 
in most time steps are assumed to follow the underlying reward distributions 
and only a random small fraction of them may be replaced by arbitrary (unbounded) 
corrupted reward prepared by the adversary. This is closer to the corruption model 
considered in the robust statistics literature. 
We remark that our contamination model is still noticeably different. 
In particular, we observe many samples in each round 
and a constant fraction of the samples in each round are corrupted. 
Moreover, the distribution from which the inliers are generated 
can be different from round to round.

\subsection{Discussion and Open Problems}  \label{sec:conc}

This work introduces a natural model of online robust mean estimation
capturing situations where a series of mean estimation tasks need to be completed sequentially, 
using data collected from the same set of sensors of which an $\eps$-fraction are malicious.
We develop two types of algorithms for online robust estimation in this model: 
(i) an efficient algorithm that works for general distributions 
under the stability condition and achieves error which is optimal, up to
a $\log T$ factor, where $T$ is the number of rounds; and 
(ii) an inefficient algorithm that works for more structured distributions (namely product distributions)
and achieves the optimal error --- with no dependence on $T$ whatsoever. 

Our work raises a number of open questions, both technical and conceptual.
First, one may wonder whether there is an algorithm 
that achieves the best of both worlds. Namely, it is statistically and computationally efficient
and achieves error independent of $T$. 
This question is left open, even for identity covariance Gaussians.
In fact, it is not even clear whether there exists an algorithm 
with polynomial {\em sample complexity} and error independent of $T$.

\begin{question}
Are there statistically and/or computationally efficient algorithms 
for online robust mean estimation of identity covariance Gaussians,
within error $\tilde{O}(\eps)$?
\end{question}

Our inefficient algorithms achieving optimal error 
leverage the assumption that the estimation tasks between rounds are independent. 
An interesting direction is to understand the role of ``independence'' 
in online robust mean estimation. 
Concretely, for general Gaussian distributions, it is unclear whether 
the optimal error achievable in the online setting 
is still the same as the offline problem. 

\begin{question}
What is the optimal error of online robust mean estimation of 
an unknown Gaussian distribution $N( \mu^{\ast}, \Sigma^{\ast})$ 
(when the sample size goes to infinity)?
\end{question}

{More generally, it would be interesting to go beyond mean estimation 
and explore the learnability of more general statistical tasks, 
including covariance estimation and linear regression,  
in our robust online learning model. 
The complexity of these tasks has by now been essentially
characterized in the offline model. Understanding the possibilities and limitations 
in our robust online learning setting --- both information-theoretic and computational ---
is a broad challenge for future work.

\section{Preliminaries}
\paragraph{Basic Notation}
We use $\mathbbm Z^+$ to denote the set of positive integers
and $\R^+$ to denote the set of positive reals. 
For $n \in \mathbbm Z^+$, we denote by $[n]$ the set of integers $\{1, \cdots, n\}$.
For $d \in \mathbbm Z^+$, we use $\R^d$ to denote the set of $d$-dimensional real vectors.
For $v \in \R^d$, we write $\snorm{2}{v}$ to denote the $\ell_2$ norm of the vector $v$, i.e., 
$\snorm{2}{v} = \sqrt{ \sum_{i=1}^d v_i^2 } $.
If $M$ is a symmetric matrix, we write $\snorm{2}{M}$ to denote the largest eigenvalue (in absolute value) of $M$.
The asymptotic notation $\tilde{O}$ (resp. $\tilde{\Omega}$) suppresses logarithmic factors in its argument,
i.e., $\tilde{O}(f(n)) = O(f(n)\log^c f(n))$ and
$\tilde{\Omega}(f(n)) = \Omega(f(n)/\log^c f(n))$, where $c>0$ is a universal constant.
Given $x_1, \cdots, x_k \in \R^+$, 
we write $\poly(x_1, \cdots, x_k)$ to denote a sufficiently large constant 
degree polynomial in $\Pi_{i=1}^k x_i$.
For a univariate random variable $X$ and $q \in \R$,
we use $\E[X]$ for its expectation and $\mathbbm 1\{ X > q\}$ 
for the indicator of the event $X > q$. 
Given a set of samples $ \{ x^{(1)}, \cdots, x^{(n)} \} \in \R^{M} $, 
we often write $x^{(1:n)}$ to represent the set.
Let $x \in \R^{ T \cdot d }$ be the concatenation of $T$ sub-vectors $x_1, \cdots x_T \in \R^d$.
We will write $ \bar x_t $ to represent the partition vector that is the concatenation of 
the vectors $x_1, \cdots, x_t$.
Whenever we write $\bar x_t$, the partition of $x$ into the sub-vectors $x_1, \cdots, x_T$ should be clear from the context.

\vspace{-0.2cm}

\paragraph{Stability Condition}
Our efficient algorithm works for any sample set 
satisfying the well-studied \emph{stability} property {(see \cite{DK19-survey})}.

\begin{definition}[($\epsilon, \delta$)-stability] \label{def: stability}
For $\epsilon \in (0, 1/2)$ and $\delta \geq \epsilon$, 
a finite set $S \subset \R^d$
is $(\epsilon, \delta)$-stable with respect to a vector $\mu \in \R^d$ 
if for every unit vector $v \in \mathbb{R}^d$ 
and every subset $S' \subseteq S$, where $|S'| \geq (1-\epsilon)|S|$, 
the following conditions are satisfied:
\begin{enumerate}
    \item $\left| \frac{1}{|S'|}{\sum_{x \in S'}  v^T \cdot \lp(x-\mu \rp) } \right|\leq \delta.$
    \item $\left| \frac{1}{|S'|}  \sum_{x \in S'} \big(v^T \cdot (x-\mu)\big)^2-1 \right| \leq \delta^2/\epsilon.$
\end{enumerate}
\end{definition}

A stable set $S$ satisfies that any sufficiently large subsets of $S$ 
can produce accurate enough first and second moment estimations, 
captured by the parameters $\epsilon$ and $\delta$. 

{This stability condition (or variants thereof) has been proven 
critical for robust mean estimation algorithms even in the offline setting. 
In particular, essentially all known efficient algorithms 
for learning the mean of a distribution from $\eps$-corrupted samples 
to error $\delta$ require some condition on the uncorrupted samples 
at least as strong as $(\eps,\delta)$-stability. As such, it will 
be important for us to know that our sample set satisfies this condition. 
This problem has been extensively studied in the literature 
(see, e.g., \cite{DK19-survey}). 
For example, we have the following results:}
\begin{enumerate}
    \item If $S$ is a set of i.i.d.\ samples from a distribution of identity-bounded covariance 
    $\Sigma \preceq \bm{I}$ and $|S| = \tilde{\Omega}(d/\epsilon)$, 
    then with high probability $S$ is $(\epsilon, O(\sqrt{\epsilon}))$-stable.
    \item If $S$ is a set of i.i.d.\ samples from a subgaussian distribution 
    with identity covariance $\Sigma = \bm{I}$ and 
    $|S| = \tilde{\Omega}(d/\epsilon^2)$, then with high probability 
    $S$ is $(\epsilon, O(\epsilon\sqrt{\log (1/\epsilon) }))$-stable.
\end{enumerate}
We here remark a simple property of stability that is particularly useful for the online setup.
Let $\{ x^{(1)}, \cdots, x^{(n)} \} \subset \R^{T \cdot d}$ be a set of samples satisfying the $(\epsilon, \delta)$-stability condition with respect to some vector $\mu \in \R^{T \cdot d}$.
Consider the partition of coordinates into $T$ parts such that 
$x^{(i)}$ is the concatenation of $x^{(i)}_1, \cdots, x^{(i)}_T \in \R^d$ for $i \in [n]$
and $\mu$ is the concatenation of $\mu_1, \cdots, \mu_T \in \R^d$.
Then, for all $t \in [T]$, the set $\{  \bar x^{(1)}_t, \cdots, \bar x^{(n)}_t \} \subset \R^{d \cdot t}$
is also $(\epsilon, \delta)$-stable with respect to vector $\bar \mu_t \in \R^{d \cdot t}$.

In the rest of the paper, we assume that the initial uncorrupted sample set $\mathcal{C}$
is $(\epsilon, \delta)$-stable. 
We use $\mathcal X$ to denote the $\eps$-corrupted version of $\mathcal C$ under the strong contamination model.
and we use $\mathcal{H}$ to denote the set of clean samples in $\mathcal{X}$, i.e. $\mathcal H = \mathcal C \cap \mathcal X$. 
Consequently, $\mathcal{X} \setminus \mathcal{H}$ represents the corrupted samples.

\section{Efficient Online Robust Mean Estimation} \label{sec:filter-online}
In this section, we describe our computationally efficient algorithm for online
robust mean estimation, thereby establishing Theorem~\ref{thm:main1}.
Before stating our approach, we describe a natural attempt and discuss why it fails. 

We start by observing that the naive approach of applying the 
offline weighted filter to the data $\{ x^{(1)}_t, \cdots, x^{(n)}_t \}$ 
revealed in the $t$-th round \emph{independently} does not suffice.
Such a naive algorithm will incur error $\eps'$ (achievable by the optimal offline filter algorithm) in each round, 
leading to a final $\ell_2$ error of $\sqrt{T \cdot \lp(\eps'\rp)^2} = \eps' \sqrt{T}$.
As hinted at the end of Section~\ref{ssec:background}, 
a key idea in online robust estimation is the interplay between
filtering outliers and establishing ``trust'' over the data providers. 
Hence, a natural idea is to let the filtering algorithm in the $(t+1)$-th round 
to ``inherit'' the information about how likely each sample $x^{(i)}$ 
is an outlier from the filtering result in the $t$-th round. 

Suppose we are using the weighted filtering algorithm 
which produces a set of weights $w^{(i)}_t$ for each sample $i \in [n]$ 
at the end of the $t$-th round. We can then initialize the weights 
for the filtering algorithm in the $(t+1)$-th round as exactly $w^{(i)}_t$.
Unfortunately, the idea to simply maintain an online version 
of the weights achieves little improvement in the worst case.
Consider the case where the unknown distribution $X$ is the product of $T$ 
isotropic Gaussians $X_1, \cdots , X_T$. Then the adversary can 
contaminate the set of samples $\mathcal{C}$ to make them 
look like \iid~samples from another isotropic Gaussian distribution $X'_t$ 
for each round $t \in [T]$, such that $\snorm{2}{\E[X'_t] - \E[X_t]} = c \cdot \eps$ 
for some constant $c$~\cite{DK19-survey}. 
Then the filtering algorithm should not downweight any sample, 
since the revealed coordinates in each round $t$ of contaminated 
samples are statistically indistinguishable from i.i.d.\ samples from $X'_t$. 
As a result, the algorithm's error still grows with $\sqrt{T}$, 
but the weight remains unchanged throughout the process.

We address this issue by considering the aggregation of all historical records. 
At the $t$-th round, we will concatenate the vectors $x_1^{(i)}, \cdots, x_t^{(i)}$ 
together into $\bar x_t^{(i)} \in \R^{t \cdot d}$ 
and perform filtering on the dataset $ \bar x_t^{(1:n)}$.
In particular, after initializing the weight $w^{(1:n)}_0$ as $1/n$, 
our algorithm repeats the following 
two main procedures: 
(a) concatenate the coordinates of each sample revealed so far, 
denoted by $\bar{x}^{(i)}_t=(x^{(i)}_1, \cdots , x^{(i)}_t)$ 
for $i \in [n]$; 
(b) apply filters to iteratively decrease the weights $w^{(1:n)}_{t-1}$ 
inherited from the last round until the set 
$\bar x^{(1:n)}_{t}$ under the new weights $w^{(1:n)}_{t}$ 
satisfies the appropriate second moment condition.
Our proposed efficient online algorithm is presented in pseudocode 
as Algorithm \ref{alg:online_eff_filter} below. 
Intuitively, via operation (a), Algorithm~\ref{alg:online_eff_filter} 
ensures that the estimation made in the $t$-th round properly utilizes 
all historical information; 
and through operation (b) that
the weights adjust to reflect the ``likelihood'' 
of a sample being an outlier as the algorithm collects more information.

Recall that in the offline setting 
(loading the entire data set $\mathcal X$ and computing the estimation once), 
the information-theoretically optimal error guarantee 
under the $(\eps, \delta)$-stability condition is $\Theta(\delta)$.
Somewhat surprisingly, the above technique in the online setting 
yields an error that has only an extra $\log T$ factor.
\begin{theorem}
Suppose that $\mathcal{C}$ is $(\epsilon, \delta)$-stable with respect to $\mu^*$ 
and $X$ is an $\epsilon$-corrupted version of $\mathcal{C}$. 
Then, for $\epsilon$ at most a sufficiently small positive constant, 
there exists some constant $\kappa$ 
such that Algorithm \ref{alg:online_eff_filter}, when $\lambda \new{=}\kappa \delta^2/\epsilon$, outputs a sequence of estimates satisfying 
$$ \snorm{2}{ \mu - \mu^* }  = O(\delta\log T) \;.$$
\label{thm:filter_main}
\end{theorem}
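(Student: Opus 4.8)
The plan is to analyze Algorithm~\ref{alg:online_eff_filter} by combining two structural facts about the (weighted) offline filter. First, the standard guarantee: at the end of any round $t$, since the weights $w^{(1:n)}_t$ pass the second-moment test on the stable dataset $\bar x^{(1:n)}_t$ and only $O(\eps)$ total weight has been removed, the weighted mean $\mu_t(\bm w_t)$ of the first $t$ blocks satisfies $\snorm{2}{\mu_t(\bm w_t) - \bar\mu^*_t} = O(\delta)$. (This uses the remark in the preliminaries that any prefix of a stable set is stable.) Second, and crucially, the stability-based bound stated as Lemma~\ref{lemma:bounded-second-difference}: for any $t' > t$,
\[
\sum_{j=1}^{t} \snorm{2}{\mu_j(\bm w_t) - \mu_j(\bm w_{t'})}^2 = O\!\left(\frac{\delta^2}{\eps}\,\snorm{1}{\bm w_t - \bm w_{t'}}\right).
\]
The first fact alone only gives $\delta\sqrt T$, because the per-block error $\snorm{2}{\mu_t(\bm w_t)-\mu^*_t}$ could be $\Theta(\delta)$ in every round. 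The second fact is what lets us argue that an estimate, once committed, does not drift much in later rounds.

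The key step is a potential/telescoping argument. Write $e_t := \mu_t(\bm w_t)$ for the $t$-th committed estimate block and note the algorithm's total error is $\sum_t \snorm{2}{e_t - \mu^*_t}^2$. I would bound each $\snorm{2}{e_t - \mu^*_t}$ by splitting the error of $e_t$ (which is a block of $\mu_t(\bm w_t)$) against the error of the \emph{final} weight vector's estimate for that block, plus all the intermediate drifts: $e_t - \mu^*_t = \bigl(\mu_t(\bm w_T)_t - \mu^*_t\bigr) + \sum_{s=t}^{T-1}\bigl(\mu_t(\bm w_s)_t - \mu_t(\bm w_{s+1})_t\bigr)$ — here I am using that the $t$-th block of $\mu_s(\bm w)$ and of $\mu_{s'}(\bm w)$ coincide once $s,s'\ge t$, so only reweightings cause changes. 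The first term is controlled coordinate-block-wise by the $O(\delta)$ guarantee applied at round $T$ restricted to block $t$; summing its square over $t$ gives $O(\delta^2)$. For the drift terms, Lemma~\ref{lemma:bounded-second-difference} with $(t',t)=(s+1,s)$ gives $\sum_{j\le s}\snorm{2}{\mu_j(\bm w_s)-\mu_j(\bm w_{s+1})}^2 = O((\delta^2/\eps)\snorm{1}{\bm w_s-\bm w_{s+1}})$, and since the total weight movement $\sum_s \snorm{1}{\bm w_s-\bm w_{s+1}} = O(\eps)$ (weights are monotonically decreasing and start at total mass $1$, with only $O(\eps)$ ever removed — strictly, $O(1)$ in the crudest bound, which already suffices up to constants), each individual $\snorm{1}{\bm w_s - \bm w_{s+1}}$ is small but there can be many rounds. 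This is where the $\log T$ enters: a naive triangle-inequality-then-Cauchy-Schwarz over the $T-t$ drift terms loses a factor $T$, so instead I would dyadically group the rounds $s\in[t,T]$ into $O(\log T)$ scales and, within each scale, use Cauchy--Schwarz across that scale's rounds together with the $\ell_1$-budget bound; summing the $O(\log T)$ scales and then over $t$ yields $O(\delta^2\log^2 T)$, i.e. $\ell_2$-error $O(\delta\log T)$. The recursive/telescoping bookkeeping — making sure the error a block picks up in round $s$ is charged against weight removed in round $s$, with the dyadic decomposition absorbing the round count into a single $\log T$ — is the part I expect to require the most care.

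The main obstacle is precisely controlling the accumulation of the drift terms without losing more than a $\log T$ factor: the lemma bounds the \emph{aggregate} drift $\sum_{j\le s}\snorm{2}{\mu_j(\bm w_s)-\mu_j(\bm w_{s+1})}^2$ by the weight moved in that one round, but a fixed block $t$ can be nudged in \emph{every} subsequent round, and these nudges are in adversarially-chosen directions, so the errors need not cancel. Resolving this requires the careful recursive argument alluded to in the techniques overview: roughly, bound the error accrued by block $t$ over a dyadic window of rounds $[s, 2s]$ in terms of $\sqrt{s \cdot (\delta^2/\eps)\cdot(\text{weight moved in that window})}$, observe the weight-moved terms sum to $O(\eps)$ across all windows, and sum the resulting $O(\delta)$-per-window contributions over the $O(\log T)$ windows. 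Everything else — the single-round $O(\delta)$ guarantee, the monotonicity and $O(\eps)$-budget of the weights, the choice $\lambda = \kappa\delta^2/\eps$ so that the second-moment test is both passable (by stability of the clean prefix) and forces enough filtering (so that passing it certifies the $O(\delta)$ mean bound) — is routine adaptation of the offline filter analysis to prefixes of the data, justified by the prefix-stability remark in the preliminaries.
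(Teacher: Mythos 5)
Your plan is correct and essentially the paper's own proof: it rests on the same two ingredients --- the offline filter guarantee applied with the final weights $\bm{w}_T$ (via Lemma~\ref{lem: signature} and prefix-stability) and the drift bound of Lemma~\ref{lemma:bounded-second-difference} --- and your dyadic-window-plus-Cauchy--Schwarz control of the accumulated drift, charged against the weight removed in each window, is the same argument the paper formalizes as Lemma~\ref{lem:T-difference} via midpoint recursion, yielding the identical $O(\delta^2\log^2 T)$ bound. One small correction: your parenthetical claim that a crude $O(1)$ bound on the total weight movement ``already suffices'' is wrong --- the $O(\eps)$ budget guaranteed by Lemma~\ref{lem: less_mass} is essential to cancel the $\delta^2/\eps$ factor in the drift bound (otherwise you would get $\delta^2\log^2 T/\eps$), though the main line of your argument does use the $O(\eps)$ bound correctly.
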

\vspace{-2em}
In the following, we present the proof of Theorem~\ref{thm:filter_main}.
With the stability assumption of clean data in mind, 
we list a few properties of the offline weighted filter algorithm that will be used in the analysis. 
Given a proper selection of filtering threshold $\lambda$ dependent 
upon the stability parameters, in every round the filter always 
removes more weighted mass from the adversarial samples 
compared to that from honest/clean samples. 
As the set of vectors in $\mathcal{C}$ truncated to the coordinates revealed so far
remains stable and Algorithm \ref{alg:online_eff_filter} iteratively applies the filter, 
Algorithm \ref{alg:online_eff_filter} inherits this property. 
Therefore, given a limited budget $\epsilon$ of the adversary, 
Algorithm \ref{alg:online_eff_filter} will finally terminate 
to find a proper weight set  
satisfying the desired second moment bound. 

We formally state this as the following lemma.
\begin{algorithm}
\caption{Online Filter}
\begin{algorithmic}[1]
\STATE \textbf{Input:} The number of samples $n$, Byzantine fraction $\epsilon$, round number $T$, sample coordinates $x^{(1:n)}_{t}$ revealed at the $t^{th}$ round for $t=1,2,\cdots,T$, filter threshold $\lambda$, and initialized weight $w^{(i)}_0=1/n$, for $i=1,2,\cdots,n$.
\FOR{$t=1,2,\cdots,T$}
   \STATE Initialize $\bm{w}_{t} \gets \bm{w}_{t-1}$ and update $\bar{\bm{x}}^{(i)}_t=(x^{(i)}_1, ... , x^{(i)}_t)$.
   \STATE Compute $\Sigma \gets \text{WCov}(\bm{w}_{t}, \bar{\bm{x}}^{(1:n)}_t)$.
    \WHILE{$\|\Sigma \|_2 > 1 + \lambda$}
      \STATE Compute the top eigenvector $v$ of $\Sigma$.
      \STATE Compute empirical weighted mean $\mu(\bm{w}_{t}) \gets \sum_{i=1}^n w^{(i)}_t \bar{\bm{x}}^{(i)}_t.$
      \FOR{$i=1,2,\cdots ,n$}
      \STATE Compute $\rho^{(i)} \gets \langle v, \bm{x}^{(i)}- \mu(\bm{w}_{t}) \rangle^2$.
      \ENDFOR
      \STATE Sort $\rho^{(1:n)}$ into a decreasing order denoted as $\rho^{\pi(1)} \geq \rho^{\pi(2)} \geq \cdots \geq \rho^{\pi(n)}$, and let $\beta$ be the smallest number such that $\sum_{i=1}^{\beta} \rho^{\pi(i)} > 2\epsilon.$
      \STATE Apply WFilter to update weights for $\{\pi(1), \cdots,\pi(\beta)\}$ as $\{w^{\pi(1)}_t,\cdots,w^{\pi(\beta)}_t\} \gets \text{WFilter}(\rho^{\pi(1:\beta)}, w^{\pi(1:\beta)}_t)$; while the remaining weights keep the same, i.e., $w^{\pi(i)}_t = w^{\pi(i)}_t$ for $i >\beta$.
      \STATE Update $\Sigma \gets \text{WCov}(\bm{w}_{t}, \bar{\bm{x}}^{(1:n)}_t)$
    \ENDWHILE
    \STATE \textbf{Output}: $\mu_{t} = \sum_{i=1}^n w^{(i)}_t x^{(i)}_t$.
\ENDFOR
\end{algorithmic}
\hrulefill\\
\textbf{Subroutine 1}: Weighted Filter (WFilter)
\begin{algorithmic}[1]
    \STATE \textbf{Input:} scores $\rho^{\pi(1:\beta)}$,  weights $w^{\pi(1:\beta)}$.
    \FOR{$i=1,2,...,\beta$}
    \STATE $w^{\pi(i)} \gets (1-\frac{\rho^{\pi(i)}}{\max_j \rho^{\pi(j)}})w^{\pi(i)}.$
    \ENDFOR
    \STATE \textbf{Output:} $w^{\pi(1:\beta)}$.
\end{algorithmic}
\hrulefill\\
\textbf{Subroutine2}: Weighted Covariance (WCov)
\begin{algorithmic}[1]
    \STATE \textbf{Input:} weight  $\bm{w}=w^{(1:n)}$ and  samples $\bar{x}^{(1:n)}$.
    \STATE Compute $\mu(\bm{w}) \gets \sum_{i=1}^n \frac{w^{(i)}}{\|\bm{w}\|_1} \bar{x}^{(i)}.$
    \STATE Compute weighted covariance estimation $\Sigma \gets \sum_{i=1}^n \frac{w^{(i)}}{\|\bm{w}\|_1}(\bar{x}^{(i)}-\mu(\bm{w}))(\bar{x}^{(i)}-\mu(\bm{w}))^T.$
    \STATE \textbf{Output:} $\Sigma$.
\end{algorithmic}
\label{alg:online_eff_filter}
\end{algorithm}

\begin{lemma}[Proposition 2.13 of \cite{DK19-survey}] When $\mathcal{C}$ is $(\epsilon, \delta)$-stable, $\mathcal{X}$ is an $\epsilon$ corrupted version of $\mathcal{C}$, and $\epsilon < \epsilon_0$ for some sufficiently small $\epsilon_0$, there exists some constant $\kappa$ such that in Algorithm~\ref{alg:online_eff_filter} when $\lambda \geq \kappa \delta^2/\epsilon$, 
for any $t \in [T]$, $\sum_{i \in \mathcal{H}} w^{(i)}_{t-1} - \sum_{i \in \mathcal{H}} w^{(i)}_{t} \leq \sum_{i \in \mathcal{X} \setminus \mathcal{H}} w^{(i)}_{t-1} - \sum_{i \in \mathcal{X} \setminus \mathcal{H}} w^{(i)}_{t}.$
\label{lem: less_mass}
\end{lemma}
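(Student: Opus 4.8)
The statement is the standard ``the soft filter de-weights corrupted points at least as fast as clean ones'' invariant that underlies essentially all filtering-based robust mean estimators, and it is quoted above as Proposition~2.13 of~\cite{DK19-survey}; the plan is to reduce to that known fact and supply only the bookkeeping specific to the online algorithm. First I would invoke the remark from the Preliminaries that the truncation $\bar{\mathcal C}_t = \{\bar{x}^{(i)}_t : x^{(i)}\in\mathcal C\}$ of the clean set to the first $t$ blocks is again $(\epsilon,\delta)$-stable, now with respect to $\bar{\mu}^{\ast}_t$. Within round $t$ the weights move from $\bm{w}_{t-1}$ to $\bm{w}_t$ through a (possibly empty) sequence of iterations of the inner \textsc{while} loop, and along the way weights only decrease, so it suffices to prove the inequality for a single such iteration; summing the per-iteration inequalities over all iterations of round $t$ then gives the claimed per-round inequality (if the loop never executes, both sides are $0$). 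I would run the whole argument as an induction over the pairs (round, \textsc{while}-iteration) in lexicographic order, maintaining the invariant that at the start of any iteration the total $w$-mass deleted from $\mathcal H$ so far is at most the mass deleted from $\mathcal X\setminus\mathcal H$; since the corrupted points carry total initial mass at most $\epsilon$, this invariant forces $\sum_{i\in\mathcal H} w^{(i)}\ge 1-2\epsilon$ and $\|\bm{w}\|_1\ge 1-2\epsilon$ at all times, which is exactly what the single-iteration analysis needs as input.

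For a single iteration, write $v$ for the top eigenvector of $\Sigma = \mathrm{WCov}(\bm{w},\bar{x}^{(1:n)}_t)$ and $\rho^{(i)} = \langle v,\bar{x}^{(i)}_t-\mu(\bm{w})\rangle^2$; the \textsc{while} guard gives $v^{\top}\Sigma v = \|\Sigma\|_2 > 1+\lambda$. Since \textsc{WFilter} rescales $w^{(i)}$ by $1-\rho^{(i)}/\rho_{\max}$ for $i$ in the selected high-score set $F=\{\pi(1),\dots,\pi(\beta)\}$ and leaves the rest fixed, the $w$-mass it deletes from $i\in F$ equals $w^{(i)}\rho^{(i)}/\rho_{\max}$, so the goal reduces to $\sum_{i\in\mathcal H\cap F} w^{(i)}\rho^{(i)}\le\sum_{i\in(\mathcal X\setminus\mathcal H)\cap F} w^{(i)}\rho^{(i)}$. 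The two inputs are: (i) the total second moment in the top direction is large, $\sum_i w^{(i)}\rho^{(i)} = \|\bm{w}\|_1\|\Sigma\|_2 > (1-2\epsilon)(1+\lambda)$; and (ii) the second-moment clause of $(\epsilon,\delta)$-stability, in its weighted form (obtained from Definition~\ref{def: stability} by writing $\bm{w}|_{\mathcal H}$ as a mixture of uniform measures on $(1-\epsilon)$-subsets of $\mathcal H$), which keeps the clean scores concentrated near $1$ — both in aggregate, $\sum_{i\in\mathcal H}w^{(i)}\langle v,\bar{x}^{(i)}_t-\bar{\mu}^{\ast}_t\rangle^2\le 1+O(\delta^2/\epsilon)$ and $|\sum_{i\in\mathcal H}w^{(i)}\langle v,\bar{x}^{(i)}_t-\bar{\mu}^{\ast}_t\rangle|\le O(\delta)$, and, via the subset form of the same clause, with only an $O(\delta^2/\epsilon)$ fraction of the clean $w$-mass at scores above a suitable constant, so that $\sum_{i\in\mathcal H\cap F}w^{(i)}\rho^{(i)} = O(\delta^2/\epsilon)$. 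Consequently the $\ge\lambda$ worth of ``excess'' second moment, and in particular the part of it that the selection rule gathers into $F$, is carried essentially entirely by corrupted points; choosing $\lambda=\kappa\delta^2/\epsilon$ with $\kappa$ a large absolute constant makes this margin dominate the clean contribution and yields the inequality.

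The step I expect to be the real obstacle — and the reason the threshold must take the form $\lambda\ge\kappa\delta^2/\epsilon$ — is reconciling the fact that the filter recenters at the empirical weighted mean $\mu(\bm{w})$ while stability speaks about the true mean $\bar{\mu}^{\ast}_t$. Expanding $\rho^{(i)} = \langle v,\bar{x}^{(i)}_t-\bar{\mu}^{\ast}_t\rangle^2 - 2\langle v,\bar{x}^{(i)}_t-\bar{\mu}^{\ast}_t\rangle\langle v,b\rangle + \langle v,b\rangle^2$ with $b := \mu(\bm{w})-\bar{\mu}^{\ast}_t$ introduces cross-terms governed by $\|b\|$, which I would bound by splitting $\langle u,b\rangle$ for a unit vector $u$ into a clean part (at most $O(\delta)$ by the first-moment clause of stability) and a corrupted part (at most $\sqrt{\sum_{\mathrm{bad}}w^{(i)}}\cdot\sqrt{\sum_{\mathrm{bad}}w^{(i)}\langle u,\bar{x}^{(i)}_t-\mu(\bm{w})\rangle^2}+O(\epsilon)\|b\|$ by Cauchy--Schwarz, where the first factor is $\le\sqrt{\epsilon}$ and the second is $\le\sqrt{\|\bm{w}\|_1\|\Sigma\|_2}$), giving $\|b\| = O(\delta + \sqrt{\epsilon\,\|\Sigma\|_2})$ after using $\|\bm{w}\|_1\ge 1-2\epsilon$. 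The remainder is constant-chasing: one checks that the resulting cross-terms together with the $O(\delta^2/\epsilon)$ clean slack stay strictly below half of $\|\bm{w}\|_1\|\Sigma\|_2$ for every $\|\Sigma\|_2 > 1+\lambda$ — the regime where $\|\Sigma\|_2$ is only slightly above $1+\lambda$ is handled by the margin $\lambda - O(\delta^2/\epsilon) > 0$ (forcing $\kappa$ large), and the regime where $\|\Sigma\|_2$ is huge by the fact that the $O(\epsilon\|\Sigma\|_2)$ terms are a small fraction of $\|\Sigma\|_2$ (forcing $\epsilon_0$ small). This balancing is precisely the content of Proposition~2.13 of~\cite{DK19-survey}, which I would cite at that point; the only genuinely new ingredients are the stability of the truncated set $\bar{\mathcal C}_t$ and the inductive accounting of deleted mass described in the first paragraph.
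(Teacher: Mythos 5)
Your proposal is correct and matches the paper's treatment: the paper does not reprove this lemma but imports it as Proposition~2.13 of \cite{DK19-survey}, applied round by round to the truncated data via the observation (made in the Preliminaries) that restricting a $(\epsilon,\delta)$-stable set to the first $t$ blocks preserves stability, which is exactly the reduction you set up before deferring the per-iteration score-balancing to the same cited proposition. Your additional sketch of the internals (per-iteration mass accounting, clean tail contribution $O(\delta^2/\epsilon)$, recentering via $\|b\| = O(\delta + \sqrt{\epsilon\|\Sigma\|_2})$) is consistent with the standard argument and goes beyond what the paper itself records.
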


{It is also worth noting that at the end of the filter step in any round, the empirical covariance matrix $\Sigma=\text{WCov}(\bm{w}_{t}, \bar{\bm{x}}^{(1:n)}_t)$ satisfies $\|\Sigma\|_2 \leq 1+\lambda$. In fact, by stability of $\mathcal{C}$, the weighted covariance of just the points in $\mathcal{H}$ must be at least $1-O(\delta^2/\eps)$ in every direction, and so if $\lambda$ is an appropriate multiple of $\delta^2/\eps$, we will have that $\|\Sigma\|_2\leq 1 + \lambda$.
This second moment bound then allows us to control the error in our estimate of the mean. In particular, we have:}
\begin{lemma}[Lemma 2.4 of \cite{DK19-survey}] If $\mathcal{X} = \{x^{(1)}, ... , x^{(n)}\}$ is an $\epsilon$-corrupted version of an $(\epsilon, \delta)$-stable set $\mathcal{C}$ with respect to $\mu$ and $\epsilon < \epsilon_0$ for some sufficiently small $\epsilon_0$, then for any selection of weights $w^{(1:n)}$ such that $\sum_{i=1}^n |1/n-w^{(i)}| \leq 2\epsilon$, 
$$ \|\mu(\bm{w}) - \mu\|_2 \leq O(\delta+ \sqrt{\epsilon \cdot {\max\{(\|\Sigma(\bm{w})\|_2-1), 0\}}}),$$
where the empirical mean is denoted by $\mu(\bm{w})= \sum_{i=1}^n \frac{w^{(i)}}{\|\bm{w}\|_1}x^{(i)}$, and the empirical covariance $\Sigma(\bm{w}) = \sum_{i=1}^n \frac{w^{(i)}}{\| \bm{w}\|_1}(x^{(i)}-\mu(\bm{w}))(x^{(i)}-\mu(\bm{w}))^T$.
\label{lem: signature}
\end{lemma}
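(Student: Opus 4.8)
The plan is to establish the bound for an \emph{arbitrary} weight vector $w^{(1:n)}$ satisfying the hypothesis; no property of the filter is used. Normalize by writing $p^{(i)} = w^{(i)}/\|\bm{w}\|_1$, and let $\mathcal{H} = \mathcal{C}\cap\mathcal{X}$ be the surviving clean points and $\mathcal{B} = \mathcal{X}\setminus\mathcal{H}$ the inserted ones. From $\sum_i |1/n - w^{(i)}| \le 2\epsilon$ one first records the bookkeeping facts $\|\bm{w}\|_1 = 1 \pm 2\epsilon$, $\sum_i |p^{(i)} - 1/n| = O(\epsilon)$, and $\sum_{i\in\mathcal{B}} p^{(i)} = O(\epsilon)$ (the last since $|\mathcal{B}| \le \epsilon n$). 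Since $\|\mathrm{mean}(\mathcal{C}) - \mu\|_2 \le \delta$ by the first stability condition applied to $S' = \mathcal{C}$, it suffices to bound $\|\mu(\bm{w}) - \mathrm{mean}(\mathcal{C})\|_2$.

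The next step is to reduce to one dimension and introduce a mass-coupling. Let $v$ be the unit vector along $\mu(\bm{w}) - \mu$ and set $a^{(i)} = \langle v, x^{(i)} - \mu\rangle$, so $\Delta := \|\mu(\bm{w}) - \mu\|_2 = \sum_i p^{(i)} a^{(i)}$. Define $r^{(i)} = \min(p^{(i)}, 1/n)$ for $i\in\mathcal{H}$ and $r^{(i)} = 0$ otherwise; then $p = r + s$ and $\mathrm{Unif}(\mathcal{C}) = r + \hat{s}$ with $s, \hat{s} \ge 0$, $s^{(i)} \le p^{(i)}$, $\hat{s}^{(i)} \le 1/n$, and $\sum_i s^{(i)} = \sum_i \hat{s}^{(i)} = 1 - \sum_i r^{(i)} = O(\epsilon)$. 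The common part $r$ cancels in $\mu(\bm{w}) - \mathrm{mean}(\mathcal{C})$, and using $\sum_i s^{(i)} = \sum_i \hat{s}^{(i)}$ to recenter at $\mu$ gives $\Delta \le \delta + \big|\sum_i s^{(i)} a^{(i)}\big| + \big|\sum_{i\in\mathcal{C}} \hat{s}^{(i)} a^{(i)}\big|$.

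For the $\hat{s}$ term: $\hat{s}$ is a nonnegative weighting of $\mathcal{C}$ with every coordinate at most $1/n$ and total mass $O(\epsilon)$, so (its extreme points have all but at most one coordinate in $\{0,1/n\}$) it suffices to bound $\frac1n\sum_{i\in R} a^{(i)}$ for subsets $R\subseteq\mathcal{C}$ with $|R| = O(\epsilon n)$; after splitting $R$ into $O(1)$ pieces of size $\le\epsilon n$ and applying the first stability condition to $\mathcal{C}$ and to $\mathcal{C}$ minus each piece, this is $O(\delta)$, and the single leftover fractional coordinate contributes $o(\delta)$ since $n \gg 1/\epsilon$. For the $s$ term, Cauchy--Schwarz gives $\big|\sum_i s^{(i)} a^{(i)}\big| \le \sqrt{\sum_i s^{(i)}}\cdot\sqrt{\sum_i s^{(i)}(a^{(i)})^2} = O(\sqrt\epsilon)\,\sqrt{\sum_i s^{(i)}(a^{(i)})^2}$, and the crux is the bound $\sum_i s^{(i)}(a^{(i)})^2 = \sum_i p^{(i)}(a^{(i)})^2 - \sum_i r^{(i)}(a^{(i)})^2$. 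The first piece equals $v^T\Sigma(\bm{w})v + \Delta^2 \le \|\Sigma(\bm{w})\|_2 + \Delta^2$; the second, since $r$ is (up to one fractional coordinate) $\frac1n$ on a subset of $\mathcal{C}$ of size $(1-O(\epsilon))n$, is at least $1 - O(\delta^2/\epsilon)$ by the \emph{lower} half of the second stability condition. Hence $\sum_i s^{(i)}(a^{(i)})^2 \le (\|\Sigma(\bm{w})\|_2 - 1)_+ + \Delta^2 + O(\delta^2/\epsilon)$ (using $\epsilon\le\delta^2/\epsilon$, valid because $\delta\ge\epsilon$), so $\big|\sum_i s^{(i)} a^{(i)}\big| = O\!\big(\sqrt{\epsilon(\|\Sigma(\bm{w})\|_2-1)_+}\big) + O(\delta) + O(\sqrt\epsilon)\,\Delta$. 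Combining the three contributions, $\Delta \le O(\delta) + O\!\big(\sqrt{\epsilon(\|\Sigma(\bm{w})\|_2-1)_+}\big) + O(\sqrt\epsilon)\,\Delta$, and since $\epsilon$ is at most a small constant the last term moves to the left-hand side, yielding the claim.

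The step I expect to be the main obstacle --- and the one responsible for the bound being $O(\delta)$ rather than the naive $O(\sqrt\epsilon)$ --- is the estimate of $\sum_i s^{(i)}(a^{(i)})^2$: subtracting off the overlap contribution $\sum_i r^{(i)}(a^{(i)})^2 \ge 1 - O(\delta^2/\epsilon)$ is essential, since otherwise one is left with $\sqrt{\epsilon\,\|\Sigma(\bm{w})\|_2}\approx\sqrt\epsilon$, which is far too large when $\delta = o(\sqrt\epsilon)$. Making this rigorous requires invoking the \emph{lower} second-moment bound of stability not for a genuine subset of $\mathcal{C}$ but for the overlap measure $r$, so one must verify that stability is closed under the fractional (weighted) relaxation used in the coupling and under a constant-factor enlargement of the corruption parameter; the remaining points (the leftover fractional extreme-point coordinate, and tracking $O(\epsilon)$-versus-$\epsilon$ slack) are routine.
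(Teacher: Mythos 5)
The paper does not actually prove this statement: it is imported verbatim as Lemma~2.4 of the cited survey \cite{DK19-survey}, so there is no in-text argument to compare against. Your blind proof is correct and is essentially the standard ``certificate'' argument for that lemma: project onto the direction of the error, couple the normalized weights $p$ with $\mathrm{Unif}(\mathcal{C})$ through the overlap $r^{(i)}=\min(p^{(i)},1/n)$, bound the clean discrepancy term by $O(\delta)$ via the first stability condition (extreme points of the $\hat{s}$-polytope reduce it to subsets of size $O(\eps n)$), and control the remaining $O(\eps)$ mass by Cauchy--Schwarz after subtracting the overlap's second moment, which the lower half of the second stability condition forces to be at least $1-O(\delta^2/\eps)$; the resulting self-bounding inequality $\Delta \le O(\delta)+O(\sqrt{\eps\,(\|\Sigma(\bm{w})\|_2-1)_+})+O(\sqrt{\eps})\,\Delta$ absorbs for $\eps\le\eps_0$. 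The one step you flag --- that stability survives the fractional relaxation and a constant-factor enlargement of the corruption level --- is indeed the only point needing verification, and it does go through routinely: writing a weight vector with coordinates in $[0,1/n]$ and mass $1-O(\eps)$ as a convex combination of (near-)indicators of subsets of size $(1-O(\eps))n$, and noting that removing a set of size $c\eps n$ can be handled by splitting it into $O(1)$ pieces of size at most $\eps n$, one gets first- and second-moment bounds degraded only by constant factors (using $\eps\le\delta^2/\eps$). Two cosmetic remarks: the leftover fractional coordinate in your extreme-point argument contributes $O(\delta)$ (via stability applied to $\mathcal{C}$ and $\mathcal{C}$ minus one point), not necessarily $o(\delta)$, which is all you need; and your identity $\sum_i p^{(i)}(a^{(i)})^2 = v^T\Sigma(\bm{w})v+\Delta^2$ is exactly the mean-shift decomposition that makes the $(\|\Sigma(\bm{w})\|_2-1)_+$ term appear, so the overall structure matches the argument the survey itself uses.
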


Lemma \ref{lem: signature} states that the error in the empirical estimate of the mean is controlled by the empirical covariance. In particular, after filtering we can guarantee that $\|\Sigma(\bm{w})\|_2-1\leq \lambda$, and consequently guarantee a mean estimation error of $O(\delta + \sqrt{\eps\lambda})=O(\delta)$. 
For an offline robust mean estimation algorithm, the above properties would be enough to show the error guarantees of the algorithm. 
To analyze the behavior of the algorithm in the online setting, we need a more subtle property of the filtering algorithm: 
the difference in the estimations of $\mu_t^*$ using weights from two different rounds is proportional to the difference in the weights.
The formal statement is given below.
\begin{lemma}
\label{lemma:bounded-second-difference}
{Given the assumptions of} Theorem \ref{thm:filter_main}, for $1\leq t < t' \leq T$, let $\mu(\bm{w}_t, \bar{\bm{x}}^{(1:n)}_t) = \sum_{i=1}^n \frac{w^{(i)}_t\bar{\bm{x}}^{(i)}_t}{\|\bm{w}_t\|_1}$ and $\mu(\bm{w}_{t'}, \bar{\bm{x}}^{(1:n)}_{t}) = \sum_{i=1}^n \frac{w^{(i)}_{t'}\bar{\bm{x}}^{(i)}_t}{\|\bm{w}_{t'}\|_1}$. Then, when we apply Algorithm \ref{alg:online_eff_filter}, it holds that
\begin{align*}
    \snorm{2}{ \mu(\bm{w}_t, \bar{\bm{x}}^{(1:n)}_t) - \mu(\bm{w}_{t'}, \bar{\bm{x}}^{(1:n)}_{t})}^2
    \leq O(1) \cdot \frac{\delta^2}{\epsilon}  \cdot
    \snorm{1}{\bm{w}_t- \bm{w}_{t'}}.
\end{align*}
\end{lemma}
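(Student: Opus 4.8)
The plan is to reduce the statement to a spectral estimate controlling how far the weighted truncated mean can move when a small, thinly-spread amount of weight is deleted, and then to bound the ``clean'' and ``corrupted'' contributions to this movement separately. The clean part is handled via the second stability condition, and the corrupted part via the post-filter covariance bound minus a matching stability lower bound for the clean points.

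Concretely, I would fix $t<t'$ and abbreviate $\bar{\bm{x}}^{(i)}:=\bar{\bm{x}}^{(i)}_t$, $\bm u:=\bm w_t$, $\bm v:=\bm w_{t'}$, $U:=\snorm1{\bm u}$, $V:=\snorm1{\bm v}$, $\mu:=\mu(\bm u,\bar{\bm{x}}^{(1:n)})$, and let $\bar\mu^\ast\in\R^{td}$ be the concatenation of the first $t$ blocks of $\mu^\ast$, so that $\{\bar{\bm{x}}^{(i)}\}_{i\in\mathcal C}$ is $(\eps,\delta)$-stable with respect to $\bar\mu^\ast$. With $\Delta^{(i)}:=u^{(i)}-v^{(i)}$ and $D:=\snorm1{\bm w_t-\bm w_{t'}}=\sum_i\Delta^{(i)}$, the starting point is the identity $\mu(\bm u,\bar{\bm{x}}^{(1:n)})-\mu(\bm v,\bar{\bm{x}}^{(1:n)})=\frac1V\sum_i\Delta^{(i)}(\bar{\bm{x}}^{(i)}-\mu)$ (using $U-D=V$); since $V=\Theta(1)$, this reduces the lemma to proving $\snorm2{\sum_i\Delta^{(i)}(\bar{\bm{x}}^{(i)}-\mu)}^2=O(\delta^2D/\eps)$. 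I would then record three ingredients to be used repeatedly: (i) weights are non-increasing across rounds (each round re-initializes $\bm w_t\gets\bm w_{t-1}$ and WFilter only shrinks weights), so $\Delta^{(i)}\ge0$; (ii) summing Lemma~\ref{lem: less_mass} over rounds $t+1,\dots,t'$ gives $D_{\mathcal H}:=\sum_{i\in\mathcal H}\Delta^{(i)}\le\sum_{i\in\mathcal X\setminus\mathcal H}\Delta^{(i)}=:D_{\mathrm{bad}}$, and since the corrupted points carry total weight at most $\eps$, both $D_{\mathcal H},D_{\mathrm{bad}}\le\eps$; (iii) at the end of round $t$ the filter guarantees $\mathrm{WCov}(\bm w_t,\bar{\bm{x}}^{(1:n)}_t)\preceq(1+\lambda)I$ with $\lambda=\kappa\delta^2/\eps$, equivalently $\sum_iu^{(i)}(\bar{\bm{x}}^{(i)}-\mu)(\bar{\bm{x}}^{(i)}-\mu)^{\!\top}\preceq(1+O(\delta^2/\eps))I$, and moreover $\snorm2{\mu-\bar\mu^\ast}=O(\delta)$ by Lemma~\ref{lem: signature}.

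Next I would split $\sum_i\Delta^{(i)}(\bar{\bm{x}}^{(i)}-\mu)$ into its sum over $\mathcal H$ and its sum over $\mathcal X\setminus\mathcal H$ and bound each in squared norm by $O(\delta^2D_{\mathcal H}/\eps)$ and $O(\delta^2D_{\mathrm{bad}}/\eps)$ respectively. For the clean part, I re-center at $\bar\mu^\ast$ (the shift contributes only $D_{\mathcal H}\cdot O(\delta)$ to the norm, which is lower order as $D_{\mathcal H}\le\eps$) and apply Cauchy--Schwarz in the form $\snorm2{\sum_i a_iv_i}^2\le(\sum_ia_i)\,\snorm2{\sum_ia_iv_iv_i^{\!\top}}$ with $a_i=\Delta^{(i)}$, reducing matters to bounding $\sum_{i\in\mathcal H}\Delta^{(i)}\langle\xi,\bar{\bm{x}}^{(i)}-\bar\mu^\ast\rangle^2$ uniformly over unit $\xi$. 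This is the step I expect to be the main obstacle: the naive bound $\sum_{i\in\mathcal H}\Delta^{(i)}\langle\xi,\cdot\rangle^2\le\sum_{i\in\mathcal H}u^{(i)}\langle\xi,\cdot\rangle^2=O(1)$ only yields an $O(D)$ bound overall, which is too weak in the subgaussian regime where $\delta^2/\eps=o(1)$. To obtain the factor $\delta^2/\eps$ I would exploit that $\Delta^{(i)}\le u^{(i)}\le1/n$ while $\sum_{i\in\mathcal H}\Delta^{(i)}=D_{\mathcal H}\le\eps$, so $\sum_{i\in\mathcal H}\Delta^{(i)}\langle\xi,\cdot\rangle^2$ is at most $\tfrac1n$ times the sum of the $\lceil D_{\mathcal H}n\rceil\le\eps n$ largest values of $\langle\xi,\bar{\bm{x}}^{(i)}-\bar\mu^\ast\rangle^2$ over $i\in\mathcal H$; applying the second stability condition to the complement of this top set bounds this by $O(\delta^2/\eps)$. (The only delicate point is the regime $D_{\mathcal H}n<1$, where the ``top $\lceil D_{\mathcal H}n\rceil$'' argument degrades to $O(1)$; there the clean part contributes an additive $O(1/n)$, negligible against $\delta$ since $n=\poly(M,1/\eps)$.)

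For the corrupted part, where the $\bar{\bm{x}}^{(i)}$ are arbitrary, I would combine the post-filter upper bound (iii) with a matching stability \emph{lower} bound on the clean contribution: extending $\bm u|_{\mathcal H}$ by zero to a weight vector on the stable set $\{\bar{\bm{x}}^{(i)}\}_{i\in\mathcal C}$ and invoking (weighted) stability gives $\sum_{i\in\mathcal H}u^{(i)}(\bar{\bm{x}}^{(i)}-\mu)(\bar{\bm{x}}^{(i)}-\mu)^{\!\top}\succeq(1-O(\delta^2/\eps))I$, so subtracting this from (iii) leaves $\sum_{i\in\mathcal X\setminus\mathcal H}u^{(i)}(\bar{\bm{x}}^{(i)}-\mu)(\bar{\bm{x}}^{(i)}-\mu)^{\!\top}\preceq O(\delta^2/\eps)I$; then Cauchy--Schwarz as above together with $\Delta^{(i)}\le u^{(i)}$ bounds the corrupted part of $\sum_i\Delta^{(i)}(\bar{\bm{x}}^{(i)}-\mu)$ by $O(\delta^2D_{\mathrm{bad}}/\eps)$ in squared norm. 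Finally I would combine the two parts via $\snorm2{a+b}^2\le2\snorm2a^2+2\snorm2b^2$ and $D_{\mathcal H}+D_{\mathrm{bad}}=D$, and divide by $V^2=\Theta(1)$ using the identity from the first step, obtaining $\snorm2{\mu(\bm u,\bar{\bm{x}}^{(1:n)})-\mu(\bm v,\bar{\bm{x}}^{(1:n)})}^2=O(\delta^2D/\eps)$ as claimed.
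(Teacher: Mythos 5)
Your proof is correct, but it takes a genuinely different route from the paper's. The paper normalizes the weights, views $\bm{y}_t=\bm{w}_t/\|\bm{w}_t\|_1$ and $\bm{y}_{t'}=\bm{w}_{t'}/\|\bm{w}_{t'}\|_1$ as probability distributions over the truncated samples, writes $\bm{y}_t=(1-\eta)\bm{y}_{t'}+\eta\bm{e}$ with $\eta=\|\bm{y}_t-\bm{y}_{t'}\|_1$, and applies the mixture (law-of-total-covariance) identity: the rank-one term $\eta(1-\eta)\bigl(\mu(\bm{y}_{t'})-\mu(\bm{e})\bigr)\bigl(\mu(\bm{y}_{t'})-\mu(\bm{e})\bigr)^{\top}$ is squeezed between the post-filter covariance upper bound at round $t$ and the stability-based covariance lower bound at round $t'$, with $\cov(\bm{e})\succeq 0$ discarded, giving $\|\mu(\bm{y}_t)-\mu(\bm{y}_{t'})\|_2^2=O(\eta\,\delta^2/\eps)$ in a few lines and with no clean/corrupted split. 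You instead work with the unnormalized difference via the exact identity $\mu(\bm{w}_t,\bar{\bm{x}}_t^{(1:n)})-\mu(\bm{w}_{t'},\bar{\bm{x}}_t^{(1:n)})=\tfrac{1}{V}\sum_i\Delta^{(i)}(\bar{\bm{x}}^{(i)}_t-\mu)$, split the removed mass into its clean and corrupted parts, and bound each by Cauchy--Schwarz against a spectral quantity: the corrupted part uses exactly the paper's two covariance facts (post-filter upper bound minus weighted-stability lower bound on $\mathcal{H}$), while the clean part needs an extra ingredient the paper's argument avoids, namely the top-$k$/complement use of the second stability condition together with $\Delta^{(i)}\le 1/n$ and $D_{\mathcal H}\le\eps$ (from Lemma~\ref{lem: less_mass}). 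Your route is longer but more explicit about where the mass goes and sidesteps the normalization bookkeeping (in particular the need for the residual vector $\bm{e}$ in the paper's decomposition to be a legitimate nonnegative weight vector); the paper's route is shorter and never needs the top-$k$ argument. Two small points to make explicit if you write this up: the absorptions $D_{\mathcal H}\le\eps\le\delta^2/\eps$ and $O(\eps)\le O(\delta^2/\eps)$ silently use $\delta\ge\eps$ from Definition~\ref{def: stability}, and the ``weighted'' versions of the stability conditions for the mass on $\mathcal{H}$ (and the re-centering from $\bar\mu^*$ to $\mu$, which costs only $O(\delta^2)$ via Lemma~\ref{lem: signature}) should be cited as the standard consequence of set stability, the same fact the paper invokes when asserting the lower bound on the weighted covariance of $\mathcal{H}$.
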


\begin{proof}
Let $\bm{y}_t = \bm{w}_t/\|\bm{w}_t\|_1$ and $\bm{y}_{t'} = \bm{w}_{t'}/\|\bm{w}_{t'}\|_1$ be the normalized weight outputted at the round $t$ and $t'$, respectively, and $\eta =\|\bm{y}_t - \bm{y}_{t'}\|_1$.
We consider the following decomposition $\bm{y}_t = (1-\eta)\bm{y}_{t'}+\eta\bm{e},$ where $\bm{e}$ is a non-zero weight vector {with} $\|\bm{e}\|_1=1$.
{
Notice that $\bm{y}_t$, $\bm{y}_{t'}$, and $\bm{e}$ can be thought as distributions over our sample vectors $\bm{\bar{x}}^{(i)}_t$.
Essentially, the distribution under $\bm{y}_t$ is a mixture of the distributions under $\bm{y}_{t'}$, and $\bm{e}$ respectively. 
This implies that
$$
\cov(\bm{y}_t) = (1-\eta) \cdot \cov(\bm{y}_{t'})+\eta \cdot \cov(\bm{e})+
\eta(1-\eta) \cdot \Big(\mu(\bm{y}_{t'}) - \mu(e) \Big) \Big(\mu(\bm{y}_{t'}) - \mu(\bm{e}) \Big)^T,
$$
where $\cov(\cdot)$ and $\mu(\cdot)$ denote the covariance and mean over $\bar x^{(i)}_t$ respectively under the argument inside. 
Since both $\|\cov(\bm{y}_t)-I\|_2$ and $\|\cov(\bm{y}_{t'})-I\|_2$ are $O(\delta^2/\eps)$ and since $\cov(\bm{e})\succeq 0$, we have that $\|\mu(\bm{y}_{t'}) - \mu(\bm{e})\|_2^2 = O(\delta^2 / (\eta\eps)).$ Combining this with the fact that
$$
\mu(\bm{y}_t)-\mu(\bm{y}_{t'}) 
= (1-\eta) \cdot \mu(\bm{y}_{t'})
+\eta \cdot \mu(\bm{e})-\mu(\bm{y}_{t'}) =
 -\eta \cdot \Big(\mu(\bm{y}_{t'}) - \mu(\bm{e}) \Big)
$$
then yields
$
\snorm{2}{\mu(\bm{y}_t)-\mu(\bm{y}_{t'})}^2 \leq O\lp(\eta \cdot \delta^2 / \eps \rp).
$
Finally, notice that, by definition, $\mu(y_t)$ is exactly $\mu(w_t, \bar x^{(1:n)}_t)$(and similarly for $\mu(y_t')$ and $\mu(w_t', \bar x^{(1:n)}_t)$), and $\eta = O( \snorm{1}{ w_t - w_t' } )$. Our lemma follows.
}
\end{proof}

The error guarantee of Algorithm~\ref{alg:online_eff_filter} then largely 
follows from the following two observations: 
(i) after the last round, if we were to use the weights $\bm{w}_T$ in 
hindsight to estimate the means in each round, the error will be optimal since 
the algorithm is essentially the same as the one used in the offline setting; 
(ii) $\mu( \bm{w}_t, \bm{\bar{x}}^{(1:n)}_t  )$, the estimation outputted at 
the $t^{th}$ round, will not differ much from 
$\mu(\bm{w}_{t'}, \bm{\bar{x}}^{(1:n)}_t)$, 
the estimation if we were to use the weights $w_{t'}$ 
from some future round $t' >t$. We note that (i) simply follows from the 
standard guarantees of the filtering algorithm while (ii) follows from 
Lemma~\ref{lemma:bounded-second-difference} and Lemma~\ref{lem: less_mass}. 
Formally, we show the cumulative $L^2$ difference 
between the mean estimations produced across $T$ time slots 
and that from the last round can be bounded 
by $O(\log^2 T)$ through a careful recursive argument.
\begin{lemma}
\label{lem:T-difference}
$\sum_{t=1}^T \|\mu(\bm{w}_t, {{x}}^{(1:n)}_t) - \mu(\bm{w}_T, {{x}}^{(1:n)}_t) \|_2^2 \leq O \lp( \log^2 T \cdot \delta^2 \rp)$.
\end{lemma}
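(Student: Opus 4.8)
The plan is to bound $\sum_{t=1}^T \snorm{2}{\mu(\bm{w}_t, x^{(1:n)}_t) - \mu(\bm{w}_T, x^{(1:n)}_t)}^2$ by reducing it to $O(\log T)$ ``levels'' of disjoint applications of Lemma~\ref{lemma:bounded-second-difference}, so that the total weight movement charged at each level is a single copy of the global budget rather than one copy per round. The naive route --- writing each summand directly as $\snorm{2}{\mu(\bm{w}_t,x^{(1:n)}_t)-\mu(\bm{w}_T,x^{(1:n)}_t)}^2 = O(\delta^2/\eps)\,\snorm{1}{\bm{w}_t-\bm{w}_T}$ via Lemma~\ref{lemma:bounded-second-difference} --- only yields $O(\delta^2 T)$, because each individual $\snorm{1}{\bm{w}_t-\bm{w}_T}$ may be as large as the whole budget $\Theta(\eps)$; the entire point is that such quantities over \emph{disjoint} time intervals share a single budget.

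First I would record the bookkeeping: summing the per-round inequality of Lemma~\ref{lem: less_mass} over $t=1,\dots,T$ shows the total honest weight ever removed is at most the total corrupted weight, which is at most $\eps$; hence the total weight ever removed is at most $2\eps$. Since the weights are coordinate-wise non-increasing in $t$ (WFilter only scales weights down), for $1\le a\le b\le T$ we have $\snorm{1}{\bm{w}_a-\bm{w}_b}=\sum_i(w^{(i)}_a-w^{(i)}_b)$, and this quantity is sub-additive over nested/consecutive intervals; consequently, for any collection of pairwise-disjoint sub-intervals of $[1,T]$, the corresponding values of $\snorm{1}{\bm{w}_a-\bm{w}_b}$ sum to at most $2\eps$.

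Next I would set up a segment tree on the rounds: the root is $[1,T]$; a node $[a,b]$ with $a<b$ has children $[a,m]$ and $[m+1,b]$ with $m=\lfloor(a+b)/2\rfloor$; leaves are singletons; the depth is at most $L:=\lceil\log_2 T\rceil$. Fix a leaf $t$ with root-to-leaf path $[1,T]=I_0\supseteq I_1\supseteq\cdots\supseteq I_{d_t}=\{t\}$, $d_t\le L$, and let $r_j:=\mathrm{right}(I_j)$, so $T=r_0\ge r_1\ge\cdots\ge r_{d_t}=t$. Telescoping over $j$ and applying Cauchy--Schwarz to the $d_t\le L$ summands gives $\snorm{2}{\mu(\bm{w}_t,x^{(1:n)}_t)-\mu(\bm{w}_T,x^{(1:n)}_t)}^2 \le L\sum_{j=0}^{d_t-1}\snorm{2}{\mu(\bm{w}_{r_{j+1}},x^{(1:n)}_t)-\mu(\bm{w}_{r_j},x^{(1:n)}_t)}^2$. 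Summing over all leaves $t$ and regrouping the double sum by the child node $I_{j+1}(t)$, a fixed non-root node $J=[a',b']$ with parent $P=[a,b]$ is charged $\sum_{t\in J}\snorm{2}{\mu(\bm{w}_{b'},x^{(1:n)}_t)-\mu(\bm{w}_b,x^{(1:n)}_t)}^2$. Every $t\in J$ satisfies $t\le b'$, so these blocks are among the first $b'$ blocks, and since the square of the concatenated-prefix mean difference equals the sum of the block-wise squared differences it dominates this sum; hence it is at most $\snorm{2}{\mu(\bm{w}_{b'},\bar{\bm{x}}^{(1:n)}_{b'})-\mu(\bm{w}_b,\bar{\bm{x}}^{(1:n)}_{b'})}^2 = O(\delta^2/\eps)\,\snorm{1}{\bm{w}_{b'}-\bm{w}_b}$ by Lemma~\ref{lemma:bounded-second-difference}, which is $0$ whenever $J$ is the right child of $P$ (then $b'=b$).

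Finally I would sum over nodes by depth. Each internal node $P$ contributes, through its unique left child, $O(\delta^2/\eps)\,\snorm{1}{\bm{w}_{m(P)}-\bm{w}_{\mathrm{right}(P)}}$, a weight movement over the interval $[m(P),\mathrm{right}(P)]\subseteq P$; internal nodes at a fixed depth are pairwise disjoint intervals, so by the sub-additivity observation the total contribution at any single depth is $O(\delta^2/\eps)\cdot O(\eps)=O(\delta^2)$. With at most $L$ relevant depths, the regrouped double sum is $O(L\delta^2)$, and multiplying by the Cauchy--Schwarz factor $L$ gives $\sum_{t=1}^T \snorm{2}{\mu(\bm{w}_t,x^{(1:n)}_t)-\mu(\bm{w}_T,x^{(1:n)}_t)}^2 = O(L^2\delta^2)=O(\delta^2\log^2 T)$, as claimed. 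I expect the only genuinely delicate point to be the geometry of this decomposition: the error from Lemma~\ref{lemma:bounded-second-difference} at a node $P$ must be charged to the weight moved over the \emph{right half} of $P$ rather than over a suffix reaching all the way to round $T$, so that the charges within each depth live on disjoint intervals and share the single global budget $O(\eps)$; making the regrouping of the telescoped double sum line up with this is the crux of the argument, while everything else is routine.
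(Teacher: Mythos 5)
Your proposal is correct: the bookkeeping (monotone weights, total movement $O(\eps)$ via Lemma~\ref{lem: less_mass}), the telescoping along root-to-leaf paths of a dyadic segment tree, the Cauchy--Schwarz step with factor $\lceil\log_2 T\rceil$, the per-node charge via the prefix identity and Lemma~\ref{lemma:bounded-second-difference}, and the disjointness-per-depth accounting against the single $O(\eps)$ budget all go through, and together they give the claimed $O(\delta^2\log^2 T)$ bound. The paper proves the same statement by a closely related but mechanically different route: it runs a strong induction on the length of the interval $[a,b]$, proving the quantitative bound $\sum_{t=a}^b\|\mu(\bm{w}_t,x^{(1:n)}_t)-\mu(\bm{w}_b,x^{(1:n)}_t)\|_2^2\leq C\log^2(1+b-a)(\|\bm{w}_a\|_1-\|\bm{w}_b\|_1)\delta^2/\eps$, splitting at the midpoint $c$, re-referencing the left half from $\bm{w}_c$ to $\bm{w}_b$ via the triangle inequality, and absorbing the resulting cross term $2\sqrt{\alpha\beta}$ by choosing $\sqrt{C}\geq 3\sqrt{C_0}$. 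Both arguments exploit the identical dyadic structure and the same two lemmas; the difference is where the two $\log T$ factors are paid. In the paper's induction they emerge from the recursion (each halving costs ``$\log(1+b-a)-1/2$'' plus a cross term), whereas in your version one factor comes from Cauchy--Schwarz over the path length and the other from summing $O(\delta^2)$ per tree level. Your charging argument avoids the induction-hypothesis constant juggling and makes explicit which weight movement pays for which error (the right half of each node), which is arguably more transparent; the paper's recursion keeps the stronger interval-local statement (error on $[a,b]$ controlled by the weight moved on $[a,b]$ alone), which is what you reconstruct implicitly through the disjointness-per-level step. The only genuinely delicate point you flag --- charging each node to the weight moved over its right half rather than a suffix reaching $T$ --- is exactly the right one, and your regrouping handles it correctly since right children contribute zero.
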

\begin{proof}
{We begin by reviewing a few key facts about our algorithm.

Firstly, we note that, since the algorithm only decreases weights it will be the case that $w^{(i)}_1 \geq w^{(i)}_2 \geq \ldots \geq w^{(i)}_T$, for any $i$, and in particular $\|\bm{w}_t - \bm{w}_{t'}\|_1 = \|\bm{w}_t\|_1 - \|\bm{w}_{t'}\|_1$ for any $1\leq t \leq t' \leq T$. It also follows that $\|\bm{w}_1\|_1 \geq \|\bm{w}_2\|_1 \geq \ldots \|\bm{w}_T\|_1$.

Secondly, we note that by Lemma \ref{lem: less_mass} our algorithm removes more mass from bad elements than good. Since the initial mass of the bad elements is only at most $\eps$, this implies $\|\bm{w}_1\|_1 - \|\bm{w}_T\|_1 = O(\eps).$

Finally, we note by Lemma \ref{lemma:bounded-second-difference} that for all $t'>t$ and sufficiently large $C_0$ 
\begin{equation}
\label{vector difference bound equation}
\sum_{l=1}^{t} \snorm{2}{ \mu(\bm{w}_t, {{x}}^{(1:n)}_l) - \mu(\bm{w}_{t'}, {{x}}^{(1:n)}_{l})}^2 = \snorm{2}{ \mu(\bm{w}_t, \bar{\bm{x}}^{(1:n)}_t) - \mu(\bm{w}_{t'}, \bar{\bm{x}}^{(1:n)}_{t})}^2
    \leq C_0 \cdot \frac{\delta^2}{\epsilon}  \cdot
    (\|\bm{w}_t\|_1- \|\bm{w}_{t'}\|_1).
\end{equation}

Our goal will now be to prove that for any two rounds $a < b$ and any sufficiently large $\sqrt{C}>3\sqrt{C_0}$ that:
\begin{equation}
\label{recursive bound equation}
\sum_{t=a}^b \|\mu(\bm{w}_t, {{x}}^{(1:n)}_t) - \mu(\bm{w}_b, {{x}}^{(1:n)}_t) \|_2^2 \leq C \lp( \log^2(1+b-a) \cdot (\| \bm{w}_a\|_1 - \|\bm{w}_b \|_1) \cdot  \frac{\delta^2}{\eps} \rp).
\end{equation}
In particular, we will prove this by strong induction on $b-a$.

The base case here is when $b=a+1$, in which case, Equation \eqref{recursive bound equation} follows immediately from Equation \eqref{vector difference bound equation}.
For the inductive step, consider $c=\lfloor (a+b)/2\rfloor$. Then, we note that
$$
\sum_{t=a}^b \|\mu(\bm{w}_t, {{x}}^{(1:n)}_t) - \mu(\bm{w}_b, {{x}}^{(1:n)}_t) \|_2^2 
\hspace{-0.1em}
= 
\hspace{-0.1em}
\sum_{t=a}^c \|\mu(\bm{w}_t, {{x}}^{(1:n)}_t) - \mu(\bm{w}_b, {{x}}^{(1:n)}_t) \|_2^2 + 
\hspace{-0.5em}
\sum_{t=c+1}^b \|\mu(\bm{w}_t, {{x}}^{(1:n)}_t) - \mu(\bm{w}_b, {{x}}^{(1:n)}_t) \|_2^2.
$$
By the inductive hypothesis, we bound the second term by
$$
C \lp( \log^2(1+b-c) \cdot (\| \bm{w}_c\|_1 - \|\bm{w}_b \|_1) \cdot \frac{\delta^2}{\eps} \rp) \leq C \lp( (\log(1+b-a)-1/2)^2 \cdot (\| \bm{w}_c\|_1 - \|\bm{w}_b \|_1) \cdot \frac{\delta^2}{\eps} \rp).
$$
By triangle's inequality, the first term is at most
\begin{align}
\label{term_A}
\sum_{t=a}^c \left(\|\mu(\bm{w}_t, {{x}}^{(1:n)}_t) - \mu(\bm{w}_c, {{x}}^{(1:n)}_t) \|_2 + \|\mu(\bm{w}_c, {{x}}^{(1:n)}_t) - \mu(\bm{w}_b, {{x}}^{(1:n)}_t) \|_2\right)^2.
\end{align}
For convenience, we will denote
\begin{align*}
\alpha &= \sum_{t=a}^c\|\mu(\bm{w}_t, {{x}}^{(1:n)}_t) - \mu(\bm{w}_c, {{x}}^{(1:n)}_t) \|_2^2 \\
\beta &= \sum_{t=a}^c\|\mu(\bm{w}_c, {{x}}^{(1:n)}_t) - \mu(\bm{w}_b, {{x}}^{(1:n)}_t) \|_2^2. 
\end{align*}
Then, it is easy to see that Equation~\eqref{term_A} is at most $\alpha+\beta+2\sqrt{\alpha\beta}$.

Our inductive hypothesis tells us that
$$
\alpha \leq C \lp( \log^2(1+c-a) \cdot (\| \bm{w}_a\|_1 - \|\bm{w}_c \|_1) \cdot \frac{\delta^2}{\eps} \rp) \leq C \lp( (\log(1+b-a)-1/2)^2 \cdot (\| \bm{w}_a\|_1 - \|\bm{w}_c \|_1) \cdot \frac{\delta^2}{\eps} \rp),
$$
and Equation \eqref{vector difference bound equation} tells us that
$$
\beta \leq C_0 \cdot \frac{\delta^2}{\epsilon} \cdot (\|\bm{w}_a\|_1- \|\bm{w}_{c}\|_1).
$$
Thus, combining the above, we find that $\sum_{t=a}^b \|\mu(\bm{w}_t, {{x}}^{(1:n)}_t) - \mu(\bm{w}_b, {{x}}^{(1:n)}_t) \|_2^2$, as desired, is at most
\begin{align*}
& C \lp( (\log(1+b-a)-1/2)^2 \cdot (\| \bm{w}_c\|_1 - \|\bm{w}_b \|_1) \cdot \frac{\delta^2}{\eps} \rp) \\
&  + C \lp( (\log(1+b-a)-1/2)^2 \cdot (\| \bm{w}_a\|_1 - \|\bm{w}_c \|_1) \cdot \frac{\delta^2}{\eps} \rp) + \beta + 2\sqrt{\alpha\beta}\\
& \leq (\delta^2/\eps) \cdot \lp(\| \bm{w}_a\|_1 - \|\bm{w}_b \|_1\rp)\left( C(\log^2(1+b-a)-\log(1+b-a))+C_0+2\sqrt{CC_0 \log^2(1+b-a) } \right)\\
& \leq (\delta^2/\eps) \cdot \lp(\| \bm{w}_a\|_1 - \|\bm{w}_b \|_1\rp)\left(C\log^2(1+b-a) - C\log(1+b-a) +3\log(1+b-a)\sqrt{CC_0} \right)\\
& \leq C \lp( \log^2(1+b-a) \cdot \lp(\| \bm{w}_a\|_1 - \|\bm{w}_b \|_1\rp) \cdot \delta^2/\eps \rp).
\end{align*}
 Note that the last line above depends on the selection that $\sqrt{C} \geq 3\sqrt{C_0}$.
This completes our proof of Equation \eqref{recursive bound equation}, and plugging in $a=1$ and $b=T$ completes the proof of this lemma.
}
\end{proof}

Finally, with all the above preparation, we can prove the statement in Theorem \ref{thm:filter_main}. First, from Lemma \ref{lem: less_mass}, the weights do not increase and once the second moment criterion is not satisfied, more mass will be removed from the adversary side. Therefore, given the bounded $\epsilon$ budget for the adversary, Algorithm \ref{alg: eff_filter} will finally terminate to find $\bm{w}_T$ such that the empirical covariance $\|\Sigma(\bm{w}_T)-\bm{I}\|\leq \lambda$, for some sufficiently large $\lambda \geq \kappa \delta^2/\epsilon$. 
Moreover, it must terminate in at most $\eps \cdot n$ filter iterations since the weights of at least $1$ point becomes $0$ after each filter iteration.
Then, by Lemma \ref{lem: signature}, it holds
\begin{align}
    \sum_{t=1}^T \|\mu(\bm{w}_T,
 {x}^{(1:n)}_t) - \mu^*_t\|^2 = \|\mu(\bm{w}_T, \bm{\bar{x}}^{(1:n)}_T) -\bm{\mu}^* \|^2 \leq O(\delta^2).
 \label{final-T-error}
\end{align}
{Combining} Lemma \ref{lem:T-difference} with Equation~\eqref{final-T-error}, we then have
\begin{align*}
    \sum_{t=1}^T \|\mu(\bm{w}_t,
 {x}^{(1:n)}_t) - \mu^*_t\|^2 & =  \sum_{t=1}^T \|\mu(\bm{w}_t,
 {x}^{(1:n)}_t) - \mu(\bm{w}_T,  {x}^{(1:n)}_t)+ \big(\mu(\bm{w}_T,  {x}^{(1:n)}_t) - \mu^*_t \big)\|^2\\
 & \leq 2\big(\sum_{t=1}^T \|\mu(\bm{w}_t,
 {x}^{(1:n)}_t) - \mu(\bm{w}_T,  {x}^{(1:n)}_t)\|^2+ \|\mu(\bm{w}_T,  {x}^{(1:n)}_t) - \mu^*_t\|^2\big)\\
 & = O(\delta^2+{\delta^2}\log^2T).
\end{align*}
Thus, Theorem \ref{thm:filter_main} follows.

\begin{remark}
{\em Throughout the analysis, we relied on only two properties of the filter algorithm: (i) the algorithm at each step filters more corrupted sample points than  uncorrupted ones, and (ii) the filter in each round terminates when the rest of samples truncated to the coordinates revealed so far have bounded covariance. 
We note that the extra $\log T$ factor in our analysis of the filter algorithm in Theorem~\ref{thm:filter_main} is nearly tight if one does not leverage any additional property of the filtering algorithm. 
In particular, consider a special case of the theorem where the clean set $C$ is a set that has its covariance bounded by some constant multiple of $I$.
Then, let $X^{(1)}, \cdots X^{(T)}$ be sets of samples such that $X^{(t)}$ represents the samples kept by the filter algorithm until the $t$-th round (assume the filter algorithm always sets the weight of a sample to be either $0$ or $1$).
We show that if $X^{(1)}, \cdots X^{(T)}$ are sets satisfying only properties (i) and (ii), the output of the algorithm can incur an $\ell_2$ error as large as $\Omega( \eps \log T )$.
For more details of this, see Appendix~\ref{sec:lower-bound}.}
\end{remark}

\section{Optimal Error for Product Distributions} \label{sec:gen-product}
In this section, we establish Theorem~\ref{thm:products-intro}.
We start 
with the special case of binary product distributions 
(Section~\ref{sec:binary-prod})  
and develop on optimal error (inefficient) algorithm in this setting. 
We then show that our upper bound for binary products
can be used as a subroutine to perform optimal error 
online robust mean estimation 
for more general families of product distributions, including
identity covariance Gaussians (Section~\ref{ssec:gaussian}) 
and product distributions whose coordinates can even 
come from nonparametric families, 
as long as they satisfy mild concentration properties (Section~\ref{ssec:nonpar}).

\subsection{Binary Product Distributions} \label{sec:binary-prod}
In this section, we present an algorithm which robustly estimates 
the mean of binary product distributions in the online setting 
and achieves the optimal accuracy.
As we will see in proceeding sections, the algorithm can also be used 
as a building block to obtain online mean estimations 
for many other important families of distributions.

For this purpose, it is useful to consider binary product distributions 
whose coordinate-wise means are uniformly bounded 
by some constant $\gamma \in (0,1)$. We will call such distributions 
``$\gamma$-bounded'' binary product distributions.

\begin{definition}[$\gamma$-Bounded Binary Product Distribution] \label{def:gamma-bounded}
Let $X$ be a distribution on the boolean hypercube $\{0,1\}^{M}$. 
We say $X$ is a binary product distribution if its coordinates are mutually independent. 
Additionally, it is $\gamma$-bounded if each coordinate $X_i$ 
satisfies $\E[X_i] \leq \gamma$ for $i \in [M]$.
\end{definition}

We briefly discuss robust mean estimation of such distributions in the offline model.
When $\eps$-fraction of the samples are generated adversarially, 
it is possible to approximate the mean within $\ell_2$ distance $O(\eps)$ 
for any binary product distribution (not necessarily $\gamma$-bounded).
Importantly, this is information-theoretically optimal, 
in the sense that no algorithm can distinguish between 
two binary product distributions whose means differ by $\Omega(\eps)$ 
given a dataset \emph{of any size} such that $\eps$-fraction of the data points are corrupted.

However, when we have the extra condition that the mean 
of the unknown distribution is coordinate-wise bounded by $\gamma$, 
for some $\gamma < \eps$, it turns out we can take advantage 
of the condition to improve our estimation accuracy. 
In particular, it can be shown that any two $\gamma$-bounded 
binary product distributions within total variation distance $\eps$ 
have their mean differ by at most $\sqrt{\eps \gamma}$ in $\ell_2$-distance.
Hence, it is possible to estimate the means of $\gamma$-bounded 
binary product distributions up to accuracy $O( \min( \eps, \sqrt{\eps \gamma}))$ in the offline model. 
Without too much extra effort, it is easy to see the accuracy is also information-theoretically optimal.
As the main theorem of the section, we show this optimal accuracy 
is still achievable in the online setting.

In the following section, we restrict our attention to the setting 
when only one coordinate is revealed in each round, i.e.,
$d = 1$ and $M = T$.
This is a strictly harder setting, 
as we can always simulate the process of revealing the coordinates one 
at a time even when $d > 1$. 
Hence, in the rest of this section, 
we will always have $d = 1$ and the unknown distribution $X$ 
is always a $T$-dimensional distribution.

\begin{theorem} \label{lem:binary-product-estimation}
Let $\eps, \gamma, \tau \in (0,1)$.
Suppose $X$ is a $T$-dimensional $\gamma$-bounded binary product distribution. 
For sufficiently small $\epsilon$,
there exists an algorithm \textbf{Binary-Product-Estimation} which robustly estimates 
the mean of $X$ under $\eps$ corruption in the online setting 
with error $O( \min(\eps, \sqrt{\gamma \eps}) )$, 
failure probability $\tau$, and sample complexity 
$$
n \geq 2^T \cdot \poly(T, 1/\eps, 1/\gamma) \cdot \log(1/\tau) \;.
$$
\end{theorem}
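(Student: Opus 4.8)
The plan is to analyze the recursive‑partitioning estimator outlined in \Cref{ssec:techniques}; throughout I take $d=1$ and $M=T$. At the start of round $t$ the algorithm \textbf{Binary-Product-Estimation} holds the partition of the corrupted sample set into the (at most $2^{t-1}$) \emph{cells} determined by the values of the first $t-1$ revealed coordinates; on seeing coordinate $t$ it computes, for each cell $c$, the empirical mean $\widehat m_{t,c}$ of the $t$‑th coordinate over the points of $c$, and outputs $\mu_t$ equal to the $|c|$‑weighted median of $\{\widehat m_{t,c}\}_c$. First I would dispose of sampling error: with $n\ge 2^T\poly(T,1/\eps,1/\gamma)\log(1/\tau)$, a Bernstein bound together with a union bound over the at most $2^T$ cells shows that, except with probability $\tau$, every cell whose true probability exceeds a suitable inverse‑$\poly(T,1/\eps,1/\gamma)$ threshold contains enough clean points that its clean coordinate‑frequencies deviate from their expectations by a negligible amount compared with the target error, while the cells below the threshold together carry total probability mass below any desired inverse polynomial. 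From here on I treat these frequencies as exact and the sub‑threshold cells as negligible.

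For a cell $c$, write $h_c,a_c,m_c=h_c+a_c$ for its numbers of surviving clean, inserted, and total points, $w_c=m_c/n$ for its relative size, and $\rho_c=a_c/m_c$ for its inserted density. The heart of the argument is a potential on the current partition,
\[
\Phi_t=\sum_{c\ \text{cell at round }t}w_c\,\psi(\rho_c),
\]
where $\psi:[0,1]\to\mathbb{R}_{\ge0}$ is convex, equals $\rho^2$ on $[0,\rho_0]$, and is extended by its tangent line at $\rho_0$ for $\rho>\rho_0$, with $\rho_0$ a small constant multiple of $\eps$; in particular $\psi(\rho)\le 2\rho_0\,\rho$ for all $\rho$. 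Two facts are immediate. (i) $\Phi$ is non‑decreasing: when $c$ splits along coordinate $t$ into children $c_0,c_1$ one has $w_c=w_{c_0}+w_{c_1}$ and $w_c\rho_c=w_{c_0}\rho_{c_0}+w_{c_1}\rho_{c_1}$, so Jensen's inequality for $\psi$ shows the contribution of $c$ does not drop. (ii) $\Phi$ is globally bounded: since $\sum_c w_c\rho_c=|\mathcal{X}\setminus\mathcal{H}|/n\le\eps$, we get $\Phi_t\le 2\rho_0\eps=O(\eps^2)$ in every round. The cap in $\psi$ is essential for (ii): without it an adversary that concentrates all inserted points into one cell could push $\sum_c w_c\rho_c^2$ up to $\Theta(\eps)$, which would yield only error $O(\sqrt\eps)$.

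The key lemma is a lower bound on the jump of $\Phi$ in terms of the round error $\eta_t:=\|\mu_t-\mu_t^\ast\|$, after isolating the part of $\eta_t$ caused by deleted clean points. By the weighted‑median property, if the error is $\eta_t$ then cells with $|\widehat m_{t,c}-\mu_t^\ast|\ge\eta_t$ have total relative size $\ge 1/2$; discarding the sub‑threshold cells and the cells with $\rho_c>\rho_0$ (the latter have total size $O(\eps/\rho_0)=O(1)$, a small constant), there remains a set $\mathcal{D}_t$ of cells of total size $\ge 1/6$ with $\rho_c\le\rho_0$ and cell‑error $\ge\eta_t$. For $c\in\mathcal{D}_t$ the cell‑error splits as $(1-\rho_c)\,\mathrm{bias}_c+\rho_c(\zeta_c-\mu_t^\ast)$, where $\zeta_c$ is the coordinate‑$t$ frequency among the inserted points of $c$ and $\mathrm{bias}_c$ is the deviation of the surviving‑clean coordinate‑$t$ frequency in $c$ from $\mu_t^\ast$ (nonzero only because of deletions). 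The inserted part forces the adversary to split $c$'s inserted points into the children in proportions $\zeta_c:(1-\zeta_c)$ differing from the clean proportions $\mu_t^\ast:(1-\mu_t^\ast)$; computing the Jensen gap of $\psi$ along this split contributes $\gtrsim w_c\,\rho_c^2(\zeta_c-\mu_t^\ast)^2/\big(\mu_t^\ast(1-\mu_t^\ast)\big)$ when $\zeta_c$ is not extreme and $\gtrsim w_c\,\rho_c(\zeta_c-\mu_t^\ast)$ when it is, so summing over $\mathcal{D}_t$ gives $\Phi_{t+1}-\Phi_t\gtrsim(\eta_t^{\mathrm{ins}})^2$ and, using $\mu_t^\ast(1-\mu_t^\ast)\le\gamma$, also $\Phi_{t+1}-\Phi_t\gtrsim(\eta_t^{\mathrm{ins}})^2/\gamma$ when $\eta_t^{\mathrm{ins}}\le\gamma$ (rounds with $\eta_t^{\mathrm{ins}}>\gamma$ need a separate, more delicate argument but are scarce). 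The deletion part is bounded globally: the $(1-\rho_c)$‑weighted average of $\{\mathrm{bias}_c\}_c$ telescopes to $(|\mathcal{H}|/n)(\mu^{\mathcal{H}}_t-\mu^\ast_t)$, whose $\ell_2$‑norm over the rounds is $O(\min(\eps,\sqrt{\gamma\eps}))$ by the offline bound for $(1-\eps)$‑subsets of an i.i.d.\ binary‑product sample, and since $\sum_c w_c|\mathrm{bias}_c|=O(\eps)$ over the low‑density cells the weighted median tracks this average closely. Putting the pieces together, $\sum_t\eta_t^2\le O(\gamma)\cdot\Phi_T+O(\min(\eps^2,\gamma\eps))=O(\min(\eps^2,\gamma\eps))$, i.e.\ $\|\mu-\mu^\ast\|_2=O(\min(\eps,\sqrt{\gamma\eps}))$, which is the claimed bound; the failure probability $\tau$ is the one from the concentration step.

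The step I expect to be the real obstacle is this potential‑increment lemma. One must choose $\psi$ — in particular where to place the cap — so that global boundedness, monotonicity, and a jump quadratic in $\eta_t$ hold simultaneously, and then carry out the Jensen‑gap computation carefully across the regimes where a child's density crosses the cap, where $\mu_t^\ast$ lies far below $\rho_c$, and where $\zeta_c$ is near an endpoint. Equally delicate is making rigorous the split between the error ``charged to the potential'' (inserted points) and the error ``charged to the offline deletion bound'' (deleted points): in the matching lower bound essentially all of the error comes from deletions while the potential never moves, so the two bounds must be combined rather than either one used alone.
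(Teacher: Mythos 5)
Your overall architecture (prefix-based cells, per-cell means, size-weighted median, a convex potential in the cell-wise adversarial densities whose Jensen gap pays for each round's error) is the same as the paper's, but the quantitative design of the potential is wrong, and the error is not cosmetic. You cap the quadratic $\psi$ at $\rho_0=\Theta(\eps)$, which gives the global bound $\Phi_t\le 2\rho_0\eps=O(\eps^2)$; but then the claimed increment $\Phi_{t+1}-\Phi_t\gtrsim(\eta_t^{\mathrm{ins}})^2/\gamma$ cannot hold. Indeed, if both ingredients were true you would conclude $\sum_t\eta_t^2\le O(\gamma)\Phi_T=O(\gamma\eps^2)$, i.e.\ error $O(\eps\sqrt{\gamma})$, strictly below the information-theoretic limit $\Omega(\min(\eps,\sqrt{\gamma\eps}))$. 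Concretely, in the regime $\gamma\lesssim\eps$ (the only regime where the $\sqrt{\gamma\eps}$ refinement matters), a cell whose estimate is biased by $\eta$ has its ``$1$''-child carrying adversarial density of order $(\eta+\gamma\rho_c)/(\eta+\gamma)$, which is of order $\eps/\gamma$ or even constant --- far above your cap $c\eps$. The Jensen gap is then computed in the linear regime of $\psi$, where it is only $O(\eps\eta)$ per unit of cell weight, not $\Omega(\eta^2/\gamma)$; equivalently, a single round with all adversarial mass on $1$'s would require a potential jump of order $\eps^2/\gamma$, exceeding your global cap $O(\eps^2)$. The paper's $g_\gamma$ places the kink at $10\eps/\gamma$ precisely so that the child densities of the relevant (low-density, well-populated) groups stay inside the $2$-strongly-convex region (Lemma~\ref{lem:potential-increment-var}), and the resulting global bound is $O(\min(\eps,\eps^2/\gamma))$ (Claim~\ref{lem:modified-potential-bound}), which after multiplying by $O(\gamma)$ gives exactly $O(\min(\gamma\eps,\eps^2))$.

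A second, related gap is in the algorithm itself: you output the raw cell means, without capping them at $\gamma$ and without the paper's preliminary re-randomization forcing $\mu_t^\ast\in[\gamma/4,3\gamma/4]$. Without the cap, already in round $1$ (a single cell) the adversary shifts the empirical mean by $\approx\eps(1-\mu_1^\ast)$, so when $\gamma<\eps$ the total error is $\Omega(\eps)>\sqrt{\gamma\eps}$ and the stated bound is unattainable for your estimator as described. The re-centering is also what guarantees that the ``$1$''-child carries clean mass $\approx\mu_t^\ast\ge\gamma/4$, which is used both to bound child densities by $O(\eps/\gamma)$ and to dispose of your ``$\zeta_c$ extreme''~/~``$\eta_t^{\mathrm{ins}}>\gamma$'' cases (the paper's Case~I uses $\mu_t,\mu_t^\ast\le\gamma$, hence $\eta\le\gamma$); you defer exactly these cases as ``scarce'' and ``more delicate,'' but they are where the work lies. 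Your insertion/deletion decomposition, with deletions charged to an offline bound, is a reasonable (arguably more explicit) alternative to the paper's per-group concentration lemma, but with the potential capped at $\Theta(\eps)$ and the estimator uncapped, the core argument as proposed does not go through.
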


\begin{algorithm}[ht] 
\caption{Binary-Product-Estimation}
\label{alg:binary-product-estimation}
\begin{algorithmic}[1]
\STATE \textbf{Input:} The number of samples $n$, Byzantine fraction $\epsilon$, round number $T$, sample coordinates $\{x^{(1)}_{t}\, \cdots x^{(n)}_t\}$ revealed at $t$-th iteration for $t=1,2,\cdots,T$, bound parameter $\gamma$.
\STATE Initialize the group $S_0^{(1)} = [n]$.
\FOR{$t=1,2,...,T$}
   \STATE In the $t$-th round, $x_t^{(1)}, \cdots, x_t^{(n)}$ are revealed.
   \FOR{$i=0,\cdots,2^{t-1}-1$}
   \STATE Compute the group estimation
   $\mu_{t}^{(i)} \eqdef \min \lp(   \gamma, \frac{1}{ |S_i^{(t)}| }\sum_{j \in S_i^{(t)}  } x^{(j)}_{t}  \rp)$.
   \STATE Split to create the child groups.
   $$S_{2 \cdot i}^{(t+1)} = \{ j \in S_i^{(t)} \text{ such that } x_t^{(j)} = 0   \} \, ,
   S_{2\cdot i + 1}^{(t+1)} = \{ j \in S_i^{(t)} \text{ such that } x_t^{(j)} = 1  \}.
   $$
   \ENDFOR
   \STATE Set $\mu_t$ to be the weighted median over $\mu_t^{(i)}$ where the weights are given by $\abs{ S_i^{(t)} }$.
    \STATE \textbf{Output}: $\mu_t$.
\ENDFOR 
\end{algorithmic}
\end{algorithm}

\paragraph{Preliminary Simplification} 
We will manually add noise to the samples in the following manner. At the $t$-th round, for each sample $i \in [n]$, we change $x_t^{(i)}$ to $1$ with probability $\gamma / 4$, to $0$ with probability $1/2 - \gamma/4$ and leaves it unchanged otherwise. Then, the samples after the preprocessing can be viewed as \iid~samples drawn from another binary product distribution $X'$ satisfying that $\E[X'_t] = \E[X_t]/2 + \gamma/4$. It is easy to see that then we have $\E [X'_t] \in [ \gamma/4, 3 \cdot \gamma/4 ]$. Furthermore, if our algorithm outputs $\mu'$ such that $\snorm{2}{\mu' - \E[X'] }  \leq \xi$. Then, we can easily compute $\mu$ where $\mu_t \eqdef 2 \cdot \mu'_t - \gamma/2$ such that $\snorm{2}{ \mu - \E[X]} \leq O(\xi)$. Hence, with this preprocessing step, we will assume without loss of generality that the unknown distribution $X$ satisfies $\E[X_t] \in [\gamma/4, 3 \cdot \gamma/4]$.

\paragraph{Main Algorithm} At the beginning of the $t$-th round, the algorithm divides the samples into at most $2^{t-1}$ groups based on the $0,1$ patterns of the past observations. In particular, the group $S_i^{(t)}$ consists of all samples of index $j$ satisfying $\bar x^{(j)}_{t-1} = \Binary(i)$. Within each group $i \in [2^{t-1}]$, we compute the group estimation  $\mu^{(i)}_{t} \eqdef \min \lp(   \gamma, \frac{1}{ |S_i^{(t)}| }\sum_{j \in S_i^{(t)}  } x^{(j)}_{t}  \rp)$, which is essentially the empirical mean within the group capped by $\gamma$ - the known upper bound for the true mean $\mu_t^*$. 
Then, we will compute the weighted median $\mu_t$, i.e. the median of the distribution $U$ such that $\Pr \lp[  U = \mu_{t}^{(i)} \rp] \propto \abs{ S_i^{(t)} }$.

At a high level, our algorithm relies on the following simple but useful fact. 
Let $\mathcal C$ be the set of clean samples and $\mathcal C_i^{(t)} \eqdef S_i^{(t)} \cap \mathcal C$ denote the set of clean samples within the group $S_i^{(t)}$. Then, the empirical mean of $S_i^{(t)}$ can only be far from $\mathcal C_i^{(t)}$ if there are far more adversarial samples (in $S_i^{(t)}$) having one label than the other. In other words, the adversarial samples needs to be allocated unevenly among the child group branched off from $S_i^{(t)}$ for the sample mean of $S_i^{(t)}$ to be severely corrupted.
As a result, if we were going to compute the sample means of the two child groups in the $(t+1)$-th round, one of them will be ``cleaner'' as it is less affected by the adversaries.
In long term, if the adversaries keep corrupting the sample mean of each group, the adversarial samples will get increasingly concentrated within a small fraction of groups, leaving the
sample means of the vast majority of groups relatively uncorrupted.
Though we cannot necessarily identify the cleaner groups, we can nonetheless take the median of the sample means of all groups, and the estimator will get increasingly more reliable in the future rounds as it incurs more errors in the past rounds.

 We now outline the proof which formalizes the above high-level intuition.
 First, we show that the empirical mean of the clean samples within each group are well-concentrated around the true mean (\Cref{lem:concentration}). Condition on that, we then formally spell out our observation of the relationship between the errors of the estimation of a group and the distribution of adversarial samples among its child groups and show its correctness. (\Cref{lem:eps-deviation}).
Finally, we define a potential function which intuitively measures how ``concentrated'' the adversarial samples are and couple it with the error guarantees of the algorithm (\Cref{lem:potential-increment-var}).

For the mean estimation task to be possible even without the interference of the adversaries, we need the mean of the clean samples is at least well-concentrated around the true mean. Since our algorithm breaks the samples into many groups, we require the empirical mean of the clean samples within each group to be sufficiently accurate. We show this is true with high probability.
\begin{lemma} \label{lem:concentration}
Let $\hat \mu^{(i)}_t$ be the empirical mean of the group $S_i^{(t)}$ computed from only the clean samples.
In particular, let $\mathcal C$ denote the set of un-corrupted samples. 
We define $ \hat \mu^{(i)}_t \eqdef 
\frac{1}{ \abs{ \mathcal C \cap S_i^{(t)} } }
\sum_{ i \in \mathcal C \cap S_i^{(t)}  } 
x_t^{(i)}.
$
Assume that $n \geq 2^T \cdot \poly(T, 1/\eps, 1/\gamma) \cdot \log(1/\tau)$. With probability at least $1-\tau$, for all $t$ and any group satisfying that 
$ |S_i^{(t)} \cap \mathcal C | \geq n \cdot \eps / 2^{t-1}$
,  it holds $\abs{\hat \mu^{(i)}_t - \mu_t^*  } \leq \min \lp( \eps, \gamma \rp) / T$.
\end{lemma}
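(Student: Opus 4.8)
The plan is to use a standard concentration bound over all groups, taking a union bound across the (at most $2^T$) possible groups and the $T$ rounds. First I would fix a round $t$ and a group $S_i^{(t)}$, which corresponds to a fixed binary string $\Binary(i)$ of length $t-1$. The key observation is that which clean samples land in $\mathcal{C} \cap S_i^{(t)}$ is determined by the values of the first $t-1$ coordinates of those samples, and since $X$ is a product distribution, \emph{conditioned on} a clean sample landing in this group, its $t$-th coordinate $x_t^{(i)}$ is still an independent draw from $X_t$ (with mean $\mu_t^* \in [\gamma/4, 3\gamma/4]$ after the preliminary simplification). Thus $\hat\mu_t^{(i)}$ is an empirical average of $|\mathcal{C} \cap S_i^{(t)}|$ i.i.d.\ Bernoulli-type random variables with mean $\mu_t^*$.

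Next I would apply a Chernoff/Hoeffding bound. For a group with $m := |S_i^{(t)} \cap \mathcal{C}| \geq n\eps/2^{t-1}$ clean samples, the probability that $|\hat\mu_t^{(i)} - \mu_t^*| > \min(\eps,\gamma)/T$ is at most $2\exp(-\Omega(m \cdot \min(\eps,\gamma)^2 / (T^2 \mu_t^*)))$ by a multiplicative Chernoff bound (using $\mu_t^* \leq \gamma$ in the denominator to get a bound that scales correctly with $\gamma$; note a plain additive Hoeffding bound giving $\exp(-\Omega(m \min(\eps,\gamma)^2/T^2))$ already suffices for the stated conclusion, though the multiplicative version is tighter). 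Plugging in $m \geq n\eps/2^{t-1}$ and $n \geq 2^T \cdot \poly(T,1/\eps,1/\gamma)\cdot \log(1/\tau)$, the exponent becomes at least $2^{T-t+1} \cdot \poly(T, 1/\eps, 1/\gamma) \cdot \log(1/\tau) \cdot \Omega(\min(\eps,\gamma)^2/T^2)$, which for a sufficiently large polynomial exceeds $T + (t-1)\log 2 + \log(1/\tau)$.

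Finally I would take a union bound. For each round $t \in [T]$ there are at most $2^{t-1} \leq 2^{T-1}$ groups, so there are at most $T \cdot 2^{T}$ (round, group) pairs in total. Since each bad event has probability at most $\tau / (T 2^T)$ by the choice of $n$ above, the union bound gives that with probability at least $1-\tau$, \emph{every} group with $|S_i^{(t)} \cap \mathcal{C}| \geq n\eps/2^{t-1}$ satisfies $|\hat\mu_t^{(i)} - \mu_t^*| \leq \min(\eps,\gamma)/T$, as desired. One subtlety worth being careful about: the sets $\mathcal{C} \cap S_i^{(t)}$ are themselves random (they depend on the clean samples' earlier coordinates), so strictly speaking I would condition on the realization of the first $t-1$ coordinates of all clean samples, note that this fixes the partition into groups and their sizes, and then apply the concentration bound to the $t$-th coordinates which are independent of this conditioning; the statement ``for all groups with $|S_i^{(t)} \cap \mathcal{C}| \geq n\eps/2^{t-1}$'' should be read as holding on this conditioned event for every realization, hence unconditionally.

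The main obstacle is simply bookkeeping the union bound correctly: making sure the exponential savings of $2^{T-t+1}$ from the group-size lower bound genuinely dominates the $2^T$ cost of union-bounding over all groups (it does, barely, because a group at level $t$ with $\geq n\eps/2^{t-1}$ clean members "costs" $2^{t-1}$ in the union bound but "saves" a factor $2^{-(t-1)}$ in its size, and the worst case $t = T$ is exactly covered by the $2^T$ factor in $n$), and handling the conditioning on earlier coordinates cleanly so that the $t$-th coordinates are genuinely fresh i.i.d.\ draws. Everything else is a routine Chernoff-bound-plus-union-bound argument.
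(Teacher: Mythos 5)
Your proposal is correct and follows essentially the same route as the paper's proof: a Chernoff bound on the $t$-th coordinates of the clean samples within a fixed group (valid because the product structure makes them i.i.d.\ with mean $\mu_t^*$ given the first $t-1$ coordinates), followed by a union bound over the at most $T \cdot 2^T$ (round, group) pairs, with the sample-size assumption absorbing the exponential group count. Your explicit handling of the conditioning on earlier coordinates is in fact a bit more careful than the paper's treatment (which invokes $\Pr[A\cap B]\leq \Pr[A\mid B]$ and asserts the i.i.d.\ structure directly), but the underlying argument is the same.
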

\begin{proof}
The guarantee will be violated if there is any group such that (i) $|S_i^{(t)} \cap \mathcal C| \geq n \cdot \eps / 2^{t-1}$ and (ii) $\abs{\hat \mu^{(i)}_t - \mu_t^*  } \geq \min \lp( \eps, \gamma \rp) / T$.
Fix a group $S_i^{(t)}$, we argue the probability such that (i) and (ii) happens at the same time is small and conclude our proof with the union bound.
Since the probability $\Pr[A \cap B]$ for two events is always smaller than $\Pr[A | B]$, it suffices for us to argue that $\abs{\hat \mu^{(i)}_t - \mu_t^*  } \geq \min \lp( \eps, \gamma \rp) / T$ happens with small probability condition on $|S_i^{(t)} \cap \mathcal C| \geq n \cdot \eps / 2^{t-1}$.
Notice that under the condition, $\hat \mu^{(i)}_t$ is exactly the average of $n \cdot \eps / 2^{t-1}$ \iid~copies of a binary variable with mean $\mu_t^*$.
Then, by Chernoff bound, we easily have
$ 
\abs{\hat \mu^{(i)}_t - \mu_t^*  } \leq \min\lp(\eps, \gamma\rp)/T
$
with probability at least $1 - \tau / ( 10 T \cdot 2^T )$
since $n \cdot \eps / 2^{t-1} \geq \poly(T, 1/\eps, 1/\gamma)  \cdot \log(1 / \tau)$. 
Then, by union bound, this holds for all groups with probability at least $1 - \tau$ since there are at most $T \cdot 2^T$ many groups.
\end{proof}
The algorithm is deterministic once the samples are drawn. We will condition on the guarantee in Lemma~\ref{lem:concentration} being true in the proceeding analysis and show that the algorithm always succeeds. A quantity crucial to the analysis of the algorithm is the \emph{Adversarial Density} of each group.
\begin{definition}
At the $t$-th iteration, we define $\eps_i^{(t)}$, the \emph{adversarial density} of a group $S_i^{(t)}$, to be the fraction of adversarial samples within $S_i^{(t)}$.
\end{definition}
Consider the two child groups, $S_{L(i)}^{(t+1)}$ and $S_{R(i)}^{(t+1)}$ , branched from $S_i^{(t)}$ in the next round.
In particular, we have
$$
S_{L(i)}^{(t+1)} = \lp\{ j \in S_i^{(t)} \text{ such that } x_{t}^{(j)} = 1 \rp\} \, ,
S_{R(i)}^{(t+1)} = \lp\{ j \in S_i^{(t)} \text{ such that } x_{t}^{(j)} = 0 \rp\}.
$$
Assume that $S_i^{(t)}$ has enough clean samples ( $\abs{ S_i^{(t)} \cap \mathcal C } \geq \poly(T, 1/\eps, 1/\gamma) \cdot \log(1/\tau)$ )
such that the empirical mean $\hat \mu_t^{(i)}$ computed from the clean samples are close to $\mu_t^*$. Then, if the group estimation $\mu_t^{(i)}$ from the group $S_i^{(t)}$ is still far from the true mean $\mu_t^*$, it must be the case that the adversarial samples are distributed unevenly among the groups $S_{L(i)}^{(t+1)}, S_{R(i)}^{(t+1)}$.
We formalize the intuition in the argument below.
\begin{lemma} \label{lem:eps-deviation}
Let $S_i^{(t)}$ be a group satisfying that 
(i) $ \abs{S_i^{(t)} \cap \mathcal C} \geq n \cdot \eps / 2^{t-1}$ (ii) $\eps_i^{(t)} \leq 10 \eps$. 
Let $S_{L(i)}^{(t+1)}, S_{R(i)}^{(t+1)}$ be the two child groups branched from $S_i^{(t)}$.
Assume the group estimation $\mu_t^{(i)}$ is off by $\eta \eqdef \abs{ \mu_t^{(i)} - \mu_t^* } \geq 2 \cdot \min\lp( \eps, \gamma \rp)/ T$.
Then, if $ \abs{S_{L(i)}^{(t+1)}} / \abs{ S_{i}^{(t)} } \leq 5 \gamma $, it holds
$$
\abs{ \eps_{L(i)}^{(t+1)} -  \eps_{R(i)}^{(t+1)} }
\geq \Omega(\eta) \cdot \abs{ S_i^{(t)} } / \abs{ S_{L(i)}^{t+1} }.
$$
Otherwise, we have
$$
\abs{ \eps_{L(i)}^{(t+1)} -  \eps_{R(i)}^{(t+1)} }
\geq \Omega(1).
$$
\end{lemma}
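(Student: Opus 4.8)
The plan is to work directly with the four sub-counts produced by the split. Write $N \eqdef |S_i^{(t)}|$ and $N = N_C + N_A$, where $N_C \eqdef |S_i^{(t)} \cap \mathcal C|$ and $N_A$ is the number of adversarial points in $S_i^{(t)}$, so that $\eps_i^{(t)} = N_A/N$. Let $L_C, L_A$ (resp.\ $R_C, R_A$) be the numbers of clean, resp.\ adversarial, points of $S_i^{(t)}$ whose $t$-th coordinate equals $1$ (resp.\ $0$), so that $|S_{L(i)}^{(t+1)}| = L \eqdef L_C + L_A$ and $|S_{R(i)}^{(t+1)}| = R \eqdef R_C + R_A = N - L$. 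Put $s \eqdef L/N$ (the uncapped group mean), $\hat p \eqdef \hat\mu_t^{(i)} = L_C/N_C$ (the clean group mean), and $q \eqdef L_A/N_A$. The first step is to record two elementary identities: (a) $s = (1-\eps_i^{(t)})\hat p + \eps_i^{(t)} q$, hence $N_A(q - \hat p) = N(s - \hat p)$; and (b) since $L_A R - R_A L = L_A R_C - R_A L_C$, substituting $L_C = \hat p N_C$, $R_C = (1-\hat p)N_C$, $L_A = q N_A$, $R_A = (1-q)N_A$ gives $\eps_{L(i)}^{(t+1)} - \eps_{R(i)}^{(t+1)} = \frac{L_A R - R_A L}{LR} = \frac{N_A N_C (q - \hat p)}{LR}$. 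Combining (a) and (b),
\[
\bigl|\eps_{L(i)}^{(t+1)} - \eps_{R(i)}^{(t+1)}\bigr| \;=\; \frac{N_A N_C\,|q-\hat p|}{LR} \;=\; \frac{N_C\,N\,|s - \hat p|}{L\,R}\,.
\]
This identity is the crux: it says the gap between the child densities equals (a rescaling of) the amount by which the adversaries skew the sample mean away from the clean mean.

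Next I would feed in the two hypotheses. Hypothesis (i) lets me invoke Lemma~\ref{lem:concentration} on $S_i^{(t)}$, giving $|\hat p - \mu_t^*| = |\hat\mu_t^{(i)} - \mu_t^*| \leq \min(\eps,\gamma)/T \leq \eta/2$, where the last step uses $\eta \geq 2\min(\eps,\gamma)/T$. Hypothesis (ii), together with $\eps$ below a small absolute constant, gives $N_C = (1-\eps_i^{(t)})N \geq N/2$; and trivially $R \leq N$.

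I would then split on the two cases. If $s = L/N \leq 5\gamma$, I claim $|s - \mu_t^*| \geq \eta$: when $s \leq \gamma$ the cap is inactive, $\mu_t^{(i)} = s$, and this is an equality; when $\gamma < s \leq 5\gamma$ we have $\mu_t^{(i)} = \gamma$, and since $X$ is $\gamma$-bounded, $\mu_t^* \leq \gamma < s$, so $|s - \mu_t^*| = s - \mu_t^* > \gamma - \mu_t^* = |\mu_t^{(i)} - \mu_t^*| = \eta$. Hence $|s - \hat p| \geq |s - \mu_t^*| - |\mu_t^* - \hat p| \geq \eta - \eta/2 = \eta/2$, and plugging into the identity with $N_C \geq N/2$ and $R \leq N$ yields
\[
\bigl|\eps_{L(i)}^{(t+1)} - \eps_{R(i)}^{(t+1)}\bigr| \;\geq\; \frac{(N/2)\cdot N\cdot(\eta/2)}{L\cdot N} \;=\; \frac{\eta}{4}\cdot\frac{|S_i^{(t)}|}{|S_{L(i)}^{(t+1)}|}\,,
\]
which is $\Omega(\eta)\cdot|S_i^{(t)}|/|S_{L(i)}^{(t+1)}|$, as required. (Along the way $|s - \hat p| \geq \eta/2 > 0$ shows $L, R > 0$, so the divisions above are legitimate.)

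Finally, if $s = L/N > 5\gamma$ then the cap is active, $\mu_t^{(i)} = \gamma$, and I would argue the label-$1$ child is already overwhelmingly adversarial. Indeed $\hat p \leq \mu_t^* + \min(\eps,\gamma)/T \leq 2\gamma$, so $L_C = \hat p N_C \leq 2\gamma N$, whence $L_A = L - L_C > 5\gamma N - 2\gamma N = 3\gamma N > L_C$, and therefore $\eps_{L(i)}^{(t+1)} = L_A/(L_C + L_A) > 1/2$. On the other hand $N_A \geq L_A > 3\gamma N$, so $\eps_i^{(t)} > 3\gamma$; then $R_A = N_A - L_A < (\eps_i^{(t)} - 3\gamma)N$ while $R = N - L \geq (1 - 2\gamma - \eps_i^{(t)})N \geq N/2$ (using $\gamma < 1/5$ and $\eps$ small), so $\eps_{R(i)}^{(t+1)} = R_A/R < 2(\eps_i^{(t)} - 3\gamma) < 2\eps_i^{(t)} \leq 20\eps < 1/4$. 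Hence $|\eps_{L(i)}^{(t+1)} - \eps_{R(i)}^{(t+1)}| > 1/4 = \Omega(1)$; and when $\gamma \geq 1/5$ this case is vacuous since then $5\gamma \geq 1$. The main obstacle is isolating the clean identity of the first paragraph — once it is in hand the two cases are short; the one delicate point is the bookkeeping around the cap at $\gamma$ and the repeated use of $\gamma$-boundedness ($\mu_t^* \leq \gamma$), which is exactly what makes $|s - \mu_t^*| \geq \eta$ survive truncation in the first case and forces the label-$1$ child to be grossly corrupted in the second.
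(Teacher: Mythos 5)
Your proof is correct and follows essentially the same route as the paper: condition on Lemma~\ref{lem:concentration}, note that capping at $\gamma$ only moves the group mean toward $\mu_t^*$, and case-split on $|S_{L(i)}^{(t+1)}|/|S_i^{(t)}|$ versus $5\gamma$, turning the skew between the uncapped mean and the clean mean into an uneven split of adversarial mass. The only difference is cosmetic: your exact identity $\eps_{L(i)}^{(t+1)}-\eps_{R(i)}^{(t+1)} = N_C N(s-\hat p)/(LR)$ bounds the difference of the child densities directly, whereas the paper bounds $|\eps_{L(i)}^{(t+1)}-\eps_i^{(t)}|$ and uses that $\eps_i^{(t)}$ is a convex combination of the two child densities.
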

\begin{proof}
Notice that $\eps_i^{(t)}$ can be viewed as the following convex combination of $\eps_{L(i)}^{(t+1)}$ and $\eps_{R(i)}^{(t+1)}$.
$$
\frac{ \abs{S_{L(i)}^{(t+1)}} }{  \abs{S_{i}^{(t)}}}
\cdot \eps_{L(i)}^{(t+1)}
+
\frac{ \abs{S_{R(i)}^{(t+1)}} }{  \abs{S_{i}^{(t)}} }
\cdot \eps_{R(i)}^{(t+1)}
= \eps_i^{(t)}.
$$
Hence, it must be that
$  \abs{ \eps_{L(i)}^{(t+1)} - \eps_{R(i)}^{(t+1)} } = 
\abs{ \eps_{L(i)}^{(t+1)} - \eps_{i}^{(t)} }
+ 
\abs{ \eps_{R(i)}^{(t+1)} - \eps_{i}^{(t)} }.
$
It hence suffices for us to lower bound $
\abs{ \eps_{L(i)}^{(t+1)} - \eps_{i}^{(t)} }$.
 Since we condition on the guarantee in \Cref{lem:concentration} being true, the first condition $ \abs{ S_i^{(t)} \cap \mathcal C } \geq n \cdot \eps / 2^{t-1} $ ensures that the empirical mean computed from the clean samples is relatively accurate.
 \begin{align} \label{eq:good-sample-fraction}
 \abs{\hat \mu_t^{(i)} - \mu_t^*} \leq \min \lp(\eps, \gamma\rp)/T.    
 \end{align}
 \textbf{Case I: $\abs{S_{L(i)}^{(t+1)}} / \abs{ S_{i}^{(t)} } > 5 \gamma$. }
 In this case, we claim that the group $S_{L(i)}^{(t+1)}$ is mostly made up of adversarial samples. 
 By Equation~\eqref{eq:good-sample-fraction}, we have that there are at most
 $$
 \hat \mu_t^{(i)} \cdot (1 - \eps_i^{(t)}) \cdot \abs{ S_i^{(t)} }
 \leq \lp( \mu_t^* + \gamma / T \rp) \cdot (1 - \eps_i^{(t)}) \cdot \abs{ S_i^{(t)} }
 \leq 2 \gamma \cdot \abs{ S_i^{(t)} }
 $$
 many clean samples. 
 On the other hand, since we have $\abs{S_{L(i)}^{(t+1)}} / \abs{ S_{i}^{(t)} } > 5 \gamma$ in this case, it holds there are at least $ 3 \gamma \cdot \abs{ S_i^{(t)} }$ many adversarial samples. Hence, the adversarial density for $S_{L(i)}^{(t+1)}$ is at least $3/5$.
 Therefore, we have $\abs{ \eps_{L(i)}^{(t+1)}
 - \eps_{i}^{(t)} } \geq \Omega(1)$. \\
 \textbf{Case II: $\abs{S_{L(i)}^{(t+1)}} / \abs{ S_{i}^{(t)} } < 5 \gamma$. }
Let $a_{L(i)}^{(t+1)}$ be the number of adversarial samples within $S_{L(i)}^{(t+1)}$ and $\tilde \mu_t^{(i)}$ as the uncapped empirical mean of the group, i.e. $ \tilde \mu_t^{(i)} = \frac{ \abs{ S_{L(i)}^{(t+1)} } }{ \abs{ S_i^{(t)} } }$.
 We can always write the number of samples in $S_{L(i)}^{(t+1)}$ as the sum of clean samples and adversarial samples.
 \begin{align} \label{eq:clean-adversarial}
\abs{ S_{L(i)}^{(t+1)} }
=  a_{L(i)}^{(t+1)}
+ \hat \mu_i^{(t)}(1 - \eps_i^{(t)}) \cdot \abs{S_{i}^{(t+1)}}.     
 \end{align}
 Rearranging Equation~\eqref{eq:clean-adversarial} then gives
$$
\abs{ a_{L(i)}^{(t+1)} - \hat \mu_i^{(t)}  \cdot  \eps_i^{(t)} \cdot \abs{S_{i}^{(t+1)}} }  =
\abs{ \tilde \mu_t^{(i)} - \hat \mu_i^{(t)}   } \cdot \abs{S_{i}^{(t+1)}}.
$$
We assume that the group estimation is off by $\eta \geq 2 \cdot \min\lp(\eps, \gamma\rp)/T$. The uncapped group mean is  off by at least that much since ``capping'' the group mean always draws it closer to the true mean $\mu_t^*$. This gives us $$
\abs{\tilde \mu_t^{(i)} - \mu_t^* } \geq \abs{\mu_t^{(i)} - \mu_t^* } =  \eta \geq 2 \cdot \min(\eps, \gamma)/T.
$$ 
On the other hand, the empirical mean of the clean samples are accurate enough such that $\abs{ \hat \mu_t^{(i)} - \mu_t^* } \leq \eps/T$.
By triangle's inequality we then have
$
\abs{ \tilde \mu_t^{(i)} - \hat \mu_i^{(t)}   } \geq \eta / 2 \, ,
$
which further implies that
\begin{align} \label{eq:absolute-deviation}
\abs{ a_{L(i)}^{(t+1)} - \hat \mu_i^{(t)}  \cdot  \eps_i^{(t)} \cdot \abs{S_{i}^{(t+1)}} }
\geq \eta/2 \cdot \abs{S_{i}^{(t+1)}}.    
\end{align}
Notice that $\eps_{L(i)}^{t+1}$ and $a_{L(i)}^{t+1}$ have the following relationship
$$
\eps_{L(i)}^{(t+1)} =
\frac{a_{L(i)}^{(t+1)}}{ a_{L(i)}^{t+1} + \hat \mu_t^{(i)} \cdot \lp( 1 - \eps_i^{(t)} \rp) \cdot \abs{S_i^{(t)}}}.
$$
Thus, we can rewrite $ \abs{ \eps_{L(i)}^{(t+1)} - \eps_i^{(t)} } $ as
$$
\abs{ \eps_{L(i)}^{(t+1)} - \eps_i^{(t)} }
= 
\abs{ \frac{a_{L(i)}^{t+1}
-
\eps_i^{(t)} \cdot \lp(  a_{L(i)}^{t+1} + \hat \mu_t^{(i)} \cdot \lp( 1 - \eps_i^{(t)} \rp)  \cdot \abs{S_i^{(t)}} \rp)
}{ a_{L(i)}^{t+1} + \hat \mu_t^{(i)} \cdot \lp( 1 - \eps_i^{(t)} \rp) \cdot \abs{S_i^{(t)}}  }  
}
=
\frac{ \abs{ a_{L(i)}^{t+1} - \eps_i^{(t)} \hat \mu_t^{(i)} \abs{S_i^{(t)}} } \cdot \lp( 1 - \eps_i^{(t)} \rp) }{ a_{L(i)}^{t+1} + \hat \mu_t^{(i)} \cdot \lp( 1 - \eps_i^{(t)} \rp) \cdot \abs{S_i^{(t)}} }.
$$
Notice that the denominator is simply $\abs{ S_{L(i)}^{(t+1)} }$. 
Hence, combining this with Equation~\eqref{eq:absolute-deviation} then gives
$$
\abs{ \eps_{L(i)}^{(t+1)} - \eps_i^{(t)} } \geq
\Omega(\eta) \cdot \frac{ \abs{S_i^{(t)}} }{\abs{ S_{L(i)}^{t+1} } }.
$$
\end{proof}

As the algorithm keeps accumulating errors, the adversarial samples will become increasingly concentrated in a small fraction of groups. Since the final output of the algorithm is given by the weighted median of the estimations from all groups, it therefore gets harder for the adversary to corrupt the estimation as the algorithm accumulates more errors. This then allows us to design a potential function based on the adversarial density to bound the total error incurred.

\paragraph{Potential Function} 
To bound the total error of the estimation, we consider the following potential function, 
\begin{align} \label{eq:potential-def}
\Phi(t) \eqdef \frac{1}{ N } \sum_{i=1}^{2^{t-1}} g_{\gamma}\lp( \eps_i^{(t)} \rp) \cdot \abs{ S_i^{(t)} } \, ,
\end{align}
where $g_{\gamma}: [0,1] \mapsto \R^+$ is the piecewise function
\begin{equation*}
g_{\gamma}(x)=
    \begin{cases}
        x^2 & \text{if } x < 10 \cdot \eps / \gamma \, ,\\
        20 \frac{\eps}{\gamma} \cdot x - 100 \lp( \frac{\eps}{\gamma}\rp)^2 & \text{otherwise.}
    \end{cases}
\end{equation*}
Here, we briefly discuss the reasons for using such a piecewise function $g_{\gamma}$ to construct the potential function.
In essence, $g_{\gamma}$ is designed to have the following properties.
\begin{claim} \label{clm:g-property}
$g_{\gamma}$ is (i) convex within the entire domain $[0,1]$  (ii) $2$-strongly convex within the interval $[0, 10 \eps /\gamma]$ (iii) upper bounded by $O\lp( \min\lp( \eps/\gamma, 1\rp) \rp)$.
\end{claim}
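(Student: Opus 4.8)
The plan is to verify the three properties by direct computation, after first isolating a trivial regime. Set $r \eqdef 10\eps/\gamma$. The key preliminary step is to split into two cases depending on whether $r \geq 1$ or $r < 1$. If $r \geq 1$, then every $x \in [0,1]$ lies in the first branch (and the two formulas agree at the breakpoint when $r = 1$, since $20\tfrac{\eps}{\gamma}\cdot 1 - 100(\tfrac{\eps}{\gamma})^2 = 2r - r^2 = 1$ there), so $g_\gamma(x) = x^2$ on all of $[0,1]$. Then (i) and (ii) are immediate because $x^2$ has second derivative identically $2$, and (iii) follows since $g_\gamma(x) = x^2 \leq 1 \leq 10\min(\eps/\gamma,1)$: indeed $r \geq 1$ forces $\eps/\gamma \geq 1/10$, hence $\min(\eps/\gamma,1) \geq 1/10$. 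So the substance is the case $r < 1$.

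For $r < 1$, I would first rewrite the linear branch cleanly as $20\tfrac{\eps}{\gamma}x - 100(\tfrac{\eps}{\gamma})^2 = 2rx - r^2$, valid for $x \in [r,1]$. Next I would check that $g_\gamma$ is $C^1$ across the breakpoint $x = r$: the values agree ($r^2 = 2r^2 - r^2$), and the one-sided derivatives agree ($2x|_{x=r} = 2r$, equal to the constant slope $2r$ of the linear piece). For property (i), observe that $g_\gamma'$ equals $2x$ on $[0,r]$ and the constant $2r$ on $[r,1]$, which is a nondecreasing function of $x$ on $[0,1]$; a $C^1$ function with nondecreasing derivative is convex. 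For property (ii), note that $[0, 10\eps/\gamma] = [0,r]$ is precisely the interval on which $g_\gamma(x) = x^2$, whose second derivative is identically $2$, hence $g_\gamma$ is $2$-strongly convex there. For property (iii), convexity forces the maximum of $g_\gamma$ over $[0,1]$ to be attained at an endpoint; since $g_\gamma(0) = 0$ and $g_\gamma(1) = 2r - r^2 \leq 2r = 20\eps/\gamma$, and since $r < 1$ gives $\min(\eps/\gamma,1) = \eps/\gamma$, we conclude $g_\gamma(x) = O(\min(\eps/\gamma,1))$.

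I do not anticipate a genuine obstacle; the only points requiring care are (a) checking the gluing at $x = r$ so that the ``$C^1$ function with monotone derivative is convex'' argument is legitimate, and (b) being precise that property (ii) asserts $2$-strong convexity only on the closed sub-interval $[0,10\eps/\gamma]$ where $g_\gamma$ coincides with $x^2$ — on the linear piece $g_\gamma$ is convex but not strongly convex, which is exactly why the claim restricts the interval. One should also confirm the degenerate sub-case $r \geq 1$ is consistent with the strict inequality ``$x < 10\eps/\gamma$'' in the definition, which it is since the two formulas agree at the breakpoint.
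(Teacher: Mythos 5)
Your proof is correct and follows essentially the same route as the paper's: check that the derivative ($2x$ on the quadratic piece, constant $2r=20\eps/\gamma$ on the linear piece) is nondecreasing and that the two branches glue continuously at the kink to get convexity, note $g_\gamma$ is exactly $x^2$ on $[0,10\eps/\gamma]$ for $2$-strong convexity, and bound the maximum via a case split on whether $10\eps/\gamma$ exceeds $1$. The only cosmetic difference is that you argue the maximum sits at an endpoint by convexity while the paper uses monotonicity of $g_\gamma$ to place it at $x=1$; both yield the same bound.
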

\begin{proof}
One can verify that for all $x,y$ where
$g_{\gamma}'(x), g_{\gamma}'(y)$ is well-defined, we have $g_{\gamma}'(x) < g_{\gamma}'(y)$ as long as $x<y$.
In particular, for all $x < 10 \eps/\gamma$, $g_{\gamma}'(x) = 2x$, which is indeed a monotonically increasing function. 
For all $x 10 \eps/\gamma$, we have $g_{\gamma}'(x) = 20 \eps/\gamma$, which is constant. 
Moreover, $2 \cdot 10 \eps/\gamma = 20 \eps/\gamma$. Hence, for any $x < 10 \eps/\gamma$ and $y \geq 10 \eps/\gamma$, we always have $g'_{\gamma}(x) \leq g_{\gamma}'(y)$.
Besides, at the kink $c = 10 \eps / \gamma$ , we have
$ 
\lim_{x \rightarrow c^{+}} g_{\gamma}(x)
= \lim_{x \rightarrow c^{-}} g_{\gamma}(x)
$, showing that $g_{\gamma}$ is a continuous function. 
The convexity of $g_{\gamma}$ then follows. Within the interval $[0, 10 \eps / \gamma]$, $g_{\gamma}$ is simply the quadratic function $x^2$. Hence, it is $2$-strongly convex. Finally, we derive the upper bound for $g_{\gamma}$ through a case analysis. 
Since $g_{\gamma}$ is monotonically increasing, $\max_x g_{\gamma}(x)$ is always attained at $g(1)$.
When $10 \eps/\gamma > 1$. $g_{\gamma}(x) = x^2$ over the entire domain $[0,1]$.  We then have $\max_x g_{\gamma}(x) = g_{\gamma}(1) = 1$. When $10 \eps/\gamma < 1$, we have $g_{\gamma}(1)
\leq 20 \eps/ \gamma$. Both quantities are of order $O\lp( \min(\eps/\gamma , 1) \rp)$ in their regimes, therefore giving the desired upper bound.
\end{proof}
When a group $S_i^{(t)}$ splits into two child groups $S_{L(i)}^{(t+1)}$, $S_{R(i)}^{(t+1)}$, we always have that 
$$
\eps_i^{(t)} = \frac{ \abs{S_{L(i)}^{(t+1)}} }{ \abs{S_i^{(t)}} } \cdot \eps_{L(i)}^{(t+1)}
+
\frac{ \abs{S_{R(i)}^{(t+1)}} }{ \abs{S_i^{(t)}} } \cdot \eps_{R(i)}^{(t+1)}.
$$
Therefore, the convexity of $g_{\gamma}$ then ensures that the total contribution from the child groups is at least the contribution from the parent group.
, making $\Phi$ a valid non-decreasing ``potential''. 
Besides, $g_{\gamma}(x)$ is locally strongly convex. 
This ensures the contribution to the potential will increase substantially if the adversarial densities between the two child groups differ by a lot (given that their adversarial densities are still within the strongly convex region of $g_{\gamma}$ ). Lastly, the upper bound on $g_{\gamma}$ allows us to derive tight upper bound for the potential function, which is essential in obtaining the optimal error bound for the algorithm. We give the upper bound on $\Phi$ below.
\begin{claim}
\label{lem:modified-potential-bound}
$ \Phi(t) \leq O(1) \cdot \min\lp(\eps, \eps^2/\gamma \rp) $ for all $t \in [T]$.
\end{claim}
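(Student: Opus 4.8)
The plan is to establish the two one-sided bounds $\Phi(t) = O(\eps)$ and $\Phi(t) = O(\eps^2/\gamma)$ at once, via a single convexity argument combined with the adversary's mass budget; the claimed estimate is then their minimum, and since no step below depends on $t$, the bound holds uniformly over $t \in [T]$.

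First I would record two conservation facts valid in every round $t$. The groups $S_1^{(t)},\dots,S_{2^{t-1}}^{(t)}$ form an exact partition of the whole sample set, since each sample is routed into exactly one group according to the pattern of its first $t-1$ revealed bits; hence $\sum_i \abs{S_i^{(t)}} = N$, the total number of samples. Moreover $\eps_i^{(t)}\abs{S_i^{(t)}}$ is, by definition of the adversarial density, precisely the number of corrupted samples inside $S_i^{(t)}$, so summing over the same partition and using that at most $\eps N$ of the samples are corrupted yields $\sum_i \eps_i^{(t)}\,\abs{S_i^{(t)}} \le \eps N$.

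Next I would exploit convexity. Since $g_\gamma$ is convex on $[0,1]$ and satisfies $g_\gamma(0)=0$ (both from \Cref{clm:g-property}), writing any $x \in [0,1]$ as the convex combination $x = (1-x)\cdot 0 + x\cdot 1$ gives $g_\gamma(x) \le (1-x)g_\gamma(0) + x\,g_\gamma(1) = x\,g_\gamma(1)$. Every adversarial density $\eps_i^{(t)}$ lies in $[0,1]$, so substituting this into the definition of $\Phi$ and invoking the two conservation facts gives
\[
\Phi(t) \;=\; \frac{1}{N}\sum_i g_\gamma\!\big(\eps_i^{(t)}\big)\,\abs{S_i^{(t)}} \;\le\; \frac{g_\gamma(1)}{N}\sum_i \eps_i^{(t)}\,\abs{S_i^{(t)}} \;\le\; \eps\, g_\gamma(1).
\]
Finally, \Cref{clm:g-property}(iii) bounds $g_\gamma(1) = O(\min(\eps/\gamma,1))$, whence $\Phi(t) \le O\big(\eps\cdot\min(\eps/\gamma,1)\big) = O\big(\min(\eps^2/\gamma,\eps)\big)$, which is exactly the claimed bound.

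There is no serious obstacle here; the content is genuinely light. The one point requiring care is the accounting in the second conservation fact — that the group-size-weighted adversarial densities sum to at most the total corruption $\eps N$ — which rests on the observation that the groups at round $t$ partition the sample set. Everything else reduces to the single inequality $g_\gamma(x) \le x\,g_\gamma(1)$ on $[0,1]$ and a one-line substitution into the definition of $\Phi$.
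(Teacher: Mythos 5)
Your proposal is correct and follows essentially the same route as the paper: the mass-conservation identity $\frac{1}{n}\sum_i \eps_i^{(t)}\abs{S_i^{(t)}} \leq \eps$ together with convexity of $g_\gamma$ and the bound $g_\gamma(1) = O(\min(\eps/\gamma,1))$ from \Cref{clm:g-property}. Your explicit inequality $g_\gamma(x) \leq x\, g_\gamma(1)$ (using $g_\gamma(0)=0$) is just a cleaner way of stating the paper's extremal observation that the potential is maximized when an $\eps$-fraction of the mass sits in fully adversarial groups.
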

\begin{proof}
Notice that we always have the equality
$$
\sum_i \frac{ \abs{ S_i^{(t)} } }{n} \cdot \eps_i^{(t)} = \eps.
$$
Since $g_{\gamma}$ is a convex function, it is not hard to see that the potential function is maximized when we have $\eps$ fraction of groups that are made entirely of adversarial samples. This then gives that
$$
\Phi(t) \leq \eps \cdot g_{\gamma}(1)
\leq  O(1) \cdot \min\lp(\eps, \eps^2/\gamma \rp).
$$
\end{proof}

We next show how we can couple the increment of the potential function and the estimation error incurred.
At a high level, if our algorithm outputs
$\mu_t$ such that it incurs error~$\eta \eqdef \abs{ \mu_t - \mu_t^* }$, more than half of the group estimations $\mu_t^{(i)}$ must also be off by at least $\eta$ since $\mu_t$ is obtained by computing the median over $\mu_t^{(i)}$.
As illustrated in \Cref{lem:eps-deviation}, given that such an erroneous group also satisfies some other technical conditions, the adversarial density for one of its child group must be substantially higher than the other. Then, strong-convexity of $g_{\gamma}$ will ensure the contributions to the potential from the child groups must be significantly higher than that from the parent group.
One slight issue of the above argument is that the adversarial densities of these erroneous groups (and their child groups) may be well above the threshold $10 \eps / \gamma$. For such a group, even if the split of the adversarial samples is vastly uneven between the two child groups, their overall contribution to the potential remains the same since both of them are in the linear regime for $g_{\gamma}$. Fortunately, there cannot be too many groups with high adversarial densities, and it suffices for us to look at only the increments gained from groups with relatively low adversarial densities.
\begin{lemma}
\label{lem:potential-increment-var}
$ \Phi(t+1) - \Phi(t) \geq \Omega(1/\gamma) \cdot \lp( \mu_t - \mu_t^* \rp)^2 $ if $\abs{\mu_t - \mu_t^*} \geq 2 \cdot \min\lp( \eps, \gamma \rp)/T$.
\end{lemma}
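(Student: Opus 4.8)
Write $\eta := |\mu_t - \mu_t^*|$ and assume $\mu_t = \mu_t^* + \eta$ (the other case is symmetric, swapping the two children below). My plan is to charge the round-$t$ error to a size-$\Omega(n)$ family of groups, each of which is forced to split its corrupted samples so unevenly that the split raises the potential by $\Omega(\eta^2/\gamma)$ per unit of size. First, since $\mu_t$ is the size-weighted median of the $\{\mu_t^{(i)}\}$, the set $G := \{i : \mu_t^{(i)} \ge \mu_t^* + \eta\}$ has $\sum_{i\in G}|S_i^{(t)}| \ge n/2$. I would then delete from $G$ the groups with fewer than $n\eps/2^{t-1}$ clean samples (combined size $\le 2\eps n$: their clean mass totals $\le 2^{t-1}\cdot n\eps/2^{t-1}$, and there are $\le \eps n$ corrupted points overall) and the groups with $\eps_i^{(t)} > 10\eps$ (combined size $\le n/10$, by Markov on $\sum_i|S_i^{(t)}|\eps_i^{(t)} = \eps n$); the remaining family $G'$ has $\sum_{i\in G'}|S_i^{(t)}| \ge n/3$ for small $\eps$. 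Since $g_\gamma$ is convex (Claim~\ref{clm:g-property}) and $\eps_i^{(t)}$ is the size-weighted average of $\eps_{L(i)}^{(t+1)},\eps_{R(i)}^{(t+1)}$, every per-split gain $\Delta_i := g_\gamma(\eps_{L(i)}^{(t+1)})|S_{L(i)}^{(t+1)}| + g_\gamma(\eps_{R(i)}^{(t+1)})|S_{R(i)}^{(t+1)}| - g_\gamma(\eps_i^{(t)})|S_i^{(t)}|$ is $\ge 0$, and $\Phi(t+1)-\Phi(t)$ equals $\tfrac1N$ times the sum of all the $\Delta_i$ (with $N = n = \sum_i|S_i^{(t)}|$); so it suffices to prove $\Delta_i \ge \Omega(\eta^2/\gamma)\,|S_i^{(t)}|$ for every $i \in G'$. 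I may assume $\gamma$ is below a small absolute constant, as the regime $\gamma = \Omega(1)$ is analogous and only weakens the claim.

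\textbf{Structure of a fixed group.} Fix $i \in G'$, set $\eta_i := |\mu_t^{(i)} - \mu_t^*| \ge \eta$, and note $\eta_i \le 3\gamma/4$ since $\mu_t^{(i)} \in [0,\gamma]$ and (after the preliminary rescaling) $\mu_t^* \in [\gamma/4, 3\gamma/4]$. Let $\lambda_i := |S_{L(i)}^{(t+1)}|/|S_i^{(t)}|$ and $\eps_L := \eps_{L(i)}^{(t+1)}$, $\eps_R := \eps_{R(i)}^{(t+1)}$. Because the left child is exactly the samples of $S_i^{(t)}$ with $x_t = 1$, its clean part has size $\hat\mu_t^{(i)}(1-\eps_i^{(t)})|S_i^{(t)}|$, which by \Cref{lem:concentration} lies between $\tfrac{\gamma}{10}|S_i^{(t)}|$ and $\gamma|S_i^{(t)}|$; hence $\tfrac{\gamma}{10} \le \lambda_i \le \gamma + 10\eps \le \tfrac12$, and moreover $\eps_L = O(\eps/\gamma)$ (dividing $\le 10\eps|S_i^{(t)}|$ corrupted points by $\ge \tfrac{\gamma}{10}|S_i^{(t)}|$) while $\eps_R \le \eps_i^{(t)}/(1-\lambda_i) \le 20\eps$, so $\eps_R$ always lies in the interval $[0, 10\eps/\gamma]$ on which $g_\gamma$ is $2$-strongly convex. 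Now \Cref{lem:eps-deviation} applies (its hypotheses hold for $i\in G'$). If $\lambda_i > 5\gamma$, then $\gamma + 10\eps > 5\gamma$ forces $\gamma < 2.5\eps$, so $10\eps/\gamma > 1$ and $g_\gamma(x) = x^2$ everywhere on $[0,1]$; the lemma yields $|\eps_L - \eps_R| = \Omega(1)$, whence $\Delta_i/|S_i^{(t)}| \ge \lambda_i(1-\lambda_i)(\eps_L-\eps_R)^2 = \Omega(\gamma) \ge \Omega(\eta_i^2/\gamma)$ (using $\eta_i \le 3\gamma/4$, which also forces the capped value $\mu_t^{(i)}=\gamma$ in this subcase). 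Otherwise $\lambda_i \le 5\gamma$ and $|\eps_L - \eps_R| \ge \Omega(\eta_i)/\lambda_i$.

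\textbf{From the density gap to the potential gain (the crux).} It remains to handle $\lambda_i \le 5\gamma$ with $|\eps_L - \eps_R| \ge \Omega(\eta_i)/\lambda_i$. Decompose $\Delta_i/|S_i^{(t)}|$ as $\lambda_i D(\eps_L, \eps_i^{(t)}) + (1-\lambda_i)D(\eps_R, \eps_i^{(t)})$ with $D(u,v) := g_\gamma(u) - g_\gamma(v) - g_\gamma'(v)(u-v) \ge 0$. If $\eps_L$ also lies in $[0, 10\eps/\gamma]$, then $g_\gamma$ is $2$-strongly convex along the relevant segments, so $\Delta_i/|S_i^{(t)}| \ge \lambda_i(1-\lambda_i)(\eps_L-\eps_R)^2 \ge \tfrac{\lambda_i}{2}\cdot\Omega(\eta_i^2/\lambda_i^2) = \Omega(\eta_i^2)/\lambda_i \ge \Omega(\eta_i^2/\gamma)$ since $\lambda_i \le 5\gamma$. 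The last --- and most delicate --- case is $\eps_L > 10\eps/\gamma$, where $\eps_L$ sits in the linear regime of $g_\gamma$ and a lopsided split by itself yields no surplus. Here I would argue that, together with $\eps_L = O(\eps/\gamma)$, this pins $\eps_L = \Theta(\eps/\gamma)$, so $|\eps_L - \eps_R| = O(\eps/\gamma)$; comparing with $|\eps_L - \eps_R| \ge \Omega(\eta_i)/\lambda_i$ and using $\lambda_i \le 5\gamma$ gives $\eta_i = O(\eps)$. Since $\eps_i^{(t)} \le 10\eps$ lies in the strongly convex part of $g_\gamma$ and $\eps_L$ lies past the kink $10\eps/\gamma$, monotonicity of the Bregman divergence gives $D(\eps_L, \eps_i^{(t)}) \ge D(10\eps/\gamma, \eps_i^{(t)}) = (10\eps/\gamma - \eps_i^{(t)})^2 = \Omega((\eps/\gamma)^2)$, hence $\Delta_i/|S_i^{(t)}| \ge \lambda_i D(\eps_L, \eps_i^{(t)}) \ge \tfrac{\gamma}{10}\cdot\Omega((\eps/\gamma)^2) = \Omega(\eps^2/\gamma) \ge \Omega(\eta_i^2/\gamma)$, the last step using $\eta_i = O(\eps)$. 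Thus $\Delta_i \ge \Omega(\eta^2/\gamma)|S_i^{(t)}|$ in all cases, and summing over $i \in G'$ gives $\Phi(t+1)-\Phi(t) \ge \tfrac1N\sum_{i\in G'}\Delta_i \ge \Omega(\eta^2/\gamma)$. The main obstacle is exactly this last case: the piecewise form of $g_\gamma$ means a child whose corrupted density has already crossed $10\eps/\gamma$ produces no extra potential from an uneven split, so one must show independently that this can only happen when the round error $\eta_i$ is already $O(\eps)$, in which case the weaker $\Omega(\eps^2/\gamma)$ increment still dominates $\Omega(\eta_i^2/\gamma)$; a secondary chore is bookkeeping the absolute constants in \Cref{lem:eps-deviation} and in the pruning so that the discarded ``bad'' groups never consume more than a constant fraction of the mass of $G$.
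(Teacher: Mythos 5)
Your proposal is correct and follows essentially the same route as the paper's proof: use the weighted median to get a constant fraction of group mass with error at least $\eta$, prune groups with too few clean samples or adversarial density above $O(\eps)$, invoke convexity of $g_\gamma$ to discard the remaining groups, and lower-bound each surviving group's increment by combining Lemma~\ref{lem:eps-deviation} with the (local) strong convexity of $g_\gamma$, splitting on whether $|S_{L(i)}^{(t+1)}|/|S_i^{(t)}|$ exceeds $5\gamma$. The one genuine difference is your third case, where $\eps_{L(i)}^{(t+1)}$ lands beyond the kink $10\eps/\gamma$ and you extract the gain from the Bregman divergence $\lambda_i D(\eps_L,\eps_i^{(t)})\geq \lambda_i(10\eps/\gamma-\eps_i^{(t)})^2=\Omega(\eps^2/\gamma)$ together with the observation that this regime forces $\eta_i=O(\eps)$. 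The paper avoids this case entirely by asserting $\eps_{L(i)}^{(t+1)}\leq \eps_i^{(t)}/\hat\mu_t^{(i)}\leq 5\eps/(\gamma-\gamma/T)\leq 10\eps/\gamma$, a bound whose middle step implicitly uses $\hat\mu_t^{(i)}\gtrsim\gamma$, whereas the preliminary simplification only guarantees $\mu_t^*\geq\gamma/4$; so the paper's claim holds only up to a constant-factor adjustment (e.g.\ placing the kink of $g_\gamma$ at a larger multiple of $\eps/\gamma$), while your extra case makes the argument insensitive to exactly where the kink sits. The price you pay is slightly more bookkeeping (and a somewhat breezy dismissal of the $\gamma=\Omega(1)$ regime, though the paper's own constants are equally loose there); what you gain is robustness of the per-group increment bound to the precise constant in the definition of $g_\gamma$.
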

\begin{proof}
We first introduce some notations.
Let $S_{L(i)}^{(t+1)}, S_{R(i)}^{(t+1)}$ be the two child groups branched off from the parent group $S_i^{(t)}$.
Let 
$
\eps_{L(i)}^{(t+1)} \, , \eps_{R(i)}^{(t+1)}
$ be their corresponding adversarial densities, and $ \hat \mu_t^{(i)}$ be the empirical mean computed from the clean samples from the parent group $S_i^{(t)}$.
For each group $ S_i^{(t)} $, we define its increment as
\begin{align} \label{eq:def-increment}
    \Delta \lp( S_i^{(t)} \rp)
    \eqdef     
    \lp( \frac{ |S_{L(i)}^{(t+1)}| }{ n }\rp) \cdot
    g_{\gamma}\lp( \eps_{L(i)}^{(t+1)} \rp)
    +
    \lp( \frac{ |S_{R(i)}^{(t+1)}| }{ n } \rp) \cdot 
    g_{\gamma}\lp( \eps_{R(i)}^{(t+1)} \rp)
    - \lp(\frac{ \abs{  S_i^{(t)} } }{ n }\rp)  \cdot
    g \lp( \eps_i^{(t)} \rp).
\end{align}

Consider the groups satisfying the following conditions (i) $\eps_i^{(t)} \leq 5 \eps$ (ii) the estimation $\mu_t^{(i)}$ is off from $\mu_t^*$ by at least 
$ \abs{\mu_t^{(i)} - \mu_t^*} \geq
\eta \eqdef \abs{\mu_t - \mu_t^*}$, which is by our assumption at least $ 2 \cdot \min(\eps, \gamma)/T$, and (iii) the number of clean samples is at least $\abs{ S_i^{(t)} \cap \mathcal C } \geq n \cdot \eps / 2^{t-1}$.
Notice that the total weight of groups satisfying condition (i) is at least $1 - 1/5$, the total weight of groups satisfying condition (ii) is at least $1/2$.
For condition (iii), we claim the total weight of groups satisfying that is at least $1 - 2\cdot \eps$.
Since there are only $\eps$ fraction of adversarial samples, we have
$\sum_i \abs{ S_i^{(t)} \cap \mathcal C } = n \cdot (1 - \eps)$.
Let $\mathcal H$ be the set of groups which satisfy the condition. Then, we have
$$\sum_{i \in \mathcal H} \abs{ S_i^{(t)} \cap \mathcal C } \geq n \cdot (1 - \eps)
- \sum_{i \not \in \mathcal H} \abs{ S_i^{(t)} \cap \mathcal C }
\geq n \cdot (1 - \eps) - n \cdot \eps \, ,
$$
where in the second inequality we use the fact that $\abs{ S_i^{(t)} \cap \mathcal C } \leq n \cdot \eps / 2^{t-1}$ for $i \not \in \mathcal H$ and there are at most $2^{t-1}$ many groups.
Then, our claim easily follows from the fact that
$\abs{ S_i^{(t)} } \geq \abs{ S_i^{(t)} \cap \mathcal C }$.
Denote the set of groups satisfying all three conditions as $G$. 
By union bound, it is not hard to see that the fraction of groups satisfying the above three conditions is at least 
\begin{align} \label{eq:G-fraction}
\frac{1}{n} \sum_{i \in G} \abs{S_i^{(t)}} \geq 1/5.
\end{align}

We will show that, for each group $i \in G$, the contribution to the potential function from the two child groups branched off from $i$ in the next round is significantly higher than the contribution from the $i$-th group at the current round.
In particular, for all $i  \in G$, we claim its increment is at least
\begin{align} \label{eq:increasing}
\Delta\lp( S_i^{(t)} \rp)
\geq \frac{ \abs{  S_i^{(t)} } }{ n } 
\cdot \Omega(\eta^2 / \gamma).
\end{align}
For any other groups $i \not \in G$, we instead show the contributions to the potential are non-decreasing.
In particular, for all $i  \not \in G$, we claim
\begin{align} \label{eq:non-decreasing}
\Delta\lp( S_i^{(t)} \rp)
\geq 0.
\end{align}
Again, $\eps_i^{(t)}$ can be viewed as the following convex combination of $\eps_{L(i)}^{(t+1)}, \eps_{R(i)}^{(t+1)}$.
$$
\frac{ S_{L(i)}^{(t+1)} }{  S_{i}^{(t)} }
\cdot \eps_{L(i)}^{(t+1)}
+
\frac{ S_{R(i)}^{(t+1)} }{  S_{i}^{(t)} }
\cdot \eps_{R(i)}^{(t+1)}
= \eps_i^{(t)} \, ,
$$
and the increment can be rewritten as
\begin{align} \label{eq:increment-rewrite}
    \Delta \lp( S_i^{(t)} \rp)
    =     
    \frac{ \abs{S_i^{(t)}} }{n}
    \cdot \lp( 
    \frac{ |S_{L(i)}^{(t+1)}| }{ \abs{S_i^{(t)}} } \cdot
    g_{\gamma}( \eps_{L(i)}^{(t+1)} )
    +
    \frac{ |S_{R(i)}^{(t+1)}| }{ \abs{S_i^{(t)}} } \cdot 
    g_{\gamma}( \eps_{R(i)}^{(t+1)} )
    - 
    g_{\gamma} ( \eps_i^{(t)} )
    \rp)    \;.
\end{align}
Hence, Equation~\eqref{eq:non-decreasing} immediately follows from the convexity of $g_{\gamma}$.

Next, we proceed to show Equation~\eqref{eq:increasing}. 
By \Cref{clm:g-property}, $g_{\gamma}$ is $2$-strongly convex within the interval $[0, 10 \eps/\gamma]$.
We will show that $\eps_{L(i)}^{(t+1)}, \eps_{R(i)}^{(t+1)}$ are both within the region where $g_{\gamma}$ is strongly convex. 
By our choice of the group $i$, we have 
$ \eps_i^{(t)} \leq 5 \eps$, and $\abs{S_i^{(t)} \cap \mathcal C} \geq n \cdot \eps / 2^{t-1}$. Then, by \Cref{lem:concentration}, it holds that 
$\abs{\hat \mu_t^{(i)} - \mu^*_t} \leq  \gamma/T$.
Then, we can upper bound the adversarial densities by
\begin{align*}
    &\eps_{L(i)}^{(t+1)} \leq \eps_i^{(t)} / (\hat \mu_t^{(i)}) 
    \leq \frac{5 \eps}{ \gamma - \gamma/T }
    \leq 10 \eps  / \gamma \, ,\\
    &\eps_{R(i)}^{(t+1)} \leq \eps_i^{(t)} / (1 - \hat \mu_t^{(i)}) \leq \frac{5 \eps}{ 1 - \gamma/4 - \gamma/T } \leq 10 \eps \, ,
\end{align*}
where we have utilized the facts $\mu^*_t \in [\gamma/4, \gamma ]$ (by our preliminary simplification) and $ \hat \mu_t^{(i)} =  \mu^*_t \pm \gamma/T$.
In this regime, we have $g_{\gamma}(x) = x^2$ is a $2$-strongly convex function. 
This implies that for any $x,y,z \in [0, 10 \eps/\gamma]$ and $\alpha \in [0,1]$ satisfying that
$  z = \alpha \cdot x + (1 - \alpha) \cdot y $, we always have
$$
\alpha \cdot g_{\gamma}(x) 
+ (1 - \alpha) \cdot g(y) - g(z)
\geq \alpha \cdot (1 - \alpha) (x-y)^2.
$$
Applying this fact with $\alpha = \frac{ \abs{S_{L(i)}^{(t+1)} } }{ \abs{S_i^{(t)}} }$, $x = \eps_{L(i)}^{(t+1)}$, $y = \eps_{R(i)}^{(t)}$ and $z= \eps_{i}^{(t+1)}$ to Equation~\eqref{eq:increment-rewrite}
then gives us
the increment for any group $i \in G$ is at least
\begin{align} \label{eq:strong-convexity-bound}
\Delta\lp( S_i^{(t)} \rp)
\geq
\Omega(1) \cdot \frac{ \abs{S_{i}^{(t)}} }{ n }
\cdot \frac{ \abs{S_{L(i)}^{(t+1)}} }{  \abs{S_{i}^{(t)}} }
\cdot 
\frac{ \abs{S_{R(i)}^{(t+1)}} }{  \abs{S_{i}^{(t)}} }
\lp( 
\eps_{L(i)}^{(t+1)} - \eps_{R(i)}^{(t+1)}
\rp)^2.
\end{align}
 \textbf{Case I: $\abs{S_{L(i)}^{(t+1)}} / \abs{ S_{i}^{(t)} } > 5 \gamma$. }
 By \Cref{lem:eps-deviation}, we have
 $\abs{ 
\eps_{L(i)}^{(t+1)} - \eps_{R(i)}^{(t+1)}
} \geq \Omega( 1 )$. 
Besides, we can lower bound
$\abs{S_{R(i)}^{(t+1)}}$ by the number of clean samples in it, which then gives
\begin{align*}
\frac{ \abs{S_{R(i)}^{(t+1)} } }{ \abs{S_{i}^{(t)} } }
&\geq \lp( 1 - \hat \mu_t^{(i)} \rp) \cdot \lp( 1 - \eps_i^{(t)}\rp)
\geq 
\lp( 1 - \mu_t^* - \gamma/T \rp) \cdot \lp( 1 - \eps_i^{(t)}\rp)  \\
& \geq  
\lp( 1 - 3 \cdot \gamma/4 - \gamma/T \rp) \cdot \lp( 1 - 5 \eps \rp) 
\geq \Omega(1) \, ,
\end{align*}
where the second inequality holds since $\abs{\hat \mu_t^{(i)} - \mu_t^* } \leq \gamma/T$, the third inequality holds since $\eps_i^{(t)}$ by our choice of the group and $\mu_t^* \leq 3 \cdot \gamma/4$ by our preliminary simplification step.
By our assumption of the case, we have  
$
\frac{ \abs{S_{L(i)}^{(t+1)} } }{ \abs{S_{i}^{(t)} } } \geq \Omega(\gamma).
$
Therefore, substituting the bounds for 
$\frac{ \abs{S_{L(i)}^{(t+1)} } }{ \abs{S_{i}^{(t)} } }, \frac{ \abs{S_{R(i)}^{(t+1)} } }{ \abs{S_{i}^{(t)} } }, $ and
$\abs{ 
\eps_{L(i)}^{(t+1)} - \eps_{R(i)}^{(t+1)}
}
$
into Equation~\eqref{eq:strong-convexity-bound} then gives
the increment is at least $\frac{ \abs{S_{i}^{(t)}} }{ n } \cdot \Omega(\gamma)$.
On the other hand, since both the estimation $\mu_t$ (since all the group estimations are capped by $\gamma$ ) and the true mean $\mu_t^*$ are upper bounded by $\gamma$, we have 
$ \eta \leq \gamma $. We then have the increment is at least $$ 
\Delta\lp( S_i^{(t)} \rp) \geq \frac{ \abs{S_{i}^{(t)}} }{ n } \cdot \Omega(\gamma) 
=
\frac{ \abs{S_{i)}^{(t+1)}} }{ n } \cdot \Omega(\gamma^2 / \gamma)
\geq 
\frac{ \abs{S_{i}^{(t)}} }{ n } \cdot \Omega(\eta^2 / \gamma).
$$ \\
 \textbf{Case II: $\abs{S_{L(i)}^{(t+1)}} / \abs{ S_{i}^{(t)} } < 5 \gamma$. }
  By \Cref{lem:eps-deviation}, we have
 $$
\abs{
\eps_{L(i)}^{(t+1)} - \eps_{R(i)}^{(t+1)}
} \geq 
\abs{ \mu_t^{(i)} - \mu_t^* }
\cdot \frac{ \abs{S_{i}^{(t)} } }{ \abs{S_{L(i)}^{(t)} } }
\geq
\Omega( \eta ) \cdot 
\frac{ \abs{S_{i}^{(t)} } }{ \abs{S_{L(i)}^{(t)} } } \, ,
$$
where the second inequality follows from our choice of the group $i$ such that 
$\abs{\mu_t^{(i)} - \mu_t^*} \geq \eta$.
Similar to the last case, we always have
$\frac{ \abs{S_{R(i)}^{(t+1)} } }{ \abs{S_{i}^{(t)} } } \geq \Omega(1)$.
Substituting the bounds for 
$\frac{ \abs{S_{R(i)}^{(t+1)} } }{ \abs{S_{i}^{(t)} } }, $ and
$\abs{ 
\eps_{L(i)}^{(t+1)} - \eps_{R(i)}^{(t+1)}
}
$
into Equation~\eqref{eq:strong-convexity-bound} then gives
$$
\Delta\lp( S_i^{(t)} \rp) \geq  \frac{ \abs{S_{i}^{(t)} } }{ n } \cdot \Omega(\eta^2) \cdot \frac{ \abs{S_{i}^{(t)} } }{ \abs{S_{L(i)}^{(t)} } } \geq
\frac{ \abs{S_{i}^{(t)} } }{ n }
\Omega(\eta^2 / \gamma) \, ,
$$
where the last inequality follows from our case assumption $\abs{S_{L(i)}^{(t+1)}} / \abs{ S_{i}^{(t)} } < 5 \gamma$.

As our final step, we can then lower bound the total increment of the potential function as
\begin{align*}
\Phi(t+1) - \Phi(t)
&= \sum_{i=1}^{2^{t-1}} \Delta \lp( S_i^{(t)} \rp)
= \sum_{i \in G} \Delta \lp( S_i^{(t)} \rp)
+
\sum_{i \not \in  G} \Delta \lp( S_i^{(t)} \rp) \\
&\geq 
\sum_{i \in G} \frac{\abs{ S_i^{(t)} } }{n}
\cdot \Omega(\eta^2 / \gamma)
\geq  \Omega(\eta^2 / \gamma) \, ,
\end{align*}
where the first equality follows from our definition of increment in Equation~\eqref{eq:def-increment}, the first inequality follows from Equations~\eqref{eq:non-decreasing} and~\eqref{eq:increasing}, and the second inequality follows from Equation~\eqref{eq:G-fraction}.
\end{proof}
Now, we can conclude the proof of \Cref{lem:binary-product-estimation}.
\begin{proof}[Proof of \Cref{lem:binary-product-estimation}]
From \Cref{lem:potential-increment-var}, we know that
$$
\sum_{t=1}^T \lp(\mu_t - \mu_t^* \rp)^2
\leq \sum_{t=1}^T  O \lp( \min\lp(\eps, \gamma\rp)^2/T^2 \rp) +  O(\gamma) \cdot \lp( \Phi(t) - \Phi(t-1) \rp) 
\leq O(\min\lp(\eps, \gamma\rp)^2/T) + O(\gamma) \cdot \Phi(T).
$$
By \Cref{lem:modified-potential-bound}, we know $\Phi(T) \leq O(1) \cdot \min(\eps, \eps^2/\gamma )$.
Substituting that into the equation above then gives the desired bound on $\snorm{2}{ \mu - \mu^*}$.
\end{proof}

\subsection{Identity Covariance Gaussians} \label{ssec:gaussian}

Estimating the mean of an isotropic Gaussian distribution 
is a widely studied question in the field of algorithmic robust statistics. 
In the offline model, the Tukey median
robustly estimates the mean up to error $O\lp(\eps\rp)$ in $\ell_2$-distance,
which matches the information-theoretic limit of the task up to constant factors. 
In this section, we show that the $O\lp(\eps\rp)$ error is still achievable in the online setting. 

At a high level, our algorithm reduces the problem to estimating the mean of binary product distributions. 
The reduction leverages the following fact about a (1d) Gaussian distribution: 
the cumulative density function of a (1d) Gaussian distribution 
is an invertible function of its mean and is Lipschitz 
within an interval of constant length around the mean. 
That being said, if we are able to robustly estimate the probability 
$\Pr[X_t \leq q_t]$ for some $q_t$ that is within constant distance 
from $\mu^*_t$, we can then feed the estimation into the inverse of the Gaussian CDF function to retrieve a robust estimation of $\mu^*_t$. 
It is not hard to see that estimating $\Pr[X_t \leq q_t]$ for all $t$ is exactly the same as estimating the mean of the binary product distribution defined as $Y_t \eqdef \mathbbm 1 \{ X_t \leq q_t \}$. Therefore, the only thing remaining is for us to find such $q_t$ that is within constant distance from $\mu^*_t$. Fortunately, any robust 1d-estimator (such as the median) achieves the goal easily.
\begin{theorem} \label{thm:optimal-gaussian}
Let $\eps, \tau \in (0,1)$.
Suppose $X$ is a $T$ dimensional Gaussian distribution with an unknown mean vector $\mu^*$ and identity covariance.
Then, for sufficiently small $\epsilon$, there exists an algorithm which robustly estimates the mean of $X$ under $\eps$ corruption in the online setting with accuracy $O\lp( \eps \rp)$, failure probability $\tau$  and sample complexity 
$$
n \geq 2^T \cdot \poly(T, 1/\eps) \cdot \log(1/\tau).
$$
\end{theorem}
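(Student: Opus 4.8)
\textbf{Proof proposal for Theorem~\ref{thm:optimal-gaussian}.}
The plan is to reduce the problem to online robust mean estimation of a \emph{binary product} distribution and then invoke Theorem~\ref{lem:binary-product-estimation}. The reduction has three ingredients, all of which can be carried out in the online model: (a) compute, in round $t$ using only the $t$-th coordinates, a coarse location estimate $q_t$ with $|q_t - \mu^*_t| \le c$ for a small absolute constant $c$; (b) threshold each Gaussian coordinate at $q_t$ to obtain Bernoulli coordinates, whose product distribution is $\gamma$-bounded with $\gamma = \Theta(1)$; (c) recover $\mu^*_t$ by inverting the Gaussian CDF, using that it is bi-Lipschitz on a constant-length interval around the mean.

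First I would split the $n$ samples into two disjoint halves $A$ and $B$ of size $n/2$. Since the strong-contamination adversary has total budget $\eps n$, each half is $O(\eps)$-corrupted. In round $t$, using only $\{x^{(i)}_t : i \in A\}$, set $q_t$ to be the coordinate-wise median. A standard argument (via the DKW inequality, since $\mu^*_t$ is the median of $N(\mu^*_t,1)$, together with $\eps \le \eps_0$) shows that $|A| = \poly(T,1/\eps)\log(1/\tau)$ samples suffice so that with probability at least $1-\tau/2$ we have $|q_t - \mu^*_t| \le c$ for \emph{all} $t \in [T]$ simultaneously. This step uses only coordinate $t$ in round $t$, so it respects the online constraint.

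Next, for each $i \in B$ define the binary vector $y^{(i)}$ by $y^{(i)}_t := \mathbbm 1\{x^{(i)}_t \le q_t\}$. Conditioned on $A$ (hence on $q_1,\dots,q_T$), the clean points of $B$ are i.i.d.\ from $X$, so their $y$-vectors are i.i.d.\ from a binary product distribution $Y$ with $\E[Y_t] = \Pr[X_t \le q_t] = \Phi(q_t - \mu^*_t)$, where $\Phi$ denotes the standard normal CDF, while the remaining $O(\eps)$-fraction of $B$ are arbitrary corruptions. Because $|q_t - \mu^*_t| \le c$, each mean $p^*_t := \Phi(q_t-\mu^*_t)$ lies in $[c_1,c_2]$ for absolute constants $0<c_1<c_2<1$, so $Y$ is $\gamma$-bounded with $\gamma = c_2 = \Theta(1)$. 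Feeding $\{y^{(i)} : i \in B\}$ to \textbf{Binary-Product-Estimation} (Algorithm~\ref{alg:binary-product-estimation}, run online) then yields, with probability at least $1-\tau/2$, estimates $\hat p$ with $\snorm{2}{\hat p - p^*} = O(\min(\eps,\sqrt{\gamma\eps})) = O(\eps)$; this requires $|B| \ge 2^T\poly(T,1/\eps)\log(1/\tau)$, which dominates the overall sample complexity $n \ge 2^T\poly(T,1/\eps)\log(1/\tau)$.

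Finally, in round $t$ I would output $\hat\mu_t := q_t - \Phi^{-1}\big(\Pi(\hat p_t)\big)$, where $\Pi$ clips its argument to $[c_1,c_2]$. Since $p^*_t \in [c_1,c_2]$, clipping only decreases the error, and $\Phi^{-1}$ is $L$-Lipschitz on $[c_1,c_2]$ (its derivative $1/\Phi'$ is bounded on the bounded interval $\Phi^{-1}([c_1,c_2])$), so $|\hat\mu_t - \mu^*_t| = |\Phi^{-1}(\Pi(\hat p_t)) - \Phi^{-1}(p^*_t)| \le L|\hat p_t - p^*_t|$; summing over $t$ gives $\snorm{2}{\hat\mu - \mu^*} \le L\,\snorm{2}{\hat p - p^*} = O(\eps)$, and a union bound over the two failure events yields total failure probability at most $\tau$. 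The only genuinely delicate point is step (b): one must ensure that the thresholded data is a legitimate $\eps$-corrupted sample from a binary product distribution whose clean part is i.i.d.\ --- i.e.\ that the data-dependence of $q_t$ does not spoil the hypotheses of Theorem~\ref{lem:binary-product-estimation}. The sample-splitting device handles this cleanly at the cost of a constant-factor loss in $\eps$; the remaining ingredients (median concentration, Gaussian-CDF Lipschitz bounds) are routine.
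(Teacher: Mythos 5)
Your proposal is correct and follows essentially the same route as the paper: reserve a subset of samples to run a robust one-dimensional estimator (the paper uses this to recenter each coordinate, you use the median as a threshold $q_t$, which is equivalent), reduce to a $\Theta(1)$-bounded binary product distribution by thresholding, invoke Theorem~\ref{lem:binary-product-estimation}, and invert the Gaussian CDF using its Lipschitzness on a constant interval. The minor differences (constant-accuracy calibration instead of $O(\eps)$-accuracy, clipping before inversion, an explicit half/half sample split rather than reserving $\poly$ many samples) do not change the argument.
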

\begin{proof}
First, we discuss a preprocessing step that allows us to assume without loss of generality that $\mu^*_t \leq O(\eps)$ for all $t \in [T]$.
To do so, we will reserve $\poly(1/\eps) \cdot \log(T/\tau)$ many samples for ``calibration''.
At the $t$-th round, we can use any robust $1d$ estimators on the reserved samples to output an estimation $\hat \mu_t$ 
satisfying that $\abs{\hat \mu_t - \mu^*_t } \leq O(\eps) $ with probability at least $1 - \tau/(10T)$. Then, we can subtract $\hat \mu_t$ out from the $t$-th coordinate of the rest of the samples. The samples after the subtraction would then follow a Gaussian distribution where the mean of each coordinate is bounded by $O(\eps)$, and it is easy to see that estimating the mean of this Gaussian is equivalent to solving our original estimation problem.

Let $x^{(i)}$ be an un-corrupted sample. 
It is not hard to see that  $\E\lp[ \mathbbm 1 \{  x^{(i)}_t > 0 \}\rp] = \Pr \lp[ X_t > 0  \rp]$. 
At the $t$-th round, we can then feed $y^{(i)}_t \eqdef \mathbbm 1 \{  x^{(i)}_t > 0 \}$ for all $i$ in the remaining samples to Algorithm~\ref{alg:binary-product-estimation}. The result will be an estimator $\tilde Y_t$ satisfying that 
\begin{align} \label{eq:quantile-error}
\sum_{t=1}^T \lp( \tilde Y_t - \Pr \lp[ X_t > 0  \rp] \rp)^2 \leq O(\eps^2).
\end{align}
On the other hand, the quantity 
$\Pr \lp[ X_t > 0  \rp]$ is precisely 
$\erf\lp( \mu^*_t \rp)$ where $\erf$ is the error function defined as 
$$
\erf\lp( u \rp) = \frac{1}{\sqrt{2\pi}} \int_{x=0}^{\infty} \exp \lp( - \lp( x - u\rp)^2/2 \rp) dx.
$$
Hence, if we let the algorithm output 
$ \mu_t =   \erf^{-1}\lp( \tilde Y_t  \rp)$, the error will be at most
\begin{align*}
\sum_{t=1}^T \lp( \mu_t - \mu_t^* \rp)^2
& = 
\sum_{t=1}^T \lp( \erf^{-1}\lp( \tilde Y_t  \rp) - 
\erf^{-1}\lp( \Pr \lp[ X_t > 0  \rp]  \rp)
\rp)^2\\
&\leq 
O(1) \cdot 
\sum_{t=1}^T \lp( 
\tilde Y_t - \Pr \lp[ X_t > 0  \rp]
\rp)^2
\leq O(\eps^2) \, ,
\end{align*}
where the first inequality is by the fact that $\erf^{-1}$ is $\Theta(1)$-Lipchitz within the interval $[1/4, 3/4]$, 
$\Pr[X_t  > 0] \in [1/4, 3/4]$ since $\abs{\mu^*} = O(\eps)$, $\tilde Y_t \in [1/4,3/4]$ 
since $\abs{\tilde Y_t - \Pr[X_t > 0]} \leq O(\eps)$, and the second inequality is by Equation~\eqref{eq:quantile-error}.
\end{proof}

\subsection{More General Product Distributions} \label{ssec:nonpar}
In this subsection, we give an inefficient online robust mean estimation 
algorithm for product distributions whose coordinates come from 
nonparametric distribution families satisfying mild concentration properties.

In particular, we present a meta-algorithm that works with coordinate-wise independent distributions with good tail bounds. After that, we will show how the meta-algorithm can be instantiated to obtain informational theoretically optimal error rates for sub-gaussian distributions and distribution with bounded moments (still assuming each coordinate is independent).
To abstract out the properties of the unknown distribution needed by the algorithm, we give the following definition of $F$-tail bound product distributions.

\begin{definition}[$F$-tail bound product distributions] \label{def:f-tail-bound}
Let $X$ be a $T$-dimensional coordinate-wise independent distributions with mean $\mu^*$. Namely, it is the product of $T$ independent distribution $X_1, \cdots, X_T$.
Let $F$ be some monotonically decreasing function $F: \R^+ \mapsto [0,1]$.
We say $X$ is an $F$-tail bound product distribution if each of the univariate distribution $X_t$ satisfies the tail bound 
$
    \Pr \lp[ 
    \abs{X_t} \geq q
    \rp]
    \leq F(q)
$ \footnote{We define the tail bound assuming the univariate distribution $X_t$ is ``centered'' around $0$.
We remark this is a mild assumption as it is always possible to use a $1$d robust estimator to calibrate the distribution so that it is approximately centered around $0$.}.
\end{definition}

In general, the faster the tail bound $F$ decreases, 
the more concentrated the distribution is and the better our algorithm behaves. 
More specifically, under $\eps$ corruption, the accuracy of the algorithm will be given by
$$
Q_{F,\eps} = \int_{0}^{\infty} \min\lp(\eps, \sqrt{ \eps F(q) } \rp) dq \;.
$$
For this reason, we do require the tail bound $F$ 
to be good enough such that the above integral is at least convergent. 

\begin{theorem} \label{thm:f-tail}
Let $\eps, \tau \in (0,1)$., $F$ be some monotonically decreasing function $F: \R^+ \mapsto [0,1]$ 
such that $Q_{F,\eps} \eqdef \int_{0}^{\infty} \min\lp(\eps, \sqrt{ \eps F(q) } \rp) dq$ 
is convergent. Suppose $X$ is an $F$-tail bound product distribution.
Then, for sufficiently small $\epsilon$, there exists an algorithm \textbf{Non-parametric-Estimation}
(Algorithm~\ref{alg:meta-alg})
which robustly estimates the mean of $X$ under $\eps$ corruption 
in the online setting with accuracy $O\lp( Q_{F, \eps}\rp)$, 
failure probability $\tau$, and sample complexity 
$$
n \geq 
2^T \cdot \poly(T, 1/\eps, 1/F(L)) \cdot \log(L/(Q_{F,\eps}\cdot\tau))\, ,
$$
where $L \eqdef  \inf_{z} \lp( \int_{q=z}^{\infty} F(q) dq \leq \frac{1}{\sqrt{T}} Q_{F, \eps} \rp)$.
\end{theorem}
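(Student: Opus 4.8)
The plan is to reduce the task to a family of binary-product mean-estimation problems via the layer-cake identity $\E[X_t]=\int_0^\infty \Pr[X_t>q]\,dq-\int_0^\infty \Pr[X_t<-q]\,dq$ (valid since $\E|X_t|\le\int_0^\infty F(q)\,dq<\infty$) and then invoke \Cref{lem:binary-product-estimation} on each piece. First I would discretize: fix a step size $h\eqdef c\,Q_{F,\eps}/\sqrt T$ for a small absolute constant $c>0$, set $k\eqdef\lceil L/h\rceil$ with $L$ as in the statement, and let $q_j\eqdef jh$ for $j=0,\dots,k-1$. For each $j$, introduce the binary product distributions $Y(q_j),Z(q_j)$ on $\{0,1\}^T$ given coordinatewise by $Y(q_j)_t\eqdef\mathbbm{1}\{X_t>q_j\}$ and $Z(q_j)_t\eqdef\mathbbm{1}\{X_t<-q_j\}$. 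Writing $p_t(q)\eqdef\Pr[X_t>q]$ and $m_t(q)\eqdef\Pr[X_t<-q]$, the $F$-tail bound gives $p_t(q_j),m_t(q_j)\le F(q_j)$, so $Y(q_j)$ and $Z(q_j)$ are $\gamma_j$-bounded with $\gamma_j\eqdef F(q_j)\ge F(L)$ (capped at $1$).

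Next I would note that an $\eps$-corruption of the $X$-samples induces, coordinate by coordinate, an $\eps$-corruption of the samples of every $Y(q_j)$ and $Z(q_j)$ (the corrupted indices are unchanged and the replaced points simply map to arbitrary points of $\{0,1\}^T$), and that revealing the $t$-th block of the $X$-samples reveals the $t$-th coordinate of all the $Y(q_j)$- and $Z(q_j)$-samples. Hence I would run \textbf{Binary-Product-Estimation} (Algorithm~\ref{alg:binary-product-estimation}) in parallel on the $2k$ induced corrupted datasets, sharing one sample set, each invocation taking its own parameter $\gamma=\gamma_j$ and target failure probability $\tau/(10k)$. A union bound shows that with probability $\ge1-\tau$ all $2k$ runs succeed, producing online estimates $\hat p_t(q_j),\hat m_t(q_j)$ with $\snorm{2}{\hat p(q_j)-p(q_j)}\le O(\min(\eps,\sqrt{\gamma_j\eps}))$ and the analogous bound for $\hat m$, where $\hat p(q_j),p(q_j),\hat m(q_j),m(q_j)\in\R^T$ are the vectors whose $t$-th entries are $\hat p_t(q_j),p_t(q_j)$, etc. The sample complexity required is $\max_j\big(2^T\poly(T,1/\eps,1/\gamma_j)\log(k/\tau)\big)$, which is at most $2^T\poly(T,1/\eps,1/F(L))\log(L/(Q_{F,\eps}\tau))$ since $\gamma_j\ge F(L)$ and $k=O(L\sqrt T/Q_{F,\eps})$.

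Then I would output, at round $t$, the Riemann sum $\mu_t\eqdef h\sum_{j=0}^{k-1}\big(\hat p_t(q_j)-\hat m_t(q_j)\big)$, which uses only quantities available after round $t$ and is therefore a legitimate online estimate. To bound $\snorm{2}{\mu-\mu^*}$ I would split $\mu_t-\mu^*_t$ into three parts: (a) the tail truncation $\mu^*_t-\int_0^L(p_t(q)-m_t(q))\,dq$, whose magnitude is at most $\int_L^\infty(p_t(q)+m_t(q))\,dq\le\int_L^\infty F(q)\,dq\le Q_{F,\eps}/\sqrt T$ per coordinate by the definition of $L$, hence has $\ell_2$-norm over $t$ at most $Q_{F,\eps}$; (b) the discretization error between that integral and $h\sum_{j=0}^{k-1}(p_t(q_j)-m_t(q_j))$, which, since $q\mapsto p_t(q)$ and $q\mapsto m_t(q)$ are monotone nonincreasing, is $O(h)$ per coordinate and so has $\ell_2$-norm $O(h\sqrt T)=O(Q_{F,\eps})$; and (c) the estimation error $h\sum_{j=0}^{k-1}\big((\hat p_t(q_j)-p_t(q_j))-(\hat m_t(q_j)-m_t(q_j))\big)$. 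Writing (c) as $h\sum_j(u_j-w_j)$ with $u_j,w_j\in\R^T$ the coordinatewise errors of $\hat p(q_j)$ and $\hat m(q_j)$, Minkowski's inequality gives $\snorm{2}{h\sum_j(u_j-w_j)}\le h\sum_{j}\big(\snorm{2}{u_j}+\snorm{2}{w_j}\big)\le h\sum_{j}O(\min(\eps,\sqrt{\gamma_j\eps}))$, a left Riemann sum of the nonincreasing integrand $\min(\eps,\sqrt{\eps F(q)})$, hence equal to $\int_0^\infty\min(\eps,\sqrt{\eps F(q)})\,dq=Q_{F,\eps}$ up to $O(h)=O(Q_{F,\eps})$ slack. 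Summing (a)--(c) gives $\snorm{2}{\mu-\mu^*}=O(Q_{F,\eps})$. As in the proof of \Cref{thm:optimal-gaussian}, a preliminary calibration using a robust $1$-dimensional estimator on a small set of reserved samples justifies the ``centered at $0$'' normalization of \Cref{def:f-tail-bound}.

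The crux --- precisely the obstruction flagged in \Cref{ssec:techniques} --- is part (c). \Cref{lem:binary-product-estimation} bounds $\sum_t(\cdot)^2$ only \emph{separately} for each threshold $q_j$, i.e.\ it controls each vector norm $\snorm{2}{u_j}$, and the correct way to aggregate across thresholds is to add the vector norms (Minkowski), not to add squared per-coordinate errors; this is exactly what converts the per-threshold guarantees into an $\ell_2$ bound on $\mu-\mu^*$ of size $\int_0^\infty\min(\eps,\sqrt{\eps F(q)})\,dq=Q_{F,\eps}$ rather than something on the order of $\sqrt T\,\sup_q(\cdot)$. Everything else is bookkeeping: choosing $h\asymp Q_{F,\eps}/\sqrt T$ and the cutoff $L$ so that truncation and discretization are each $O(Q_{F,\eps})$, verifying that the monotone left Riemann sums approximating $\int_0^L p_t(q)\,dq$ and $\int_0^L\min(\eps,\sqrt{\eps F(q)})\,dq$ err by only $O(h)$, and checking that the union bound over the $k=O(L\sqrt T/Q_{F,\eps})$ grid points costs merely a logarithmic factor in the sample size.
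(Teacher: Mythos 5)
Your proposal is correct and follows essentially the same route as the paper's proof: discretize the layer-cake identity on $[0,L]$ with grid spacing $\Theta(Q_{F,\eps}/\sqrt{T})$, run \textbf{Binary-Product-Estimation} per threshold with parameter $\gamma=F(q_j)$ and a union bound, and aggregate the per-threshold errors via the triangle (Minkowski) inequality so that they form a Riemann sum of the decreasing integrand $\min(\eps,\sqrt{\eps F(q)})$, with tail and discretization errors each $O(Q_{F,\eps})$. Your three-way split (truncation, discretization, estimation) is just a mild rearrangement of the paper's comparison to the hypothetical exact-CDF estimator $\hat\mu$, and your identification of the Minkowski aggregation as the crux matches the paper's argument exactly.
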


We next discuss the components for the algorithm.
A key property used in obtaining \Cref{thm:optimal-gaussian} 
is that one can uniquely recover the mean of the unknown distribution 
given its cumulative density function evaluated at a point. 
This is no longer the case for nonparametric families of distributions. 
Nonetheless, we claim it is still possible to approximately recover 
the mean if we have access to the  distribution's cumulative density function 
at many different points. The high-level idea is to rely on the following folklore 
inequality that relates a random variable's mean and its cumulative distribution function. 
\begin{claim} \label{clm:folklore}
Let $U$ be a one dimensional random variable. Then it holds
$$
\E[U] = \int_{0}^{\infty} \Pr[ U \geq q ] dq +
\int_{0}^{-\infty} \Pr[ U \leq q ] dq \;.
$$
\end{claim}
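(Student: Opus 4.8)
The plan is to reduce the identity to the standard ``layer-cake'' (tail-sum) representation of the expectation of a nonnegative random variable. First I would split $U = U^+ - U^-$, where $U^+ \eqdef \max(U, 0)$ and $U^- \eqdef \max(-U, 0)$, so that $\E[U] = \E[U^+] - \E[U^-]$; this decomposition is valid whenever at least one of the two terms is finite, which always holds in our applications since the one-dimensional random variables to which we apply the claim have finite mean.

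The key auxiliary fact is the layer-cake formula: for any nonnegative random variable $V$ one has $\E[V] = \int_0^\infty \Pr[V \geq q]\, dq$. To see this, write $V = \int_0^\infty \mathbbm 1\{ V \geq q \}\, dq$ pointwise, take expectations of both sides, and interchange the expectation with the integral; the interchange is justified by Tonelli's theorem, since the integrand $(\omega, q) \mapsto \mathbbm 1\{ V(\omega) \geq q \}$ is nonnegative and jointly measurable.

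It then remains to assemble the pieces. Applying the layer-cake formula to $V = U^+$ gives $\E[U^+] = \int_0^\infty \Pr[U^+ \geq q]\, dq = \int_0^\infty \Pr[U \geq q]\, dq$, using that $\Pr[U^+ \geq q] = \Pr[U \geq q]$ for $q > 0$. Applying it to $V = U^-$ gives $\E[U^-] = \int_0^\infty \Pr[U^- \geq q]\, dq = \int_0^\infty \Pr[U \leq -q]\, dq$, and the change of variables $q \mapsto -q$ rewrites the right-hand side as $-\int_0^{-\infty} \Pr[U \leq q]\, dq$. Subtracting the two displays then yields $\E[U] = \E[U^+] - \E[U^-] = \int_0^\infty \Pr[U \geq q]\, dq + \int_0^{-\infty} \Pr[U \leq q]\, dq$, which is exactly the claim. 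I do not anticipate any genuine obstacle here: the only step requiring care is the exchange of expectation and integral, which is immediate from Tonelli since every quantity involved is nonnegative, modulo the mild finiteness caveat noted above.
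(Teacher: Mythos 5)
Your proof is correct. The paper states this claim as a folklore fact and gives no proof of its own, so there is no argument to compare against; your derivation --- splitting $U = U^{+} - U^{-}$, applying the layer-cake identity $\E[V] = \int_0^\infty \Pr[V \geq q]\,dq$ via Tonelli, and flipping the orientation of the second integral to match the $\int_0^{-\infty}$ convention in the statement --- is the standard one, and your remark that $\E[U]$ must be well-defined (at least one of $\E[U^{+}],\E[U^{-}]$ finite) is the right caveat, which holds in all of the paper's applications.
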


The above integral would directly give us a way of computing the mean 
if the random variable is discrete and of bounded support 
(as the integral would have a closed form that can be evaluated 
with finite many queries to the variable's CDF).
For continuous distributions following proper tail bounds, 
we can nonetheless still try to approximate the integral with its Riemann sum.

\begin{definition}[$n$-Rectangle Riemann Sum]
Let $f: [a,b] \mapsto \R$ be a continuous function. 
The $n$-rectangle left and right Riemann sums of the integral 
$\int_{a}^b f(x) dx$ is defined as 
$$ \LRS(f,a,b,n) = \sum_{i=1}^n f(x_{i-1}) \cdot \lp( b-a\rp)/n \, ,
\RRS(f,a,b,n)
= \sum_{i=1}^n f(x_{i}) \cdot \lp( b-a\rp)/n \;, 
$$
where $x_0, \cdots, x_n$ partition $[a,b]$ into intervals of equal sizes.
\end{definition}

The following result on the approximation error of Riemann Sum is standard.

\begin{lemma} \label{lem:Riemann}
Suppose $f$ is integrable on $[a,b]$ and let $n$ be a positive integer. Then,
if $f$ is monotonically increasing (or decreasing), we have
$$
\abs{ \int_a^b f(x) dx - \RRS \lp( f,a,b,n \rp) }
\leq \abs{ \lp(f(b) - f(a)\rp) \cdot \lp( b-a\rp) }/n \;.
$$
The same bound holds for $\LRS \lp( f,a,b,n \rp)$.
\end{lemma}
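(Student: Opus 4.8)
The plan is to reduce to a single case and then run the textbook per-subinterval comparison. Write $h \eqdef (b-a)/n$ and $x_i \eqdef a + i h$ for $i = 0, 1, \ldots, n$, so that $\RRS(f,a,b,n) = h \sum_{i=1}^{n} f(x_i)$ and $\LRS(f,a,b,n) = h \sum_{i=1}^{n} f(x_{i-1})$. First I would observe that it suffices to prove the bound for $\RRS$ when $f$ is monotonically increasing: the monotonically decreasing case follows by applying this to $-f$, which flips the direction of monotonicity and leaves both the approximation error $\abs{\RRS(f,a,b,n) - \int_a^b f}$ and $\abs{f(b)-f(a)}$ unchanged; and the two statements about $\LRS$ follow by the change of variable $x \mapsto a+b-x$, under which $\LRS(f,a,b,n)$ becomes $\RRS$ of the reflected function while the integral and $\abs{f(b)-f(a)}$ are unchanged.

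So assume $f$ is monotonically increasing and consider $\RRS(f,a,b,n)$. On each subinterval $[x_{i-1}, x_i]$ monotonicity gives $f(x_{i-1}) \leq f(x) \leq f(x_i)$ for all $x \in [x_{i-1}, x_i]$, hence after integrating over that subinterval,
$$
0 \;\leq\; f(x_i)\, h - \int_{x_{i-1}}^{x_i} f(x)\, dx \;\leq\; \lp( f(x_i) - f(x_{i-1}) \rp)\, h \;.
$$
Summing over $i = 1, \ldots, n$ and using $\sum_{i=1}^n \int_{x_{i-1}}^{x_i} f(x)\,dx = \int_a^b f(x)\,dx$, we obtain that $\RRS(f,a,b,n) - \int_a^b f(x)\, dx$ is nonnegative and at most $h \sum_{i=1}^n \lp( f(x_i) - f(x_{i-1}) \rp) = h \lp( f(x_n) - f(x_0) \rp) = (b-a)\lp( f(b) - f(a) \rp)/n$ by telescoping. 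Since $f$ increasing gives $f(b) - f(a) = \abs{f(b) - f(a)} \geq 0$, and since the error itself is nonnegative, taking absolute values yields exactly the claimed bound $\abs{\lp( f(b) - f(a) \rp)(b-a)}/n$.

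There is no genuine obstacle here: the whole argument is elementary real analysis, and one does not even need the integrability hypothesis beyond $f$ being monotone (monotone functions on a compact interval are automatically Riemann integrable). The only thing to be careful about is bookkeeping — which endpoint value of $f$ upper- versus lower-bounds the integral on a subinterval, and hence whether a given Riemann sum over- or under-estimates $\int_a^b f$ — but, as noted above, all four variants collapse to the single increasing/right-sum case after the substitutions $f \mapsto -f$ and $x \mapsto a + b - x$, so this bookkeeping need only be done once.
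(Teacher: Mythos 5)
Your proof is correct: the paper states this lemma as a standard fact without supplying a proof, and your argument is exactly the standard one it implicitly relies on (per-subinterval monotone bounds, telescoping, and the symmetry reductions $f \mapsto -f$ and $x \mapsto a+b-x$ to collapse the four cases into one). Nothing is missing, and your side remark that monotonicity already guarantees integrability is also accurate.
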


Notice that for the equation in \Cref{clm:folklore}, 
$\Pr[ U \geq q ]$ is monotonically decreasing and $\Pr[U \leq q]$ 
is monotonically increasing (with respect to $q$). 
Hence, we can approximate the two parts separately with the Riemann Sum.
One slight issue is that the domain of the integral may be infinite. 
We note that, if the random variable satisfies proper tail bounds, 
we can restrict the domain to some finite interval $[-L,L]$ 
and create only negligible bias to our approximation if $L$ is large enough.

Now, we go back to the problem of robustly estimating the mean of an $F$-tail bound product distribution $X$ in the online setting. 
The high-level idea of the algorithm is the following.
For each $t \in [T]$, we define the indicator variable
$Y(q)_t = \mathbbm 1\{ X_t \geq q \}$ if $ q < 0 $ and $Y_t(q) = \mathbbm 1 \{X_t \leq q\}$ if $q \geq 0$.
Notice that we have exactly $\E \lp[ Y_t(q) \rp] = \Pr[ X_t \leq q ]$ for $q \geq 0$, which corresponds to the $X_t$'s CDF function at $q$.
If we are able to estimate the mean of $Y(q)_t$, this then gives us (noisy) query access to the CDF function of $X_t$. We can then leverage the Riemann Sum approximation of the integral in \Cref{clm:folklore} to further compute the mean of $X_t$.

The remaining task is then to estimate each $Y(q)_t$ up to good accuracy.
This is made possible with the following observation: Fixing some $q \in \R^+$, the variables $Y(q)_1, \cdots, Y(q)_T$ form a binary product distribution.
By \Cref{lem:binary-product-estimation}, we can then compute a series of estimators $\tilde Y(q)_1, \cdots, \tilde Y(q)_T$ in the online setting such that the total error is at most $ \snorm{2}{Y(q) - \tilde Y(q)} = O( \eps )$.
Though the estimation error for one $Y(q)$ is now independent of $T$, 
the total error may still get out of control as the errors for different $Y(q)$ add up linearly in the Riemann Sum. 
Fortunately, we have the extra condition that $Y(q)$ is $F(q)$-bounded by the tail bound of $X$.
By \Cref{lem:binary-product-estimation}, 
our estimation accuracy naturally improves as $F(q)$ becomes smaller. In particular, the accuracy is given by $\snorm{2}{\tilde Y(q) - Y(q)} \leq \min \lp( \eps, \sqrt{ \eps \cdot F(q) } \rp)$.
Therefore, as long as the integral
$\int_{0}^{ \infty } \min \lp( \eps, \sqrt{ \eps \cdot F(q) } \rp) dq $ is convergent, our total estimation error remains a quantity independent of $T$. We now give the algorithm and its analysis, which constitutes the proof of \Cref{thm:f-tail}.
\begin{algorithm}[h] 
\caption{Non-parametric-Estimation}
\label{alg:meta-alg}
\begin{algorithmic}[1]
\STATE \textbf{Input:} corruption $\epsilon$, round number $T$, $n$ samples whose coordinates are revealed one by one in each round.
\STATE Set $Q_{F,\eps} = \int_{q=0}^{\infty} \min\lp(\eps, \sqrt{ \eps \cdot F(q) } \rp) dq$, $L = \inf_z \lp( \int_{q=z}^{\infty} F(q) dq  \leq Q_{F, \eps} / \sqrt{T} \rp)$,
$m = \floor{ L \cdot \sqrt{T} / Q_{F, \eps} }$.
\STATE Choose $q_0, \cdots, q_m$ such that the points partition $[0, L]$ into intervals of equal size.
\FOR{$t=1,2,...,T$}
   \STATE In the $t$-th round, $x_t^{(1)}, \cdots, x_t^{(n)}$ are revealed.
   \FOR{$i=1,\cdots,m$}
   \STATE Compute the samples for $Y(q_i)_t \eqdef \mathbbm 1 \{ X_t \geq q_i \}$,
   $Y(-q_i)_t \eqdef \mathbbm 1 \{ X_t \leq -q_i \}$.
   $$
   y(q_i)_t^{(j)} = \mathbbm 1 \{x_t^{(j)} \geq q_i\} \, \forall j\in [n]\, ,
   y(-q_i)_t^{(j)} = \mathbbm 1 \{x_t^{(j)} \leq -q_i\} \, \forall j\in [n].
   $$
   \STATE Compute the robust estimator
   \begin{align}
   &\tilde Y(q_i) = \textbf{Binary-Estimation}
   \lp( y(q_i)_t^{(1)}, \cdots, y(q_i)_t^{(n)}, \gamma = F\lp(q_i\rp)\rp) \, , \\
   &\tilde Y(-q_i) = \textbf{Binary-Estimation}
   \lp( y(-q_i)_t^{(1)}, \cdots, y(-q_i)_t^{(n)}, \gamma = F\lp(q_i\rp)\rp) .
   \end{align}
   \ENDFOR
   \STATE 
   $\mu_t = 
   \sum_{i=1}^m  
   \tilde Y(q_{i})
   \cdot L/m
   - 
   \sum_{i=1}^m 
   \tilde Y(-q_{i})
   \cdot L/m.
   $
    \STATE \textbf{Output}: $\mu_t$.
\ENDFOR 
\end{algorithmic}
\label{alg: eff_filter}
\end{algorithm}

\begin{proof}[Proof of \Cref{thm:f-tail}]
For $q \in \R$, define the indicator variables
$$
Y(q)_t =
    \begin{cases}
        \mathbbm 1 \{  X_t \geq q \} & \text{if } q \geq 0 \, , \\
        \mathbbm 1 \{ X_t \leq q \} &
        \text{if } q < 0.
    \end{cases}
$$
Recall that in Algorithm~\ref{alg:meta-alg} we take
$$
Q_{F,\eps} = \int_{q=0}^{\infty} \min\lp(\eps, \sqrt{ \eps \cdot F(q) } \rp) dq,L = \inf_z \lp( \int_{q=z}^{\infty} F(q) dq  \leq Q_{F, \eps} / \sqrt{T} \rp),m = \floor{ L \cdot \sqrt{T} / Q_{F, \eps} } \, ,
$$
and $q_0, \cdots, q_m$ such that the points partition $[0, L]$ into intervals of equal size.
Now, consider a hypothetical estimator $\hat \mu$ defined as
$$
\hat \mu_t = 
\sum_{ i=1 }^m
\E [ Y( q_{i-1} )_t ]
\cdot L/m
-
\sum_{ i=1 }^m
\E [ Y( -q_i )_t ]
\cdot L/m
$$
Since $\E[Y(q_i)_t] = \Pr[ X_t \geq q_i ]$ and $\E[ Y(-q_i)_t ] = \Pr[X_t \leq q_i]$,
the two terms correspond to the $m$-rectangle Riemann Sum of 
$\int_{0}^{L} \Pr(X_t \geq q) dq$ and 
$\int_{-L}^{0} \Pr(X_t \leq q) dq$ respectively.
Therefore, 
by \Cref{lem:Riemann},
the approximation error of the Riemann Sum is at most $O(L/m) = O( Q_{F,\eps}/\sqrt{T} )$.
On the other hand, 
$\int_{L}^{\infty} \Pr(X_t \geq q) dq, \int_{L}^{\infty} \Pr(X_t \leq q) dq$ is at most $\int_{L}^{\infty} F(q) dq$ by the tail bound of $X_t$, which is at most $Q_{F,\eps}/\sqrt{T}$ by our definition of $L$.
Therefore, we must have
$\snorm{2}{ \mu^* - \hat \mu }
\leq 
O(1) \cdot
\sqrt{ T \cdot \lp( Q_{F,\eps}/\sqrt{T} \rp)^2}
\leq O(Q_{F,\eps})$.
Then, it suffices to bound $\snorm{2}{ \hat \mu - \mu }$. In particular, we have
\begin{align*}
    \snorm{2}{ \hat \mu - \mu }
&=
\snorm{2}{  \sum_{i=1}^m 
\lp( \tilde Y(q_{i}) - \E \lp[ Y(q_{i}) \rp] \rp)
\cdot L/m
- 
 \sum_{i=1}^m 
\lp( \tilde Y(-q_i) - \E \lp[ Y(-q_i) \rp] \rp)
\cdot L/m
} \\
&\leq
  \sum_{i=1}^m 
  \snorm{2}{
\lp( \tilde Y(q_{i}) - \E \lp[ Y(q_{i}) \rp] \rp)}
\cdot L/m
+
 \sum_{i=1}^m 
 \snorm{2}{
\lp( \tilde Y(-q_i) - \E \lp[ Y(-q_i) \rp] \rp)
}
\cdot L/m
\end{align*}
We will focus only on the first term since the bound for the other term is similar. 
By \Cref{lem:binary-product-estimation}~, as long as the number of samples is at least 
$$
n \geq 2^T 
\poly\lp(T, \eps, 
1/F(L) \rp) \cdot \log(m/\tau) $$ 
where $m = \floor{\sqrt{T} \cdot L / Q_{F, \eps}}$, the estimator $\tilde Y(q_i)$ satisfies the condition 
$\snorm{2}{ \tilde Y(q_i) - Y(q_i) } \leq \min \lp( \eps, \sqrt{\eps \cdot F(q_i)} \rp)$ with probability at least $1 - \tau / (10m)$. 
By union bound, the condition holds for all $\tilde Y(q_i)$ with high probability.
This then gives
\begin{align*}
\sum_{i=1}^{m}
  \snorm{2}{
\lp( \tilde Y(q_{i}) - Y(q_{i}) \rp)} \cdot L/m
&\leq
\sum_{i=1}^{m}
\min \lp( \eps, \sqrt{ \eps \cdot F(q_i) } \rp)
\cdot L/m \\
&\leq 
\int_{q=0}^{\infty}
\min \lp( \eps, \sqrt{ \eps \cdot F(q) } \rp) dq 
= Q_{F, \eps}
\, ,
\end{align*}
where in the second inequality 
we view the sum as the Right Riemann Sum of $\min(\eps, \sqrt{\eps \cdot F(q)})$, which is a monotonically decreasing function of $q$. Therefore, by triangle's inequality, the total error of the algorithm is at most $\snorm{2}{ \mu^* - \mu }
\leq \snorm{2}{ \mu^* - \hat \mu } + \snorm{2}{ \hat \mu - \mu } \leq O(Q_{F, \eps})$.

\end{proof}

As corollaries, we obtain algorithms for estimating 
the mean of many important families of product distributions with optimal accuracy. 
\begin{corollary} \label{cor:bounded-k}
Let $\eps, \tau \in (0,1)$.
Let $X_1, \cdots, X_T$ be distributions satisfying that 
$ \E \lp[  \lp( X_t - \E[ X_t ] \rp)^k \rp] \leq 1$ for some constant integer $k \geq 4$.
Suppose $X$ is the product of the distributions $X_t$.
Then, for sufficiently small $\epsilon$, there exists an algorithm which robustly estimates the mean of $X$ under $\eps$ corruption in the online setting with accuracy 
$ O \lp( \eps^{1-1/k} \rp) $, failure probability $\tau$  and sample complexity 
$$
n \geq 2^T \cdot \poly(T,1/\eps) \cdot \log(1/\tau) \;.
$$
\end{corollary}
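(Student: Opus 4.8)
The plan is to obtain \Cref{cor:bounded-k} as a direct instantiation of the meta-algorithm of \Cref{thm:f-tail}, with the tail function $F$ coming from Markov's inequality applied to the $k$-th moment. First I would perform a calibration preprocessing, exactly analogous to the one in the proof of \Cref{thm:optimal-gaussian}: reserve $\poly(1/\eps)\cdot\log(T/\tau)$ samples, and at round $t$ run a standard one-dimensional robust mean estimator for distributions with bounded $k$-th moment (e.g.\ a trimmed mean) on the reserved $t$-th coordinates to get $\hat\mu_t$ with $\abs{\hat\mu_t-\mu^*_t}=O(\eps^{1-1/k})$ with probability $1-\tau/(10T)$; then subtract $\hat\mu_t$ from the $t$-th coordinate of every remaining sample. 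Estimating the mean of the shifted product distribution is equivalent to the original problem (add $\hat\mu_t$ back at the end), so from now on I may assume each coordinate has mean $O(1)$ around the origin.

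Next I would check that the shifted distribution is an $F$-tail bound product distribution in the sense of \Cref{def:f-tail-bound} with $F(q)=\min(1,\,C_k/q^k)$ for a constant $C_k$ depending only on $k$. By Minkowski's inequality, $(\E[\abs{X_t-\hat\mu_t}^k])^{1/k}\le (\E[\abs{X_t-\mu^*_t}^k])^{1/k}+\abs{\mu^*_t-\hat\mu_t}\le 1+O(1)=O(1)$, so $\E[\abs{X_t-\hat\mu_t}^k]=O(1)$, and Markov's inequality on the $k$-th moment yields $\Pr[\abs{X_t-\hat\mu_t}\ge q]\le C_k/q^k$; the coordinates remain independent, so the definition is satisfied.

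The main computation is to evaluate $Q_{F,\eps}=\int_0^\infty \min(\eps,\sqrt{\eps F(q)})\,dq$. I would split the integral at the threshold $q_0=\Theta(\eps^{-1/k})$ where $F(q_0)=\eps$: on $[0,q_0]$ the integrand is $\eps$, contributing $\eps\cdot q_0=\Theta(\eps^{1-1/k})$; on $[q_0,\infty)$ the integrand is $\sqrt{\eps C_k}\,q^{-k/2}$, and since $k\ge 4$ gives $k/2>1$ the integral converges and equals $\Theta(\sqrt{\eps}\cdot q_0^{1-k/2})=\Theta(\sqrt{\eps}\cdot\eps^{1/2-1/k})=\Theta(\eps^{1-1/k})$. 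Hence $Q_{F,\eps}=\Theta(\eps^{1-1/k})$, and \Cref{thm:f-tail} immediately gives error $O(\eps^{1-1/k})$. For the sample complexity I would then verify that $L=\inf_z\{\int_z^\infty F(q)\,dq\le Q_{F,\eps}/\sqrt{T}\}$ satisfies $\int_z^\infty C_k q^{-k}\,dq=\Theta(z^{1-k})$, so $L=\Theta((\sqrt{T}/Q_{F,\eps})^{1/(k-1)})=\poly(T,1/\eps)$ and $1/F(L)=\Theta(L^k)=\poly(T,1/\eps)$; plugging these into the sample-complexity bound of \Cref{thm:f-tail} collapses it to $n\ge 2^T\cdot\poly(T,1/\eps)\cdot\log(1/\tau)$, as claimed.

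I do not expect a serious obstacle: the only points that need care are (i) confirming that the hypothesis $k\ge 4$ (indeed any $k>2$) is exactly what makes the tail integral converge, so that $Q_{F,\eps}$ is finite and equals $\Theta(\eps^{1-1/k})$; and (ii) the bookkeeping of the $\poly$ factors in $L$ and $1/F(L)$ so that the final sample complexity genuinely reduces to $2^T\poly(T,1/\eps)\log(1/\tau)$ with no hidden worse dependence.
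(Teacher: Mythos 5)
Your proposal is correct and follows essentially the same route as the paper: calibrate each coordinate with a one-dimensional robust estimator, use the $k$-th moment Markov/Chebyshev bound to exhibit $X$ as an $F$-tail bound product distribution with $F(q)\asymp q^{-k}$, compute $Q_{F,\eps}=\Theta(\eps^{1-1/k})$ by splitting at $q\asymp\eps^{-1/k}$, and check that $L$ and $1/F(L)$ are $\poly(T,1/\eps)$ so the sample complexity of \Cref{thm:f-tail} collapses as claimed. The only cosmetic difference is that you control the centering error via Minkowski on the shifted moment, whereas the paper folds the $O(\sqrt{\eps})$ offset directly into the tail function $F$; both are valid.
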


\begin{proof}
Similar to the proof of \Cref{thm:optimal-gaussian}, 
we can without loss of generality assume that $\E[X_t] = \sqrt{\eps}$. 
In particular we can always reserve $\poly(1/\eps) \cdot \log(T/\tau)$ 
many samples and use a $1$d robust estimator to estimate $\E[X_t]$ 
up to error $O(\eps^{1-1/k})$, which is bounded above by $\sqrt{\eps}$ 
for sufficiently small $\eps$. Then, we can then use the estimation 
to calibrate the mean and reduce the task into the scenario, 
where $\E \lp[ X_t \rp] \leq \sqrt{\eps}$ for all $t \in [T]$.

By Chebyshev's Inequality (generalized for higher moments), 
it holds that
$ \Pr[ \abs{X_t - \E[X_t] } \geq q ] \leq q^{-k} $. This implies that
$ \Pr[ \abs{X_t } \geq q ] \leq \lp(q - \sqrt{\eps} \rp)^{-k}$ for $q > 1 + \sqrt{\eps}$.
Hence, $X$ is an $F$-tail product distribution with 
$$
F(q) = 
\begin{cases}
&1 \text{ when } q < 1 + \sqrt{\eps} \, , \\
& \lp( q - \sqrt{\eps} \rp)^{-k} \text{ when } q \geq 1 + \sqrt{\eps}.
\end{cases}
$$
Then, the quantity
$Q_{F, \eps}$ is convergence whenever $k \geq 4$. In particular, we have
\begin{align*}
Q_{F, \eps}
&\leq 
\eps \cdot \eps^{-1/k}
+
\sqrt{\eps}  \cdot 
\int_{ \eps^{-1/k} }^{\infty}
\lp( q - \sqrt{\eps} \rp)^{-k} dq \\
&\leq 
\eps \cdot \eps^{-1/k}
+
\sqrt{\eps}  \cdot 
\int_{ \eps^{-1/k} }^{\infty}
\lp( q  / 2 \rp)^{-k} dq \\
&\leq 
\eps \cdot \eps^{-1/k}
+
O \lp( \sqrt{\eps} \rp)  \cdot 
\int_{ \eps^{-1/k} }^{\infty}
\lp( q  \rp)^{-k} dq
= O \lp(  \eps^{1-1/k} \rp).
\end{align*}
Then, by \Cref{thm:f-tail}, the accuracy of the meta-algorithm 
is then given by $O \lp(  \eps^{1-1/k} \rp)$.
Then, the quantity $L$ is given by
\begin{align*}
    L = \inf_{z} \lp(  \int_z^{\infty} F(q) dq \leq \frac{1}{\sqrt{T}} Q_{F, \eps} \rp)
    \leq \inf_{z} \lp( \frac{1}{k-1} z^{1-k} \leq \frac{1}{\sqrt{T}}  \eps^{1 - 1/k} \rp)
    \leq  \poly(T, 1/\eps).
\end{align*}
Accordingly, we have $1/F(L) \leq L^k = \poly(T, 1/\eps)$ for constant $k$.
Hence, the sample complexity is given by
$2^T \cdot \poly(T, 1/\eps) \cdot \log(1/\tau)$.
\end{proof}

\begin{corollary} \label{cor:subgaussian}
Let $\eps, \tau \in (0,1)$.
Let $X_1, \cdots, X_T$ be sub-gaussian distributions with unit variance. 
Suppose $X$ is the product of the distributions $X_t$.
Then, for sufficiently small $\epsilon$, there exists an algorithm which robustly estimates the mean of $X$ under $\eps$ corruption in the online setting with accuracy 
$O\lp( \eps \cdot \sqrt{ \log(1/\eps)  }\rp)
$, failure probability $\tau$  and sample complexity 
$$
n \geq 2^T \cdot \poly(T, 1/\eps) \cdot \log(1/\tau).
$$
\end{corollary}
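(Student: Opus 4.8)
The plan is to instantiate the meta-algorithm \textbf{Non-parametric-Estimation} of \Cref{thm:f-tail}, exactly as in the proof of \Cref{cor:bounded-k}, with the sub-Gaussian tail bound in place of the polynomial one. First I would perform the same calibration step: reserve $\poly(1/\eps)\cdot\log(T/\tau)$ samples and, in round $t$, apply a one-dimensional robust mean estimator to the reserved coordinates to obtain $\hat\mu_t$ with $\abs{\hat\mu_t-\mu_t^\ast}\leq O(\eps\sqrt{\log(1/\eps)})$ with probability $1-\tau/(10T)$ (this rate is attainable for unit-variance sub-Gaussians). Subtracting $\hat\mu_t$ from the $t$-th coordinate of the remaining samples, we may assume $\abs{\E[X_t]}\leq 1$ for all $t$ while each $X_t$ remains sub-Gaussian with a universal constant parameter (a deterministic shift by an $O(1)$ amount changes the sub-Gaussian norm by at most a constant); estimating the mean of the recentered product distribution is equivalent to the original task.

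Next I would record the tail bound and evaluate the relevant integral. Since each recentered $X_t$ is unit-variance sub-Gaussian and centered to within $1$, there is a universal $c>0$ with $\Pr[\abs{X_t}\geq q]\leq 2\exp(-cq^2)$ for $q\geq 2$, so $X$ is an $F$-tail bound product distribution in the sense of \Cref{def:f-tail-bound} with $F(q)=1$ for $q<2$ and $F(q)=2\exp(-cq^2)$ for $q\geq 2$. To compute $Q_{F,\eps}=\int_0^\infty \min(\eps,\sqrt{\eps F(q)})\,dq$, split at $q_0 \eqdef \Theta(\sqrt{\log(1/\eps)})$, the point where $F(q_0)\asymp\eps$ (equivalently $\sqrt{\eps F(q_0)}\asymp\eps$). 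On $[0,q_0]$ the integrand equals $\eps$, contributing $\eps q_0 = O(\eps\sqrt{\log(1/\eps)})$. On $[q_0,\infty)$ the integrand is $\sqrt{2\eps}\exp(-cq^2/2)$, and the standard Gaussian-tail estimate gives $\int_{q_0}^\infty \sqrt{2\eps}\exp(-cq^2/2)\,dq = O\lp(\sqrt{\eps}\cdot \exp(-cq_0^2/2)/q_0\rp) = O\lp(\eps/\sqrt{\log(1/\eps)}\rp)$, which is dominated by the first part. Hence $Q_{F,\eps}=O(\eps\sqrt{\log(1/\eps)})$ and in particular the integral converges, so \Cref{thm:f-tail} applies and yields accuracy $O(Q_{F,\eps})=O(\eps\sqrt{\log(1/\eps)})$.

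Finally I would verify the sample complexity. By definition $L=\inf_z\lp(\int_z^\infty F(q)\,dq \leq Q_{F,\eps}/\sqrt{T}\rp)$; using $\int_z^\infty F(q)\,dq = O(\exp(-cz^2)/z)$ and $Q_{F,\eps}=\Omega(\eps)$, it suffices to take $z=O(\sqrt{\log(T/\eps)})$, so $L=O(\sqrt{\log(T/\eps)})$ and $1/F(L)=O(\exp(cL^2))=\poly(T,1/\eps)$. Plugging into the sample bound of \Cref{thm:f-tail} gives $n \geq 2^T\cdot\poly(T,1/\eps)\cdot\log(L/(Q_{F,\eps}\tau)) = 2^T\cdot\poly(T,1/\eps)\cdot\log(1/\tau)$, as claimed. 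I expect the only genuine work to be the $Q_{F,\eps}$ computation --- pinning down the split point $q_0=\Theta(\sqrt{\log(1/\eps)})$ and bounding the Gaussian tail so the tail contribution is $O(\eps\sqrt{\log(1/\eps)})$ rather than something larger; everything else is a direct substitution into \Cref{thm:f-tail} mirroring \Cref{cor:bounded-k}, with the only subtlety being that the calibration step preserves the sub-Gaussian parameter up to a constant.
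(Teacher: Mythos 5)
Your proposal is correct and follows essentially the same route as the paper's proof: calibrate each coordinate with a 1d robust estimator, view $X$ as an $F$-tail bound product distribution with a Gaussian-type tail, compute $Q_{F,\eps}=O(\eps\sqrt{\log(1/\eps)})$ by splitting the integral at $q_0=\Theta\lp(\sqrt{\log(1/\eps)}\rp)$, and then bound $L$ and $1/F(L)$ to read off the sample complexity from \Cref{thm:f-tail}. If anything, your estimate $L=O\lp(\sqrt{\log(T/\eps)}\rp)$ is the sharper one that the argument actually needs: the paper states the looser $L\leq O(\log(T/\eps))$ but then bounds $1/F(L)\leq\exp(O(1)\cdot\log(T/\eps))$, which implicitly uses $L^2=O(\log(T/\eps))$.
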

\begin{proof}
Again, similar to the proof of \Cref{thm:optimal-gaussian}, 
we can without loss of generality assume that $\E[X_t] = \sqrt{\eps}$. 
In particular we can always reserve $\poly(1/\eps) \cdot \log(T/\tau)$ many samples and use a $1$d robust estimator to estimate $\E[X_t]$ up to error $O(\eps \sqrt{ \log(1/\eps) })$, which is bounded above by $\sqrt{\eps}$ for sufficiently small $\eps$. Then, we can then use the estimation to calibrate the mean and reduce the task into the scenario where $\E \lp[ X_t \rp] \leq \sqrt{\eps}$ for all $t \in [T]$.

Since each $X_t$ is a sub-gaussian distribution, by definition,
we have that 
$\Pr[ \abs{X_t - \E[X_t]} > q  ] \leq \exp(-q^2/2)$ for $q \in \R^+$.
Since $\E[X_t] \leq \sqrt{\eps}$, this implies that
$\Pr[ \abs{X_t} > q  ]
\leq \exp \lp( - ( q -\sqrt{\eps} )^2/2 \rp)
$ for $ q > \sqrt{\eps}$.
Hence, $X$ is an $F$-tail product distribution with 
$$
F(q) = 
\begin{cases}
&1 \text{ when } q < \sqrt{\eps} \, ,\\
&\exp \lp( - ( q -\sqrt{\eps} )^2 \rp)  \text{ when } q \geq \sqrt{\eps}.
\end{cases}
$$
In fact, when $q$ is at least $\sqrt{\log(1/\eps)}$, we will further have 
$F(q) \leq \exp(-q^2)/2$.
Then, the quantity
$Q_{F, \eps}$ is convergent. In particular, we have
\begin{align*}
Q_{F, \eps}
&\leq \eps \cdot \sqrt{ \log(1/\eps) }
+ O \lp( \sqrt{\eps} \rp) \int_{  \sqrt{ \log(1/\eps) } }^{\infty}   \exp(-q^2/2) dq \\
& \leq \eps \cdot \sqrt{ \log(1/\eps) } +  O( \sqrt{\eps} ) \cdot \exp(  -\log(1/\eps)/2
)
= O \lp(  \eps \cdot \sqrt{ \log(1/\eps) } \rp).
\end{align*}
Then, by \Cref{thm:f-tail}, the accuracy of the meta-algorithm is $O \lp(  \eps \cdot \sqrt{ \log(1/\eps) } \rp)$.
Then, the quantity $L$ is given by
\begin{align*}
    L \eqdef \inf_{z} \lp(  \int_z^{\infty} F(q) dq \leq \frac{1}{\sqrt{T}} Q_{F, \eps} \rp)
    \leq \inf_{z} \lp( \int_z^{\infty} \exp(-q^2/2) dq \leq \frac{1}{\sqrt{T}} \eps \cdot \sqrt{\log(1/\eps)} \rp)
    \leq  O\lp(\log(T/\eps)\rp) \, , 
\end{align*}
which implies that
$$
\frac{1}{F(L)}
\leq { \exp(O(1) \cdot \log(T/\eps) ) } 
\leq \poly(T, 1/\eps).
$$
Hence, the sample complexity is given by
$$
2^T \cdot \poly(T, 1/\eps, 1/F(L)) \cdot \log(L/(Q_{F,\eps}\cdot\tau))
\leq 2^T \cdot \poly(T, 1/\eps) \cdot \log(1/\tau).
$$
\end{proof}

\bibliographystyle{alpha}
\bibliography{allrefs.bib}
\newpage
\appendix

\section*{Appendix}

\section{Block Model Extension} \label{sec:block}
A significant limitation of the method discussed in Section~\ref{sec:gen-product} is that it requires every coordinate of the unknown distribution to be independent of the others. 
In this section, we relax the constrain by allowing coordinates that are revealed in the same round to have correlations. 
In other words, we only require the coordinates to be \emph{round-wise} independent.
Formally, we consider the following definition of an $F$-tail bound block distribution, which can be viewed as a generalization of \Cref{def:f-tail-bound}.
\begin{definition}[$F$-tail bound $(T \times d)$-block distribution]
Let $F$ be some monotonically decreasing function $F: \R^+ \mapsto [0,1]$.
We say $X$ is an $F$-tail bound $(T \times d)$-block distribution if $X$ is the product of  $T$ distributions $X_1, \cdots, X_T$ where each $X_t$
is a $d$-dimensional distribution satisfying that
$
\Pr \lp[ \snorm{2}{ X_t  }  \geq q\rp] \leq F(q).
$
\end{definition}
Similar to the requirement of estimating means of product distributions, we require the quantity
$$
Q_{F,\eps} = \int_{0}^{\infty} \min\lp(\eps, \sqrt{ \eps F(q) } \rp) dq \;.
$$
to be convergent.
Our main result in the section is an algorithm which can robustly estimate the mean for such block-wise independent distributions in the online setting, which can be viewed as a generalization of \Cref{thm:f-tail}.
\begin{theorem} \label{thm:block-extension}
Let $\eps, \tau \in (0,1)$, $F$ be some monotonically decreasing function $F: \R^+ \mapsto [0,1]$ such that
$Q_{F,\eps} \eqdef \int_{0}^{\infty} \min\lp(\eps, \sqrt{ \eps F(q) } \rp) dq$ is convergent.
Suppose $X$ is an $F$-tail bound $(T \times d)$-block distribution.
Then, for sufficiently small $\epsilon$, there exists an algorithm which estimates the mean of $X$ under $\eps$ corruption in the $T$-round online setting with error $ O\lp( Q_{F, \eps}  \rp)$, failure probability $\tau$  and sample complexity 
$$
n \geq 
2^{O(Td^2)} \cdot \poly(1/\eps, 1/F(L)) \cdot \log(L/(Q_{F, \eps} \cdot\tau))\, ,
$$
where $L \eqdef \inf_{z} \int_z^{\infty} F(q) dq \leq \frac{1}{\sqrt{T}} Q_{F, \eps}$.
\end{theorem}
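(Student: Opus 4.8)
The plan is to reduce to the binary-product machinery of \Cref{sec:binary-prod} exactly as in the proof of \Cref{thm:f-tail}, but applied to one-dimensional projections of each block, together with a modification of the binary estimator that forces the errors of different projections to be correlated. First I would fix a $\tfrac12$-net $\mathcal V \subseteq S^{d-1}$ of the unit sphere, so that $\snorm{2}{w} \leq 2\max_{v\in\mathcal V}|v\cdot w|$ for every $w\in\R^d$ and $|\mathcal V| = 2^{O(d)}$. As in \Cref{thm:f-tail}, set $L \eqdef \inf_z\{\int_z^\infty F(q)\,dq \leq Q_{F,\eps}/\sqrt T\}$, $m \eqdef \lfloor \sqrt T\,L/Q_{F,\eps}\rfloor$, and let $q_0 < \cdots < q_m$ partition $[0,L]$. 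After a calibration step (reserve $\poly(1/\eps)\log(Td/\tau)$ samples and run a robust $1$d estimator on each coordinate of each block) we may assume each block is essentially centered, so each projection $v\cdot X_t$ inherits the tail bound $\Pr[\,|v\cdot X_t|\geq q\,]\leq \Pr[\snorm{2}{X_t}\geq q]\leq F(q)$; in particular, for each fixed pair $(v,q_i)$ the sequence $\big(\mathbbm 1\{v\cdot X_t\geq q_i\}\big)_{t\in[T]}$ is an $F(q_i)$-bounded binary product distribution. The goal is to produce estimates $\tilde Y(v,q_i)_t$ of $\Pr[v\cdot X_t\geq q_i]$ (and similarly of $\Pr[v\cdot X_t\leq -q_i]$) for all $v\in\mathcal V$, $i\in[m]$, $t\in[T]$, recover an estimate $\hat e_{v,t}$ of $\E[v\cdot X_t]$ from these via the Riemann-sum formula of \Cref{clm:folklore} and \Cref{lem:Riemann}, and output $\hat\mu_t$ as an approximate Chebyshev center $\argmin_{u\in\R^d}\max_{v\in\mathcal V}|v\cdot u - \hat e_{v,t}|$.

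For the estimation of the $\tilde Y(v,q_i)_t$'s I would run, for each of the $2m$ thresholds $\pm q_i$, a separate copy of Algorithm~\ref{alg:binary-product-estimation} modified as follows: in round $t$ every current group $S$ is split not by a single bit but according to which cell of the hyperplane arrangement $\{x\in\R^d : v\cdot x = q_i,\ v\in\mathcal V\}$ contains $x_t^{(j)}$. This arrangement has $2^{O(d\log|\mathcal V|)} = 2^{O(d^2)}$ cells, so the tree for $q_i$ has at most $2^{O(Td^2)}$ groups over all $T$ rounds. Within each group $S$ and for each direction $v$ we take the $F(q_i)$-capped empirical mean of $\mathbbm 1\{v\cdot x_t^{(j)}\geq q_i\}$ as the group estimate, and let $\tilde Y(v,q_i)_t$ be the weighted median (by $|S|$) of these group estimates over the groups of the $q_i$-tree. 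The point of splitting on all of $\mathcal V$ simultaneously is that, although $\tilde Y(v,q_i)$ for a single $v$ is now produced by a finer subdivision than in \Cref{sec:binary-prod}, that subdivision refines the binary split ``$v\cdot x_t\geq q_i$ or not''.

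The analysis would mirror \Cref{sec:binary-prod}. For the $q_i$-tree define the same potential $\Phi_{q_i}(t) = \tfrac1n\sum_S g_{F(q_i)}\big(\eps_S\big)\,|S|$ over its groups. The key claim is the analogue of \Cref{lem:potential-increment-var}: if in round $t$ the worst direction satisfies $\eta\eqdef \max_{v\in\mathcal V}\big|\tilde Y(v,q_i)_t - \Pr[v\cdot X_t\geq q_i]\big|\geq 2\min(\eps,F(q_i))/T$, then $\Phi_{q_i}(t+1)-\Phi_{q_i}(t)\geq \Omega\big(\eta^2/F(q_i)\big)$. To prove it, fix the worst $v$; the weighted-median property forces at least half the weight to sit in groups whose $v$-group-estimate is off by $\geq\eta$, and, as in \Cref{lem:potential-increment-var}, a constant fraction of the weight sits in such groups $S$ that additionally have adversarial density $O(\eps)$ and enough clean samples for the analogue of \Cref{lem:concentration}. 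For such an $S$, partition its real children into the two halves $S_L,S_R$ determined by the single bit $\mathbbm 1\{v\cdot x_t\geq q_i\}$; convexity of $g_{F(q_i)}$ (Jensen over the finer split within $S_L$ and within $S_R$) shows that $S$'s true contribution to $\Phi_{q_i}(t+1)-\Phi_{q_i}(t)$ is at least that of the binary split $S\to(S_L,S_R)$, which by \Cref{lem:eps-deviation} and the computation in \Cref{lem:potential-increment-var} is $\geq\tfrac{|S|}{n}\,\Omega(\eta^2/F(q_i))$; every other group contributes nonnegatively by convexity of $g_{F(q_i)}$ alone. Summing over $t$ and using $\Phi_{q_i}(T)\leq O(\min(\eps,\eps^2/F(q_i)))$ (\Cref{lem:modified-potential-bound}) yields $\sqrt{\sum_{t}\max_v\big|\tilde Y(v,q_i)_t - \Pr[v\cdot X_t\geq q_i]\big|^2}\leq O\big(\min(\eps,\sqrt{\eps F(q_i)})\big)$, and likewise for the $-q_i$-trees.

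Finally I would combine across thresholds as in \Cref{thm:f-tail}: set $\hat e_{v,t} = \sum_{i=1}^m\tfrac Lm\big(\tilde Y(v,q_{i-1})_t - \tilde Y(v,-q_i)_t\big)$; by \Cref{lem:Riemann} and the tail bound this differs from $\E[v\cdot X_t]$ by $O(Q_{F,\eps}/\sqrt T)$ plus a term $\sum_i\tfrac Lm\big(\max_v|\tilde Y(v,q_i)_t-\Pr[v\cdot X_t\geq q_i]| + \max_v|\tilde Y(v,-q_i)_t-\Pr[v\cdot X_t\leq -q_i]|\big)$, so Minkowski's inequality in $\ell_2$ over $t$ together with the per-threshold bound gives $\sqrt{\sum_t\max_v|\hat e_{v,t}-v\cdot\mu_t^*|^2}\leq O(Q_{F,\eps}) + \sum_i\tfrac Lm\,O(\min(\eps,\sqrt{\eps F(q_i)})) = O(Q_{F,\eps})$, the last sum being a Riemann sum of $\min(\eps,\sqrt{\eps F(q)})$. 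Since $\hat\mu_t$ is an (approximate) Chebyshev center, $\max_v|v\cdot\hat\mu_t-\hat e_{v,t}|\leq\max_v|v\cdot\mu_t^*-\hat e_{v,t}|$, whence $\max_v|v\cdot(\hat\mu_t-\mu_t^*)|\leq 2\max_v|\hat e_{v,t}-v\cdot\mu_t^*|$ and $\snorm{2}{\hat\mu_t-\mu_t^*}\leq 4\max_v|\hat e_{v,t}-v\cdot\mu_t^*|$ by the net property; summing over $t$ gives total error $O(Q_{F,\eps})$. The sample-complexity bound follows from a Chernoff bound for the analogue of \Cref{lem:concentration}, union bounded over the $2m$ trees, their $\leq 2^{O(Td^2)}$ groups each, and the $2^{O(d)}$ directions, using $1/F(q_i)\leq 1/F(L)$ for $q_i\in[0,L]$, giving $n\geq 2^{O(Td^2)}\cdot\poly(1/\eps,1/F(L))\cdot\log(L/(Q_{F,\eps}\tau))$.

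The main obstacle is the potential-increment claim of the third paragraph. In the product case of \Cref{sec:binary-prod} the per-round split is exactly the independent binary split around which the potential argument was built; here each round splits on all of $\mathcal V$ at once, the one-dimensional processes $v\cdot X$ for different $v$ are \emph{not} independent, and one must argue that this over-refined, correlated splitting still drives the potential up in proportion to the error of the \emph{single worst} direction. The resolution I would use is the elementary but crucial fact that refining a split can only increase $\sum_S\tfrac{|S|}{n}g_\gamma(\eps_S)$, so the true increment is lower bounded by that of the binary split attached to whichever direction we wish to charge, after which \Cref{lem:eps-deviation} and \Cref{lem:potential-increment-var} apply verbatim; making sure the ``enough clean samples'' bookkeeping survives the $2^{O(Td^2)}$-way branching is the other place where care is needed.
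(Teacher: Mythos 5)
Your proposal is correct and follows essentially the same route as the paper: discretize the thresholds via a Riemann sum as in Theorem~\ref{thm:f-tail}, run a modified binary-product estimator that splits groups on all directions of $\mathcal V$ simultaneously, charge the potential increase to the single worst direction via an intermediate binary split and then use convexity of $g_\gamma$ to argue that the finer (correlated) refinement can only increase the potential, and finally combine directions through the min--max program $\argmin_u\max_{v\in\mathcal V}|v\cdot u-\hat e_{v,t}|$ --- this is exactly the paper's Correlated-Binary-Estimation/Projection-Estimation argument (Lemmas~\ref{lem:correlated-binary} and~\ref{lem:correlated-product}). The only deviations are cosmetic: you use a deterministic $\tfrac12$-net and a hyperplane-arrangement cell count where the paper uses random directions with Claim~\ref{clm:lp-error} and a Sauer's-lemma bound, both yielding the same $2^{O(d^2)}$-way branching and the same sample complexity.
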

Recall that when $X$ has pair-wise independent coordinates, we reduce the problem into the case that the algorithm only outputs $1$ coordinate in each round by manually simulating the process of revealing $1$ coordinate at a time.
Then, we further reduce the problem into estimating binary product distribution in the online setting. 
Naturally, one would wonder whether the trick can be applied here directly. 
Unfortunately, if one goes through exactly the same reduction procedure, the task will be reduced into estimating means of correlated binary distributions. Noticeably, it is information theoretically impossible to achieve dimension-independent error (independent of $T$) even in the offline setting for this task.

To circumvent the issue, we will estimate the $d$-dimensional mean vector $\mu^*_{t}$ corresponding to the $d$ coordinates revealed at the $t$-th round at once.
The high-level idea is to estimate $\mu^*_{t}$ projected along many different directions and then summarize the information together into an estimation with a Linear Program.  
We begin by drawing $2^{O(d)} \cdot \log (T)$  many unit vectors in $\R^d$ uniformly at random. 
Denote the set of random unit vectors as $\mathcal V$.
At the $t$-th round, we try to estimate 
$v^T \mu^*_{t} $ for each $v \in \mathcal V$. 
Denote the estimation result as 
$ \mu(v)_t $. 
Then, our final estimate $\mu_t$ is  the solution to the program
\begin{align} \label{eq:lp}
\min_{\mu_t \in \R^d} \max_{v \in \mathcal V} \abs{ v^T \mu_t - \mu(v)_t }.    
\end{align}
Formally, the solution to the program will give us the following guarantee.
\begin{claim} \label{clm:lp-error}
Suppose $\mathcal V$ is a set of random unit vectors of size at least $2^{O(d)} \cdot \log(T / \tau)$
For all $t \in [T]$, it holds
$\snorm{2}{ \mu_t - \mu_t^* }
\leq O(1) \cdot \max_{v \in \mathcal V} \abs{v^T \mu_t^* - \mu(v)_t }.
$ with probability at least $ 1 - \tau $.
\end{claim}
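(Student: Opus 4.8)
The plan is to first peel off the optimization layer and reduce to a purely geometric fact about random unit vectors, and then establish that fact by a standard covering (net) argument. Write $E \eqdef \max_{v \in \mathcal V} \abs{v^T \mu_t^* - \mu(v)_t}$ for the quantity appearing on the right-hand side of the claim. Since $\mu_t^*$ itself is a feasible point for the program~\eqref{eq:lp} achieving objective value exactly $E$, the solution $\mu_t$ has objective value at most $E$, so $\abs{v^T\mu_t - \mu(v)_t} \leq E$ for every $v \in \mathcal V$; combining this with the definition of $E$ through the triangle inequality gives $\abs{v^T(\mu_t - \mu_t^*)} \leq 2E$ for all $v \in \mathcal V$. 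Hence it suffices to show that, with probability at least $1-\tau$ over the draw of $\mathcal V$, every $w \in \R^d$ satisfying $\abs{v^T w} \leq 2E$ for all $v \in \mathcal V$ also satisfies $\snorm{2}{w} \leq O(E)$, and for that it is enough that $\mathcal V$ forms a $\tfrac12$-net of the sphere $S^{d-1}$, i.e.\ that for every unit vector $u$ there is $v \in \mathcal V$ with $\snorm{2}{u-v} \leq \tfrac12$. Indeed, applying this to $u = w/\snorm{2}{w}$ (the case $w=0$ being trivial) gives $v^T w = \snorm{2}{w} + (v-u)^T w \geq \snorm{2}{w} - \tfrac12\snorm{2}{w} = \tfrac12\snorm{2}{w}$, so $\snorm{2}{w} \leq 2\abs{v^Tw} \leq 4E$, which is the desired bound with the absolute constant equal to $4$.

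It remains to verify that $\abs{\mathcal V} = 2^{O(d)}\log(T/\tau)$ i.i.d.\ uniformly random unit vectors form a $\tfrac12$-net of $S^{d-1}$ with probability at least $1-\tau$. Fix a (deterministic, existential) $\tfrac14$-net $\{c_1,\dots,c_N\}$ of $S^{d-1}$; the usual volumetric bound gives $N \leq 2^{O(d)}$. For a fixed center $c_j$, a uniformly random unit vector lands within Euclidean distance $\tfrac14$ of $c_j$ with probability $p \geq 2^{-O(d)}$, since the corresponding spherical cap (of constant angular radius) has normalized surface measure $2^{-O(d)}$. Therefore the probability that some cap around some $c_j$ is avoided by all of $\mathcal V$ is at most $N(1-p)^{\abs{\mathcal V}} \leq 2^{O(d)} e^{-p\,\abs{\mathcal V}}$, and this is at most $\tau$ as soon as $\abs{\mathcal V} \geq p^{-1}\bigl(\log N + \log(1/\tau)\bigr) = 2^{O(d)}\bigl(O(d) + \log(1/\tau)\bigr)$, which is subsumed by $2^{O(d)}\log(T/\tau)$. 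On this event, any unit vector $u$ is within $\tfrac14$ of some $c_j$, which is within $\tfrac14$ of some $v \in \mathcal V$, so $\snorm{2}{u-v} \leq \tfrac12$, as required. Since $\mathcal V$ is drawn once and the $\tfrac12$-net property is a deterministic property of $\mathcal V$, the resulting inequality $\snorm{2}{\mu_t - \mu_t^*} \leq 4E$ then holds simultaneously for all $t \in [T]$ on this high-probability event.

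I expect the only real work to be in the covering estimate of the previous paragraph, namely the two standard facts that $S^{d-1}$ admits a $2^{O(d)}$-point $\tfrac14$-net and that a uniformly random unit vector falls into a fixed cap of constant angular radius with probability $2^{-O(d)}$ (the latter by comparing the cap's volume to that of the whole sphere). Neither presents a genuine obstacle; the argument is the textbook net-plus-coupon-collector combination, and any fixed $\delta \in (0,1)$ in place of $\tfrac12$ works, yielding the constant $2/(1-\delta)$ in the statement.
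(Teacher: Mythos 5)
Your proposal is correct, and its first half is exactly the paper's argument: feasibility of $\mu_t^*$ in the program~\eqref{eq:lp} plus the triangle inequality gives $\abs{v^T(\mu_t-\mu_t^*)}\le 2\max_{v\in\mathcal V}\abs{\mu(v)_t - v^T\mu_t^*}$ for every $v\in\mathcal V$, which is the paper's Equation~\eqref{eq:lp-error}. Where you diverge is in how the random directions are used: the paper simply asserts, for each fixed round $t$, that with probability $1-\tau/(10T)$ the supremum of $v^T(\mu_t-\mu_t^*)$ over all unit vectors is within a constant of the maximum over $\mathcal V$, and then union-bounds over the $T$ rounds; you instead prove the underlying fact, showing via a $\tfrac14$-net and a cap-volume estimate that $2^{O(d)}\log(T/\tau)$ random unit vectors form a $\tfrac12$-net of $S^{d-1}$, and then deduce the bound deterministically for \emph{every} vector $w$ with $\abs{v^Tw}\le 2E$. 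Your route is arguably cleaner on a subtle point: the direction $\mu_t-\mu_t^*$ depends on the data and on $\mathcal V$ itself, so a per-round, per-direction probabilistic statement as written in the paper really needs to be backed by a uniform-over-directions guarantee anyway --- which is precisely the net property you establish, and it makes the union bound over rounds unnecessary. The only cost is the explicit covering computation, which is standard and which you carry out correctly (the $4E$ constant and the size requirement $2^{O(d)}(d+\log(1/\tau))\le 2^{O(d)}\log(T/\tau)$ both check out).
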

\begin{proof}
For a fixed round $t$, the following happens with probability at least $1 - \tau/(10T)$.
\begin{align} \label{eq:random-set-property}
\sup_{ v \in \R^d } v^T \cdot \lp( \mu_{t} - \mu^*_{t} \rp)   
\leq O(1) \cdot
 \max_{ v \in \mathcal V } v^T \cdot \lp( \mu_{t} - \mu^*_{t} \rp).
\end{align}
By union bound, this holds for all rounds with probability at least $1 - \tau$. In the following analysis, we condition on the inequality holds for all $t$. 

By the definition of the program, we claim that $\mu_{t}$, the solution to the program, must satisfy
\begin{align} \label{eq:lp-error}
\max_{v \in \mathcal V} \abs{v\cdot \lp( \mu_{t} - \mu^*_{t} \rp)} 
\leq 2 \cdot 
\max_{v \in \mathcal V} \abs{  \mu_{t}( v ) - v^T \mu^*_{t} }.
\end{align}
This is true since, by triangle's inequality, we can write 
\begin{align*}
\max_{v \in \mathcal V} \abs{v^T \cdot \lp( \mu_{t} - \mu^*_{t} \rp)} 
\leq
\max_{v \in \mathcal V} \abs{v^T \cdot \mu_{t} -   \mu(v)_{t} }
+
\max_{v \in \mathcal V} \abs{v^T \cdot \mu^*_{t}-  \mu(v)_{t} }.
\end{align*}
Notice that $\max_{v \in \mathcal V} \abs{v^T \cdot \mu_{t} -   \mu(v)_{t} }$ is at most $\max_{v \in \mathcal V} \abs{v^T \cdot \mu^*_{t}-  \mu(v)_{t} }$ since $\mu_t$ the vector which minimizes the expression. Equation~\eqref{eq:lp-error} then follows. Our claim then follows from Equations~\eqref{eq:random-set-property} and ~\eqref{eq:lp-error}.
\end{proof}

We have then reduced the task into
computing $\mu(v)_t$ - estimator for 
the mean of $v^T X_t$. 
Since $X_t$ and $X_t'$ are independent for $t \neq t'$, $v^T X_1, \cdots v^T X_T$ therefore forms an $F$-tail bound product distribution.
This suggests that the techniques illustrated in the last section can be made of good use.
Using techniques from Section~\ref{ssec:nonpar},  we can simultaneously compute estimators $\mu(v)$ for all $v \in \mathcal V$ satisfying that
\begin{align} \label{eq:max-sum}
\max_{v \in \mathcal V} \sum_{t=1}^T \lp( \mu(v)_t - v^T \mu^*_t \rp)^2 \leq O(1) \cdot \int_{0}^{\infty} \min\lp(\eps, \sqrt{ \eps F(q) } \rp) dq.   
\end{align}
However, this turns out to be insufficient. 
As stated in Claim~\ref{clm:lp-error}, for the final output $\mu_t$ to be closed to $\mu_t^*$, we need
$\max_{v \in \mathcal V} \lp( v^T \mu_t^* - \mu(v)_t \rp)^2$ to be small. 
In other words, what we really need is 
\begin{align} \label{eq:sum-max}
\sum_{t=1}^T \max_{v \in \mathcal V} \lp( v^T \mu_t^* - \mu(v)_t \rp)^2 \leq O(1) \cdot \int_{0}^{\infty} \min\lp(\eps, \sqrt{ \eps F(q) } \rp) dq.    
\end{align}
It is not hard to see the left hand side of Equation~\eqref{eq:sum-max} can be much larger than that of Equation~\eqref{eq:max-sum}, making the guarantees obtained by applying Algorithm~\ref{alg:meta-alg} insufficient in a blackbox manner. Fortunately, it is possible to modify the algorithm such that Equation~\eqref{eq:sum-max} is satisfied. 
\begin{lemma}
\label{lem:correlated-product}
Let $\eps,\tau \in (0, 1)$, $F$ be some monotonically decreasing function $F: \R^+ \mapsto [0,1]$ such that
$Q_{F,\eps} \eqdef \int_{0}^{\infty} \min\lp(\eps, \sqrt{ \eps F(q) } \rp) dq$ is convergent.
Suppose $X$ is an $F$-tail bound $(T \times d)$-block distribution with unknown mean vector $\mu^* \in \R^{Td}$.
Let $\mathcal V$ be a set of unit vectors in $\R^d$.
Suppose the number of samples is at least
$$
n \geq 
\abs{\mathcal V}^{T(d+1)} \cdot \poly(1/\eps, 1/F(L)) \cdot \log(L/(Q_{F, \eps}\cdot\tau))
$$
where $L \eqdef  \inf_{z} \lp( \int_z^{\infty} F(q) dq \leq \frac{1}{\sqrt{T}} Q_{F, \eps} \rp)$.
Then, for sufficiently small $\eps$, there exists an algorithm 
\textbf{Projection-Estimation} (Algorithm~\ref{alg:correlated-product})
which outputs estimators $\mu(v) \in \R^{Td}$ for each $v \in \mathcal V$ in the online setting such that
$$
\sum_{t=1}^T  \max_{v \in \mathcal V} 
\lp( \mu(v)_t - v^T \mu^*_t  \rp)^2
\leq O(1) \cdot \int_{0}^{\infty} \min\lp(\eps, \sqrt{ \eps F(q) } \rp) dq
$$
with probability at least $1 - \tau$.
\end{lemma}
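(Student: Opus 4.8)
The plan is to run the tail-probability reduction behind \Cref{thm:f-tail}, but with a single \emph{refined} grouping that simultaneously tracks all directions $v\in\mathcal V$ and all thresholds, so that one potential function per threshold can absorb the \emph{worst}-direction error in every round. Fix a finite grid $0=q_0<q_1<\cdots<q_m=L$ as in Algorithm~\ref{alg:meta-alg}. For a fixed $v\in\mathcal V$, the coordinates $v^T X_1,\dots,v^T X_T$ are independent and satisfy $\Pr[\,|v^T X_t|\ge q\,]\le\Pr[\,\snorm{2}{X_t}\ge q\,]\le F(q)$, so they form an $F$-tail bound product distribution; by \Cref{clm:folklore} and \Cref{lem:Riemann} it suffices, in round $t$, to estimate the tail probabilities $\Pr[v^T X_t\ge q_i]$ and $\Pr[v^T X_t\le -q_i]$, $i\le m$, the Riemann and truncation error contributing only $O(Q_{F,\eps}/\sqrt T)$ per round (hence $O(Q_{F,\eps})$ to $\snorm{2}{\mu-\mu^*}$). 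Writing $\mathrm{err}(v,q,t)$ for the error of our tail-probability estimate, two triangle inequalities (first over $i$ inside round $t$, then over $i$ in the $\ell_2$-in-$t$ norm) reduce the lemma to the bound, \emph{for each threshold $q$ separately}, $\sum_{t=1}^T \max_{v\in\mathcal V}\mathrm{err}(v,q,t)^2 \le O(\min(\eps^2,\eps F(q)))$, since then $\sum_i (L/m)\,O(\min(\eps,\sqrt{\eps F(q_i)}))$ is a right Riemann sum of the decreasing function $\min(\eps,\sqrt{\eps F(q)})$ and is $O(Q_{F,\eps})$; the negative thresholds are handled symmetrically.

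The refined algorithm is as follows. The $2m|\mathcal V|$ hyperplanes $\{x:v^T x=\pm q_i\}$ cut $\R^d$ into at most $(m|\mathcal V|)^{O(d)}$ cells; at round $t$ we group the samples by the tuple of cells their first $t-1$ blocks occupy, producing at most $(m|\mathcal V|)^{O(dT)}$ \emph{fine groups}. Inside each fine group $S$ we compute, for every pair $(v,q_i)$, the empirical fraction of $j\in S$ with $v^T x_t^{(j)}\ge q_i$, capped at $\gamma=F(q_i)$ (with the same per-$(v,q_i)$ noise-injection preliminary simplification used by \textbf{Binary-Product-Estimation}, implemented by folding the auxiliary randomness into the refined partition), and output as the estimate of $\Pr[v^T X_t\ge q_i]$ the $|S|$-weighted median of these fractions over all fine groups; $\mu(v)_t$ is then assembled from these estimates by the Riemann sum exactly as in Algorithm~\ref{alg:meta-alg}. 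Since the number of fine groups, of pairs $(v,q_i)$, and of rounds are all bounded, a union bound identical to \Cref{lem:concentration} gives, under the stated sample complexity, that with probability $\ge 1-\tau$ the clean-sample empirical fraction in every fine group with at least $n\eps$ divided by the number of fine groups clean samples approximates the true tail probability to within $\min(\eps,F(q))/T$, simultaneously for all $v$, $q$, $t$.

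The heart of the argument is a potential function \emph{per threshold}. Fix $q$, let $\eps_i^{(t)}$ be the adversarial density of the $i$-th fine group at round $t$ — crucially this depends only on the refined partition, not on $v$ — and set $\Phi^{(q)}(t)=\frac1N\sum_i g_{F(q)}(\eps_i^{(t)})\,|S_i^{(t)}|$ with $g_\gamma$ as in \eqref{eq:potential-def}. When a fine group splits into its children the adversarial density is the weighted average of the children's densities, so by convexity of $g_{F(q)}$ (\Cref{clm:g-property}) $\Phi^{(q)}$ is non-decreasing, and the argument of \Cref{lem:modified-potential-bound} gives $\Phi^{(q)}(t)\le O(\min(\eps,\eps^2/F(q)))$ for all $t$. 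The key inequality is: if $\eta:=\max_{v\in\mathcal V}\mathrm{err}(v,q,t)$, then $\Phi^{(q)}(t+1)-\Phi^{(q)}(t)\ge\Omega(\eta^2/F(q))$, up to negligible slack $O(\min(\eps,F(q))^2/T^2)$. To see this, let $v^*$ attain the maximum; since our estimate of $\Pr[v^{*T}X_t\ge q]$ is the $|S|$-weighted median of the capped per-group fractions, at least half the fine groups by weight have their $(v^*,q)$ group-estimate off by $\ge\eta$, and — exactly as in the proof of \Cref{lem:potential-increment-var} — a sub-collection $G$ of weight $\ge 1/5$ additionally has adversarial density $\le 5\eps$ and enough clean samples for the concentration bound above. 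For a group $S_i^{(t)}\in G$, its fine children at round $t+1$ \emph{refine} the two-way split of $S_i^{(t)}$ according to the bit $\mathbbm 1\{v^{*T}x_t^{(j)}\ge q\}$; applying the binary analysis of \Cref{lem:eps-deviation} and the two-case argument of \Cref{lem:potential-increment-var} to the $F(q)$-bounded binary variable $\mathbbm 1\{v^{*T}X_t\ge q\}$ shows that \emph{already this two-way split} raises the $g_{F(q)}$-potential of that group by $\Omega(\eta^2/F(q))\cdot|S_i^{(t)}|/N$, and by convexity the further refinement by the remaining hyperplanes does not lower it; every group outside $G$ contributes a non-negative increment. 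Summing over $G$ gives the claimed per-round increase. Telescoping it against $\Phi^{(q)}(T)=O(\min(\eps,\eps^2/F(q)))$ yields $\sum_t\big(\max_v\mathrm{err}(v,q,t)\big)^2\le O(F(q))\cdot O(\min(\eps,\eps^2/F(q)))+O(\min(\eps,F(q))^2/T)=O(\min(\eps^2,\eps F(q)))$, the per-threshold bound needed above; combining over the thresholds gives $\sum_t\max_v(\mu(v)_t-v^T\mu^*_t)^2=O(Q_{F,\eps}^2)$, which is the asserted bound and is what \Cref{clm:lp-error} needs to deduce \Cref{thm:block-extension}.

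I expect the main obstacle to be making rigorous the passage from the $|\mathcal V|$ separate per-direction binary estimators to a single refined one — specifically, that one scalar potential $\Phi^{(q)}$ can be charged, in every round, by whichever direction is currently worst (yielding $\sum_t\max_v$ rather than the weaker $\max_v\sum_t$ that a black-box use of Algorithm~\ref{alg:meta-alg} would give). This rests on three points that must all be verified: that the two-way split induced by $v^*$ is a \emph{coarsening} of the refined split, so that convexity of $g_{F(q)}$ pushes in the favorable direction; that the adversarial densities of the fine groups in $G$ (and of the two virtual halves) stay inside the strongly convex window $[0,10\eps/F(q)]$ of $g_{F(q)}$, which is precisely why the per-$(v,q)$ noise-injection preliminary simplification — folded into the refined partition — is needed; and that the maximizing $v^*$ may change from round to round without harming the telescoping (it may, since $\Phi^{(q)}$ is one number each round). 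The remaining ingredients — the exponential-in-$T$ union bound over fine groups (absorbed into $n\ge|\mathcal V|^{T(d+1)}\cdot\poly(1/\eps,1/F(L))\cdot\log(L/(Q_{F,\eps}\cdot\tau))$), the Riemann-sum bookkeeping, and reconstructing $\mu_t$ from the $\mu(v)_t$ via \Cref{clm:lp-error} — are routine adaptations of the product-distribution case.
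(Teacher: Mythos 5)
Your core argument is the paper's argument: for each threshold $q$ you keep a partition refined over all directions $v\in\mathcal V$, put the potential $\Phi^{(q)}$ with $\gamma=F(q)$ on it, charge the round-$t$ worst-direction error through an intermediate two-way split by the maximizing $v^*$ (with convexity of $g_{F(q)}$ absorbing the further refinement, and $v^*$ allowed to change per round), and then assemble $\mu(v)_t$ by the Riemann sum, reducing everything to the per-threshold bound $\sum_t\max_v \mathrm{err}(v,q,t)^2\le O(\min(\eps^2,\eps F(q)))$. This is precisely the mechanism of \textbf{Correlated-Binary-Estimation} and \Cref{lem:potential-increase-2}, combined exactly as in the paper's proof of the present lemma via Algorithm~\ref{alg:correlated-product}.

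The one genuine gap is quantitative and comes from your decision to use a \emph{single} partition refined over all $2m|\mathcal V|$ hyperplanes at once. With that choice a group can split into $\Theta\lp((m|\mathcal V|)^{d}\rp)$ children per round, so the number of fine groups is $(m|\mathcal V|)^{\Theta(dT)}$ --- and larger still once the noise injection, whose bias $\gamma=F(q_i)$ differs per threshold and hence cannot be shared, is ``folded into'' the partition. Since $m=\floor{L\sqrt T/Q_{F,\eps}}$ is itself $\poly(T,1/\eps,\dots)$ in the intended instantiations, your concentration/union-bound step (the analogue of \Cref{lem:concentration-block}) needs $n\gtrsim (m|\mathcal V|)^{\Theta(dT)}$, which is \emph{not} absorbed into the stated $n\ge |\mathcal V|^{T(d+1)}\cdot\poly(1/\eps,1/F(L))\cdot\log(L/(Q_{F,\eps}\tau))$: the $\poly(1/\eps,1/F(L))$ factor is not raised to the power $T$, so a factor $m^{\Theta(dT)}$ cannot be hidden there. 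The paper avoids this by running a \emph{separate} direction-refined instance per threshold (the processes $\mathcal A_{q_i}$, $\mathcal A_{-q_i}$ in Algorithm~\ref{alg:correlated-product}); for a fixed threshold, Sauer's lemma with VC dimension $d+1$ bounds the per-round branching by $|\mathcal V|^{d+1}$, giving at most $|\mathcal V|^{T(d+1)}$ groups per instance and matching the stated sample complexity. Nothing in your potential argument actually uses cross-threshold refinement --- $\Phi^{(q)}$ only needs the round-$(t+1)$ partition to refine the $(v^*,q)$ split --- so replacing your common partition by the per-threshold one (and doing the noise injection per instance, as in the paper) repairs the proof and yields the lemma as stated.
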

\subsection{Proof of \Cref{lem:correlated-product}}
We follow the same procedure as Section~\ref{ssec:nonpar} to reduce the task of estimating $v^T \mu^*_t$ into estimating binary product distributions. 
In particular, 
we can define the binary variables
\begin{align} \label{eq:binary-variables}
Y( q,v )_t \eqdef
\begin{cases}
&\mathbbm 1 \{ v^T \cdot X_{t} > q\} \text{ for } q>0\, ,\\
&\mathbbm 1 \{ v^T \cdot X_{t} < q\} \, ,
\text{ for } q<0 .
\end{cases}
\end{align}
for $q \in \R$ and $v \in \mathcal V \subseteq \R^d$.
Then, similar to \Cref{thm:f-tail}, 
for a fixed $v \in \mathcal V$,
we can reduce estimating $v^T \mu_t^*$  into estimating 
$ \E\lp[Y(v, q)_t\rp] $ for many appropriately chosen $q$. 
It is easy to see that
$Y(v,q)_1, \cdots, Y(v,q)_T$ form a binary product distribution.
If one runs \textbf{Binary-Product-Estimation} for each pair of $(v,q)$ in parallel, it is easy to compute estimators satisfying that
$$
\max_{v \in \mathcal V} \sum_{t=1}^T \lp(\E [Y(v,q)_t] - \tilde Y(v,q)_t \rp)^2  \leq \min \lp( \eps^2,  \eps F(q) \rp).
$$
While this is enough to achieve the guarantees in Equation~\eqref{eq:max-sum}, for the proof of Lemma~\ref{lem:correlated-product}, it turns out we need the following stronger guarantee.
$$
 \sum_{t=1}^T \max_{v \in \mathcal V}\lp(\E [Y(v,q)_t] - \tilde Y(v,q)_t \rp)^2  \leq \min \lp( \eps^2,  \eps F(q) \rp).
$$
This is made possible with the routine \textbf{Correlated-Binary-Estimation} (Algorithm~\ref{alg:correlated-binary}).
\begin{algorithm}[ht] 
\caption{Correlated-Binary-Estimation}
\label{alg:correlated-binary}
\begin{algorithmic}[1]
\STATE \textbf{Input:} Threshold parameter $q \in R$,
unit vector sets $\mathcal V \in \R^d$,
round number $T$, $n$ samples $x^{(1)}, \cdots, x^{(n)}$ from $X$ such that the coordinate $x^{(i)}_t$ is revealed at the $t$-th round.
\STATE Initialize the group $S_0^{(1)} = \{ x^{(1)}, \cdots, x^{(n)} \}$.
Set $\gamma = F(q)$.
\FOR{$t=1,2,...,T$}
   \STATE In the $t$-th round, $x_t^{(1)}, \cdots, x_t^{(n)}$ are revealed.
   \\ \COMMENT{Convert into samples of $Y(v,q)$} 
   \STATE For all $v \in \mathcal V$, compute
   $
   y(v,q)_t^{(i)} = \mathbbm \{ \abs{v^T x_{t}^{(i)}} > \abs{q} \}.
   $
   \\ \COMMENT{Add noises to $y(v,q)_t^{(i)}$}
   \FOR{$i=1\cdots n$}
   \STATE Sample $u$ uniformly from $(0,1)$.
   \IF{$u \leq \tau/4$}
   \STATE Set $y(v,q)_t^{(i)} = 1$ for all $v \in \mathcal V$,
   \ELSIF{ $u \leq 1/2$}
   \STATE Set $y(v,q)_t^{(i)} = 0$ for all $v \in \mathcal V$,
   \ENDIF
   \ENDFOR
   \\ \COMMENT{Divide groups based on all $y(v,q)_{t'}^{(i)}$}
   \STATE Create the group partition $\lp\{S_1^{(t+1)}, S_2^{(t+1)} \cdots S_{m(t+1)}^{(t+1)} \rp\}$ in the $(t+1)$-th round such that two samples $j, j'$end up in the same group if and only if $y(v,q)_{t'}^{(j)} = y(v,q)_{t'}^{(j')} $ for all $t' \leq t$ and $v \in \mathcal V$.
   \\ \COMMENT{Compute group estimations}
   \FOR{each group $S_i^{(t)}$, $v \in \mathcal V$}
   \STATE Compute the group estimation
   $\mu(v)_{t}^{(i)} \eqdef \min \lp(   \gamma, \frac{1}{ |S_i^{(t)}| }\sum_{j \in S_i^{(t)}  } y(v,q)^{(j)}_{t}  \rp)$.
   \ENDFOR
   \STATE Set $\mu(v, q)_t$ to be the weighted median over $\mu(v, q)_t^{(i)}$ where the weights are given by $\abs{ S_i^{(t)} }$.
    \STATE \textbf{Output}: $\mu(v,q)_t$ for each $v$.
\ENDFOR 
\end{algorithmic}
\end{algorithm}
\begin{lemma}
\label{lem:correlated-binary}
Let $\eps,\tau \in (0, 1)$, $F$ be some monotonically decreasing function $F: \R^+ \mapsto [0,1]$ such that
$Q_{F,\eps} \eqdef \int_{0}^{\infty} \min\lp(\eps, \sqrt{ \eps F(q) } \rp) dq$ is convergent.
Suppose $X$ is an $F$-tail bound $(T \times d)$-block distribution.
Let $\mathcal V$ be a set of unit vectors in $\R^d$.
Suppose the number of samples is at least
$$
n \geq 
\abs{\mathcal V}^{T(d+1)} \cdot \poly(1/\eps, 1/F(L)) \cdot \log(L/(Q_{F, \eps}\cdot\tau))
$$
where $L \eqdef  \inf_{z} \lp( \int_z^{\infty} F(q) dq \leq \frac{1}{\sqrt{T}} Q_{F, \eps} \rp)$.
Fixing $q \in \R$ and let $Y(v,q)$ be defined as in Equation~\eqref{eq:binary-variables} for $v \in \mathcal V$.
Then, for sufficiently small $\eps$, there exists an algorithm which outputs estimators $\tilde Y(v,q)$ for each $v \in \mathcal V$ in the online setting such that
$$
\sum_{t=1}^T  \max_{v \in \mathcal V} 
\lp( \E[Y(v,q)_t] - \tilde Y(v,q)_t  \rp)^2
\leq O(1) \cdot  \min\lp(\eps^2, \eps F(q) \rp)
$$
with probability at least $1 - \tau$.
\end{lemma}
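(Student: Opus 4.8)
The plan is to run the analysis of \textbf{Binary-Product-Estimation} (Theorem~\ref{lem:binary-product-estimation}) essentially verbatim; the only genuinely new point is how to harvest the finer group partition that Algorithm~\ref{alg:correlated-binary} maintains. Set $\gamma=F(q)$. Since $|v^Tx|\le\snorm{2}{x}$ for every unit vector $v$, the tail bound of the block distribution gives $\E[Y(v,q)_t]=\Pr[\,|v^TX_t|>|q|\,]\le F(q)=\gamma$, so for each fixed $v$ the sequence $Y(v,q)_1,\dots,Y(v,q)_T$ is a $\gamma$-bounded binary product distribution. Exactly as in the preliminary simplification of Algorithm~\ref{alg:binary-product-estimation}, the noise-addition step of Algorithm~\ref{alg:correlated-binary} --- applied simultaneously to all $v$, which is harmless because each $v$ marginally receives precisely the intended noise --- lets us assume $\E[Y(v,q)_t]\in[\gamma/4,3\gamma/4]$ for all $v,t$. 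In the analysis, $\mu(v,q)_t$ plays the role of $\mu_t$ and $\E[Y(v,q)_t]$ that of $\mu_t^\ast$.

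I would first redo the concentration step (the analogue of Lemma~\ref{lem:concentration}). The key geometric observation is that the pattern $(\mathbbm 1\{|v^Tx|>|q|\})_{v\in\mathcal V}$ of any point $x\in\R^d$, clean or corrupted, is determined by the cell of the arrangement of the $2|\mathcal V|$ hyperplanes $\{v^Tx=\pm|q|\}$ containing $x$, so there are only $O(|\mathcal V|^d)$ distinct per-round patterns. Hence the round-$t$ partition has at most $|\mathcal V|^{O(Td)}$ groups total, and a Chernoff-plus-union-bound argument identical to that of Lemma~\ref{lem:concentration} (over all groups, all $v\in\mathcal V$, all $t$) shows that, with probability $\ge 1-\tau$, every group with $\ge n\eps/|\mathcal V|^{O(Td)}$ clean samples has clean-sample empirical mean of $y(v,q)_t$ within $\min(\eps,\gamma)/T$ of $\E[Y(v,q)_t]$; the stated sample size is exactly what makes this succeed. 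We keep the potential $\Phi(t)=\tfrac1N\sum_i g_\gamma(\eps_i^{(t)})|S_i^{(t)}|$ with $\eps_i^{(t)}$ the adversarial density of $S_i^{(t)}$; Claim~\ref{lem:modified-potential-bound} still gives $\Phi(t)=O(\min(\eps,\eps^2/\gamma))$, since its proof uses only $\sum_i\tfrac{|S_i^{(t)}|}{n}\eps_i^{(t)}=\eps$ and convexity of $g_\gamma$ (Claim~\ref{clm:g-property}).

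The crux is the potential-increment bound (analogue of Lemma~\ref{lem:potential-increment-var}): if $\eta:=\max_{v\in\mathcal V}|\mu(v,q)_t-\E[Y(v,q)_t]|\ge 2\min(\eps,\gamma)/T$, then $\Phi(t+1)-\Phi(t)\ge\Omega(\eta^2/\gamma)$. Fix $v^\ast\in\argmax_v|\mu(v,q)_t-\E[Y(v,q)_t]|$. Passing from round $t$ to $t+1$, each group $S_i^{(t)}$ is split by the whole pattern $(y(v,q)_t)_{v\in\mathcal V}$; \emph{coarsen} this split by recording only the bit $y(v^\ast,q)_t$, which partitions $S_i^{(t)}$ into two ``virtual children'' $S_{L(i)},S_{R(i)}$, the actual children being refinements of these. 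By Jensen / convexity of $g_\gamma$, the actual children's contribution to $\Phi(t+1)$ is at least the contribution of the two virtual children, so $\Phi(t+1)-\Phi(t)$ dominates the potential increment of the \emph{virtual} two-way split. But that virtual split, its group estimates $\mu(v^\ast,q)_t^{(i)}$, and their weighted median $\mu(v^\ast,q)_t$ are precisely what \textbf{Binary-Product-Estimation} produces on the $\gamma$-bounded binary product distribution $Y(v^\ast,q)$; since $\E[Y(v^\ast,q)_t]\in[\gamma/4,3\gamma/4]$, the argument proving Lemmas~\ref{lem:eps-deviation} and~\ref{lem:potential-increment-var} applies unchanged and yields an increment $\Omega(\eta^2/\gamma)$ for the virtual split, hence for $\Phi$.

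Finally I would sum over rounds exactly as in the proof of Theorem~\ref{lem:binary-product-estimation}: rounds with $\eta<2\min(\eps,\gamma)/T$ contribute at most $4\min(\eps,\gamma)^2/T^2$ each, while otherwise $\max_v(\mu(v,q)_t-\E[Y(v,q)_t])^2\le O(\gamma)\big(\Phi(t+1)-\Phi(t)\big)$, so
\[
\sum_{t=1}^T\max_{v\in\mathcal V}\big(\mu(v,q)_t-\E[Y(v,q)_t]\big)^2
\le O\!\left(\tfrac{\min(\eps,\gamma)^2}{T}\right)+O(\gamma)\,\Phi(T)
\le O\big(\min(\eps^2,\eps\gamma)\big)=O\big(\min(\eps^2,\eps F(q))\big),
\]
using $\Phi(T)=O(\min(\eps,\eps^2/\gamma))$; setting $\tilde Y(v,q)_t:=\mu(v,q)_t$ yields the claim. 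The step I expect to need the most care is the increment bound: one must check that coarsening the many-way split down to the single bit $y(v^\ast,q)_t$ does not disturb the structural relationship of Lemma~\ref{lem:eps-deviation} --- it does not, since that lemma concerns only the adversarial densities of the two sides of a single-bit split and the clean empirical mean of that bit, none of which depend on how the two sides are subdivided afterwards --- and that the adversarial densities of the virtual children stay inside the $2$-strongly-convex window $[0,10\eps/\gamma]$ of $g_\gamma$, which is exactly what the $[\gamma/4,3\gamma/4]$ normalization of $\E[Y(v^\ast,q)_t]$ guarantees.
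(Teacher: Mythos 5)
Your proposal is correct and follows essentially the same route as the paper's proof: the same potential $\Phi$ with $\gamma=F(q)$, the same concentration and summation steps, and the same key new idea of coarsening the many-way split to the single bit of the maximizing direction $v^\ast$ (the paper's ``intermediate groups'' $S^{(t+1/2)}$, your ``virtual children'') and invoking convexity of $g_\gamma$ to show the true refinement only increases the potential. The only cosmetic difference is that you bound the number of groups by counting cells of the hyperplane arrangement $\{v^Tx=\pm|q|\}_{v\in\mathcal V}$, whereas the paper counts realizable patterns via the VC dimension of halfspaces and Sauer's lemma; both give the same $\abs{\mathcal V}^{O(Td)}$ bound.
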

At a high level, we still follow the framework of \textbf{Binary-Product-Estimation}: We will divide the samples into groups based on the coordinates revealed so far and the final estimations will be the (weighted) median of the group estimations. The major difference is that now the group division is based on the labels of multiple binary product distributions. 
We focus on the case $q > 0$ since the argument when $q<0$ is symmetric.
Denote $\gamma = F(q)$.
Since we are now only interested in estimating $Y(v,q)$ for a fixed $q$. 
We next discuss the steps of Algorithm~\ref{alg:correlated-binary} in details.
\paragraph{Sample Conversion} 
At the $t$-th round, the algorithm receives $x^{(i)}_{t} \in \R^{d}$.
We will first convert it into data points for $Y(v,q)_t$. 
For each $v \in \mathcal V$,
we compute the indicators
$
y(v,q)_t^{(i)} = \mathbbm 1\{v^T x^{(i)}_{t} > q\}.
$
Then, $y(v,q)^{(i)} \in \R^{d}$ for  $i \in [n]$  can be viewed as \iid~samples drawn from the distribution $Y(v,q)$.
\paragraph{Label Noise}
We will manually add noise to the indicators $y(v,q)_t^{(i)}$.
For each sample $i \in [n]$, we simultaneously change $y(v,q)_t^{(i)}$ for all $v \in \mathcal V$ to $1$ with probability $\gamma / 4$, to $0$ with probability $1/2 - \gamma/4$, and leaves them unchanged otherwise.
Then, $y(v,q)^{(i)} \in \R^{d}$ for each $i \in [n]$  can be viewed as \iid~samples drawn from the binary product distribution $Y'(v,q)$ satisfying that $\E[Y'(v,q)_t] = \E[Y(v,q)_t]/2 + \gamma/4$.
This allows us in the following analysis to assume that $\E[Y(v,q)_t] \in [\gamma/4, 3 \cdot \gamma/4]$.

\paragraph{Group Division} At the beginning of the $t$-th round, the algorithm divides the samples into many groups based on the values of $y(v,q)_{t'}^{(i)}$ for $t' < t$.
In particular, two samples $i,j$ end in the same group in the $t$-th round if and only if 
$y(v,q)_{t'} ^{(i)} = y(v,q)_{t'}^{(j)}$ for all $v \in \mathcal V$ and $t' < t$. Denote $m(t)$ as the number of groups at the beginning of the $t$-th round. 
Naively, it seems like at the $t$-th round there can be as many as 
$2^{\abs{\mathcal V}}$ groups. A more careful computation shows that $ m(t) \leq \abs{\mathcal V}^{t\cdot(d+1)}$.

\begin{lemma}
At the $t$-th round, there can be at most $m(t) \leq \abs{\mathcal V}^{t\cdot(d+1)}$ many groups.
\end{lemma}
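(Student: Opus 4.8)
The plan is to bound the number of groups $m(t)$ by observing that although the group partition at round $t$ is defined by the joint values of $y(v,q)_{t'}^{(i)}$ for all $v \in \mathcal V$ and all $t' < t$, the actual number of realizable patterns is far smaller than the naive $2^{|\mathcal V|(t-1)}$. The key point is that for a fixed round $t'$, the vector of labels $\big(y(v,q)_{t'}^{(i)}\big)_{v \in \mathcal V}$ is \emph{not} an arbitrary element of $\{0,1\}^{\mathcal V}$: it is determined by the single point $x_{t'}^{(i)} \in \R^d$ via the sign pattern of the $|\mathcal V|$ affine functions $v \mapsto v^T x_{t'}^{(i)} - q$. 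By standard VC / hyperplane-arrangement bounds, $|\mathcal V|$ hyperplanes in $\R^d$ partition $\R^d$ into at most $\sum_{j=0}^{d} \binom{|\mathcal V|}{j} \leq |\mathcal V|^{d+1}$ cells (for $|\mathcal V| \geq d$; and trivially at most $2^{|\mathcal V|} \leq |\mathcal V|^{d+1}$ when $|\mathcal V|$ is small, though one should just take the polynomial bound as the working estimate). Hence there are at most $|\mathcal V|^{d+1}$ distinct label-patterns a sample can exhibit in any single round.

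The key steps, in order, are as follows. First I would fix a round $t'$ and consider the map $x \mapsto \big(\mathbbm 1\{v^T x > q\}\big)_{v \in \mathcal V}$ from $\R^d$ to $\{0,1\}^{\mathcal V}$; the image of this map has size at most the number of regions in the arrangement of the $|\mathcal V|$ hyperplanes $\{x : v^T x = q\}_{v \in \mathcal V}$, which is at most $\sum_{j=0}^d \binom{|\mathcal V|}{j} \leq |\mathcal V|^{d+1}$. (The manually added label noise in Algorithm~\ref{alg:correlated-binary} does not change this count, since with probability $1 - \tau/4 - (1/2 - \gamma/4)$ the labels are left unchanged and with the remaining probability they are set to the all-ones or all-zeros pattern, both of which are already among the at most $|\mathcal V|^{d+1}$ patterns — or, if one prefers, one simply adds $2$ to the bound, which is absorbed.) Second, since the group containing a sample $i$ at the beginning of round $t$ is determined exactly by the tuple of its round-$t'$ label-patterns over $t' = 1, \dots, t-1$, the total number of groups is at most the product over these $t-1$ rounds of the per-round pattern count, i.e. $m(t) \leq \big(|\mathcal V|^{d+1}\big)^{t-1} \leq |\mathcal V|^{t(d+1)}$. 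Third, I would remark that this is exactly the bound claimed, and note that it is what makes the sample-complexity requirement $n \geq |\mathcal V|^{T(d+1)} \cdot \poly(\cdots)$ sufficient: it guarantees that even the smallest groups that we need to track (those with clean-sample mass at least $n\eps/m(t)$) still contain enough clean samples for the concentration argument of Lemma~\ref{lem:concentration} to go through after a union bound over all $\leq T \cdot m(T)$ groups.

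The main obstacle — though it is more bookkeeping than genuine difficulty — is getting the hyperplane-arrangement count right and making sure the label-noise step is handled cleanly. One subtlety is that the noise is added \emph{after} the sign-pattern is computed and is applied \emph{simultaneously} to all $v \in \mathcal V$ for a given sample, so it collapses a sample either to the all-ones vector or the all-zeros vector or leaves it alone; this is why the noise does not blow up the pattern count. A second, minor point is the regime $|\mathcal V| < d$: there the bound $2^{|\mathcal V|}$ is the honest one, but since we will always take $|\mathcal V| = 2^{O(d)} \cdot \log(T/\tau) \geq d$ in the application, the polynomial form $|\mathcal V|^{d+1}$ is the operative bound and no case split is needed in practice. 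Everything else is a direct product-of-counts argument.
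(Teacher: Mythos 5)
Your proof is correct and essentially the same as the paper's: both bound the number of distinct per-round label patterns by $\abs{\mathcal V}^{d+1}$ via a VC-type count (you count cells of the arrangement of the hyperplanes $\{x : v^T x = q\}$ in the primal, while the paper dually views each sample as a halfspace classifying $\mathcal V$ and invokes Sauer's lemma), and then combine over rounds (your product over $t-1$ rounds versus the paper's induction on the per-round split). Your extra remark that the added label noise only produces the all-ones/all-zeros patterns and so does not inflate the count is a fine, harmless addition.
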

\begin{proof}
We will show that 
$m(t+1) \leq m(t) \cdot \abs{\mathcal V}^{(d+1)} $. 
Then, the argument follows from induction.
At the $t$-th round, consider the half-spaces in $\R^d$ parametrized by the sample points $x^{(i)}_{t}$.
$$
\mathcal H^{(t)} = \{ z^T \cdot x^{(i)}_{t} \geq q | i \in [n] \}.
$$
The indicator $y(v,q)_t^{(i)}$ can essentially be viewed as the classification of the point $v \in \mathcal V$ by the half-space $z^T \cdot x^{(i)}_{t} \geq q$. 
Now, for each sample $i$, we associate it with a set $ \mathcal V^{(i)}_t \subseteq \mathcal V$ that includes all vectors $v \in \mathcal V$ which its corresponds half-space classifies as positive. Namely,
$ \mathcal V^{(i)}_t = \lp\{v \in \mathcal V| v^T \cdot x^{(i)}_{t} \geq q  \rp\} $.
Essentially, two sample points $j, j' \in S_i^{(t)}$ will end up in the same group in the $(t+1)$-th round if and only if $\mathcal V^{(j)}_t = \mathcal V^{(j')}_t$.
It is well known that the VC dimension of half-spaces in $\R^d$ is $d+1$. Then, by Sauer's Lemma, we know there can be at most 
$ \abs{ \mathcal V }^{d+1} $ many distinct subsets $\mathcal V^{(i)}_t$.
Therefore, each group $S_i^{(t)}$ splits into at most $\abs{ \mathcal V }^{(d+1)}$ many child groups at the $(t+1)$-th round. 
\end{proof}

\paragraph{Group estimations and outputs}
Denote $m(t)$ as the number of groups at the At the $t$-th round.
Within each group $i \in [m(t)]$, for each $v \in \mathcal V$, we compute the group estimation  $\mu(v,q)^{(i)}_{t} \eqdef \min \lp(   \gamma, \frac{1}{ |S_i^{(t)}| }\sum_{j \in S_i^{(t)}  } y(v,q)^{(j)}_{t}  \rp)$.
Then, for each $v \in \mathcal V$,  the final output $ \tilde Y(v, q)_t $ is given by the weighted median over all $\mu(v,q)^{(i)}_{t}$, i.e. the median of the distribution $U_v$ such that $\Pr \lp[  U_v = \mu(v,q)_{t}^{(i)} \rp] \propto \abs{ S_i^{(t)} }$.

Similar to the proof of \Cref{lem:binary-product-estimation},  our proof consists of three steps. First, we show that the sample mean of the clean samples within each group is well concentrated around the true mean in each round. Second, we show that there exists a potential function such that it increases significantly whenever the algorithm incurs significant errors. Lastly, we upper bound the potential function and uses that to conclude that the total errors incurs must be bounded.

\begin{lemma} \label{lem:concentration-block}
Fix $q \in \R^+$ and denote $\gamma = F(q)$.
Let $\hat \mu(v,q)^{(i)}_t$ be the empirical mean of the group $S_i^{(t)}$ computed from only the clean samples.
In particular, let $\mathcal C$ denote the set of un-corrupted samples. 
We define $ \hat \mu(v,q)^{(i)}_t \eqdef 
\frac{1}{ \abs{ \mathcal C \cap S_i^{(t)} } }
\sum_{ i \in \mathcal C \cap S_i^{(t)}  } 
y(v,q)_t^{(i)}.
$
Assume that $n \geq \abs{\mathcal V}^{ T \cdot (d+1) } \cdot \poly(1/\eps, 1/\gamma) \cdot \log(1/\tau)$. 
Denote $m(t)$ as the number of groups at the $t$-th round.
With probability at least $1-\tau$, for all $t$ and any group satisfying that 
$ |S_i^{(t)} \cap \mathcal C | \geq n \cdot \eps / m(t)$,  it holds $\abs{\hat \mu(v,q)^{(i)}_t - \E\lp[Y(v,q)_t\rp]  } \leq \min \lp( \eps, \gamma \rp) / T$ for all $v \in \mathcal V$.
\end{lemma}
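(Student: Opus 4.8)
The plan is to mirror the proof of \Cref{lem:concentration}: reduce to a union bound over all (round, group, direction) triples, and for a single triple obtain the concentration by conditioning on the event that the group is large. First I would count the relevant triples. By the bound $m(t)\le|\mathcal V|^{t(d+1)}\le|\mathcal V|^{T(d+1)}$ established above, there are at most $\sum_{t=1}^{T}m(t)\,|\mathcal V|\le T\,|\mathcal V|^{T(d+1)+1}$ triples $(t,i,v)$ with $t\in[T]$, $i$ ranging over the (at most $m(t)$) groups, and $v\in\mathcal V$; the guarantee fails exactly when some such triple has simultaneously $|S_i^{(t)}\cap\mathcal C|\ge n\eps/m(t)$ and $|\hat\mu(v,q)^{(i)}_t-\E[Y(v,q)_t]|>\min(\eps,\gamma)/T$. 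Fixing a triple, and using $\Pr[A\cap B]\le\Pr[A\mid B]$ as in \Cref{lem:concentration}, it suffices to bound the probability of the deviation event conditioned on the group being large.

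The key structural point — which I would state and verify carefully — is that the bucket into which a given \emph{clean} sample falls at the start of round $t$ is determined solely by that sample's own labels $y(v',q)^{(j)}_{t'}$ over $t'<t$ and $v'\in\mathcal V$; consequently $\mathcal C\cap S_i^{(t)}$ (for every $i$ simultaneously) is a deterministic function of $\{X^{(j)}_{t'}:j\in\mathcal C,\ t'<t\}$ together with the label noise injected into clean samples in rounds $<t$, hence is independent of the clean samples' round-$t$ coordinates, using that $X$ is round-wise independent. Conditioning on this data therefore fixes the set $\mathcal C\cap S_i^{(t)}$, say of size $M\ge n\eps/m(t)\ge n\eps/|\mathcal V|^{T(d+1)}$, and the values $\{y(v,q)^{(j)}_t:j\in\mathcal C\cap S_i^{(t)}\}$ are then i.i.d.\ Bernoulli variables whose common mean is $\E[Y(v,q)_t]$ (after the per-round noise step of Algorithm~\ref{alg:correlated-binary}, which also places this mean in $[\gamma/4,3\gamma/4]$), where we additionally use independence of the round-$t$ noise across samples. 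Thus $\hat\mu(v,q)^{(i)}_t$ is an average of $M\ge n\eps/|\mathcal V|^{T(d+1)}$ i.i.d.\ Bernoulli's of mean at most $\gamma$, and a Chernoff bound gives $|\hat\mu(v,q)^{(i)}_t-\E[Y(v,q)_t]|\le\min(\eps,\gamma)/T$ except with probability at most $2\exp(-\Omega(M\min(\eps,\gamma)^2/(\gamma T^2)))$. Since $n\ge|\mathcal V|^{T(d+1)}\cdot\poly(1/\eps,1/\gamma)\cdot\log(1/\tau)$ we have $M\ge\poly(1/\eps,1/\gamma)\log(1/\tau)$, and taking the polynomial large enough makes this at most $\tau/(10\,T\,|\mathcal V|^{T(d+1)+1})$. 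A union bound over all triples then yields the claim with probability at least $1-\tau$.

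The only step requiring real care is the conditioning/independence argument in the second paragraph: one must check that a clean sample's group at round $t$ depends only on that sample's own earlier labels, so that conditioning on the clean samples' rounds $<t$ (and the earlier noise) pins down every set $\mathcal C\cap S_i^{(t)}$ while leaving the round-$t$ clean labels distributed as fresh i.i.d.\ Bernoulli's with the stated mean. This is exactly where \emph{round-wise} independence of $X$ (rather than coordinate-wise independence) enters, and it is also why the label noise must be drawn per round rather than once at the outset. Everything else is the routine Chernoff-plus-union-bound computation, together with the bookkeeping that $\log(T|\mathcal V|^{T(d+1)+1})$ is absorbed into the $\poly(1/\eps,1/\gamma)\log(1/\tau)$ factor in the sample bound.
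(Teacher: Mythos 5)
Your proposal is correct and follows essentially the same route as the paper, which simply ports the Chernoff-plus-union-bound argument of Lemma~\ref{lem:concentration} to the block setting, replacing the $2^{t-1}$ group count by $m(t)\leq\abs{\mathcal V}^{t(d+1)}$ and adding the union over $v\in\mathcal V$, exactly as you do. Your second paragraph merely makes explicit the conditioning/round-wise-independence step that the paper leaves implicit, so there is no substantive difference in approach.
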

\begin{proof}
The proof is almost identical to that of \Cref{lem:concentration}. The only difference is that in \Cref{lem:concentration} there are at most $2^{t-1}$ groups in the $t$-th round. Now, there can be as many as $m(t) = \abs{\mathcal V}^{ t \cdot (d+1) }$ groups. Therefore, we need the number of samples to be at least $n \geq \abs{\mathcal V}^{ T \cdot (d+1) } \cdot \poly(1/\eps, 1/\gamma) \cdot \log(1/\tau)$.
\end{proof}
We will use the same potential function as in the proof of \Cref{lem:binary-product-estimation} with $\gamma \eqdef F(q)$. 
In particular, we have
\begin{align*}
\Phi(t) \eqdef \frac{1}{ N } \sum_{i=1}^{m(t)} g_{\gamma}\lp( \eps_i^{(t)} \rp) \cdot \abs{ S_i^{(t)} } \, ,
\end{align*}
where $g_{\delta}: [0,1] \mapsto \R^+$ is the same piecewise function
\begin{equation} \label{eq:piecewise}
g_{\gamma}(x)=
    \begin{cases}
        x^2 & \text{if } x < 10 \cdot \eps / \gamma \, ,\\
        20 \frac{\eps}{\gamma} \cdot x - 100 \lp( \frac{\eps}{\gamma}\rp)^2 & \text{otherwise.}
    \end{cases}
\end{equation}
\begin{lemma} \label{lem:potential-increase-2}
Fix $q \in \R^+$.
Denote $\gamma = F(q)$ and
$
\eta \eqdef 
\max_{v \in \mathcal V} \abs{ \tilde Y(v,q)_t - \E\lp[ Y(v,q)_t \rp]}.
$
Then, we have
$
\Phi(t+1) - \Phi(t) \geq 
\Omega(\eta^2/\gamma)
$ 
as long as 
$\eta \geq 2 \cdot \min \lp( \eps, \gamma \rp)/T$.
\end{lemma}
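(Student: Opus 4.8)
The plan is to transcribe the proof of Lemma~\ref{lem:potential-increment-var}, inserting a ``two-stage splitting'' step that reduces the multi-direction situation to the single-direction analysis already carried out. Let $v^\ast \in \mathcal V$ attain the maximum, so $\eta = |\tilde Y(v^\ast,q)_t - \E[Y(v^\ast,q)_t]|$. Since $\tilde Y(v^\ast,q)_t$ is the weighted median of the group estimations $\mu(v^\ast,q)_t^{(i)}$, at least half the weight (measured by $|S_i^{(t)}|$) of the round-$t$ groups satisfies $|\mu(v^\ast,q)_t^{(i)} - \E[Y(v^\ast,q)_t]| \ge \eta$. Conditioning on the event of Lemma~\ref{lem:concentration-block}, and running the same weight-counting argument as in Lemma~\ref{lem:potential-increment-var} over the (at most $m(t)$) groups, all but an $O(\eps)$ weighted fraction of groups additionally have $\eps_i^{(t)} \le 5\eps$ and $|S_i^{(t)} \cap \mathcal C| \ge n\eps/m(t)$. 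Intersecting the three conditions yields a set $G$ of ``good'' groups of total weighted fraction $\Omega(1)$, exactly as in \eqref{eq:G-fraction}.

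Next I would fix a good group $S_i^{(t)}$ and look at the \emph{intermediate} bipartition obtained by splitting only on the label $y(v^\ast,q)_t$: let $S_L$ be the union of the children on which $(v^\ast)^T x_t > q$ and $S_R$ the union of the rest, with adversarial densities $\eps_L,\eps_R$. This is precisely the configuration of Lemma~\ref{lem:eps-deviation} for the single direction $v^\ast$: the noise-adding preprocessing guarantees $\E[Y(v^\ast,q)_t]\in[\gamma/4,3\gamma/4]$, so a group estimation off by $\eta \ge 2\min(\eps,\gamma)/T$ forces $|\eps_L - \eps_R| \ge \Omega(\eta)\cdot|S_i^{(t)}|/|S_L|$ (or $\ge \Omega(1)$ when $|S_L|/|S_i^{(t)}| > 5\gamma$). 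The same estimates as in Lemma~\ref{lem:potential-increment-var} give $\eps_L,\eps_R \le 10\eps/\gamma$, so both lie in the $2$-strongly-convex window of $g_\gamma$, and strong convexity applied exactly as in the derivation of \eqref{eq:strong-convexity-bound} yields
\[
\frac{|S_L|}{n}g_\gamma(\eps_L) + \frac{|S_R|}{n}g_\gamma(\eps_R) - \frac{|S_i^{(t)}|}{n}g_\gamma(\eps_i^{(t)}) \ \ge\ \frac{|S_i^{(t)}|}{n}\cdot\Omega(\eta^2/\gamma),
\]
with the cases $|S_L|/|S_i^{(t)}|\gtrless 5\gamma$ handled identically to Cases I and II there.

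Then I would observe that the genuine child groups of $S_i^{(t)}$ in round $t+1$ \emph{refine} this bipartition --- every child lies inside $S_L$ or inside $S_R$, and within each part the adversarial density is the size-weighted average of its children's densities. By the global convexity of $g_\gamma$ (Claim~\ref{clm:g-property}), replacing $\frac{|S_L|}{n}g_\gamma(\eps_L)$ by $\sum_{j:\,S_j\subseteq S_L}\frac{|S_j|}{n}g_\gamma(\eps_j)$ only increases the left-hand side, and likewise for $S_R$; hence the true increment $\Delta(S_i^{(t)})$ is at least $\frac{|S_i^{(t)}|}{n}\cdot\Omega(\eta^2/\gamma)$ for every $i \in G$, while for $i\notin G$ convexity of $g_\gamma$ alone gives $\Delta(S_i^{(t)})\ge 0$ as in \eqref{eq:non-decreasing}. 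Summing over all groups and using that $G$ carries $\Omega(1)$ of the weight gives $\Phi(t+1)-\Phi(t)\ge\Omega(\eta^2/\gamma)$.

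The main obstacle I anticipate is the bookkeeping around the two-stage split: confirming that the intermediate densities $\eps_L,\eps_R$ stay inside $[0,10\eps/\gamma]$ requires the clean-subpopulation concentration of $S_i^{(t)}$ in direction $v^\ast$, which is why the stronger sample bound $n \ge |\mathcal V|^{T(d+1)}\cdot\poly(1/\eps,1/F(L))\cdot\log(L/(Q_{F,\eps}\tau))$ --- a union bound over all $m(T)\le|\mathcal V|^{T(d+1)}$ groups via Sauer's Lemma --- is needed; and ensuring the refinement step can never hurt, which leans on $g_\gamma$ being globally convex rather than only locally strongly convex. Once these are checked, the remainder is a direct transcription of the single-direction argument.
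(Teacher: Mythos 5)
Your proposal is correct and follows essentially the same route as the paper: the paper likewise introduces the split on the maximizing direction $v^\ast$ alone (phrased as an ``intermediate potential'' $\Phi(t+1/2)$), applies the single-direction argument of Lemma~\ref{lem:eps-deviation} / Lemma~\ref{lem:potential-increment-var} to that bipartition, and then uses global convexity of $g_\gamma$ to argue that the further refinement into the true round-$(t+1)$ groups can only increase the potential. Your per-group bookkeeping versus the paper's global intermediate potential is only a cosmetic difference.
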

\begin{proof}
The key idea is the following notion of the ``intermediate potentials'' between two rounds. For that, we need to first define the ``intermediate groups''.
Let 
$v^* \eqdef \argmax_{v \in \mathcal V} \lp( \tilde Y(v,q)_t - \E\lp[ Y(v,q)_t \rp] \rp)^2$. 
Consider the groups obtained by splitting the groups at the beginning of the $t$-th round
solely based on the label of
$y(q^*,v)_t^{(j)}$.
In particular, for a group $S_{i}^{(t)}$, we define the intermediate child groups
$$
S_{2\cdot i}^{(t+1/2)}
= \lp\{ j \in S_{i}^{(t)} \text{ such that } 
y(q^*,v)_t^{(j)} = 0
\rp\} \, ,
S_{2\cdot i + 1}^{(t+1/2)}
= \lp\{ j \in S_{i}^{(t)} \text{ such that } 
y(q^*,v)_t^{(j)} = 1
\rp\}.
$$
We denote $\eps_i^{(t + 1/2)}$ as the corresponding adversarial densities of these intermediate groups.
Then, the intermediate potential is then defined as
\begin{align}
\Phi(t+1/2) \eqdef \frac{1}{ N } \sum_{i=1}^{2 \cdot m(t)} g_{\gamma}\lp( \eps_i^{(t+1/2)} \rp) \cdot \abs{ S_i^{(t+1/2)} } \, ,
\end{align}
where $g_{\gamma}: [0,1] \mapsto \R^+$ is the same piecewise function
used in Equation~\eqref{eq:piecewise}.
Then, we will show 
(i) $\Phi(t+1) \geq \Phi(t+1/2)$, and
(ii)
$\Phi(t+1/2) - \Phi(t) \geq \Omega(\eta^2/\gamma)$ if $\eta \geq 2 \cdot \min\lp( \eps, \gamma\rp)/T$. It is easy to see that combining the two claims then gives our lemma.

We start with claim (i). 
Notice that the groups $S_i^{(t+1)}$ for $i \in [m(t+1)]$ at the beginning of the $(t+1)$-th round can be viewed as the child groups obtained by further splitting the intermediate groups $S_i^{(t+1/2)}$ based on the remaining labels $y(v,q)_t^{(i)}$ for $v \neq v^*$. 
Then, by convexity of $g$, these additional splits will never decrease the potential. 
Hence, the claim follows. 

We then turn to claim (ii). 
Consider the groups in the $t$-th round satisfying the following conditions (i) $\eps_i^{(t)} \leq 5 \eps$ (ii) the estimation $\mu(v, q^*)_t^{(i)}$ is off from $\E\lp[ Y(v, q^*)_t \rp]$ by at least 
$\eta$, which is by our assumption at least $ 2 \cdot \min(\eps, \gamma)/T$, and (iii) the number of clean samples is at least $\abs{ S_i^{(t)} \cap \mathcal C } \geq n \cdot \eps / m(t)$. 
Denote the set of groups satisfying the conditions as $G$.
Then, following almost identical argument as in \Cref{lem:potential-increment-var}, it can be shown that 
$
\frac{1}{n} \sum_{i \in G} \abs{S_i^{(t)}} \geq 1/5.
$
and for all $i \in G$ we have
$$
    \lp( \frac{ |S_{2i}^{(t+1/2)}| }{ n }\rp) \cdot
    g_{\gamma}\lp( \eps_{2i}^{(t+1/2)} \rp)
    +
    \lp( \frac{ |S_{2i+1}^{(t+1/2)}| }{ n } \rp) \cdot 
    g_{\gamma}\lp( \eps_{2i+1}^{(t+1/2)} \rp)
    - \lp(\frac{ \abs{  S_i^{(t)} } }{ n }\rp)  \cdot
    g \lp( \eps_i^{(t)} \rp)
\geq 
\frac{ \abs{S_{i}^{(t)}} }{ n } \cdot \Omega(\eta^2 / \gamma).
$$
The claim then follows.
\end{proof}
Lastly, we note the potential function in the setting shares the same bound as in \Cref{lem:modified-potential-bound}. Its proof is also identical to that of \Cref{lem:modified-potential-bound} since the argument only relies on the fact that $g_{\gamma}$ is convex and the equality $\sum_i \frac{ \abs{ S_i^{(t)} } }{n} \cdot \eps_i^{(t)} = \eps$.
\begin{claim}
\label{lem:modified-potential-bound-2}
$ \Phi(t) \leq O(1) \cdot \min\lp(\eps, \eps^2/\gamma \rp) $ for all $t \in [T]$.
\end{claim}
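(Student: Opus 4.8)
The plan is to reproduce the proof of Claim~\ref{lem:modified-potential-bound} essentially verbatim, since only three facts about the construction enter and all of them survive the passage to the block setting: (a) $g_\gamma$ is convex on $[0,1]$ with $g_\gamma(0)=0$; (b) $g_\gamma(1) = O(\min(\eps/\gamma,1))$, which is Claim~\ref{clm:g-property}(iii); and (c) the conservation identity $\frac1n\sum_i |S_i^{(t)}|\,\eps_i^{(t)} = \eps$, where $N=n$ is the total number of samples. Nothing in the argument refers to how many groups a round has, so the bound $m(t)\le |\mathcal V|^{t(d+1)}$ plays no role whatsoever.

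First I would verify the conservation identity in the current setting. Regardless of how the splits are performed --- here they are driven by the full vector of labels $\big(y(v,q)^{(i)}_{t'}\big)_{v\in\mathcal V,\,t'<t}$ rather than by a single coordinate --- the collection $\{S_i^{(t)}\}_{i\le m(t)}$ is still a partition of the same $n$ samples, so $\sum_i |S_i^{(t)}| = n$, and each of the $\eps n$ corrupted samples lies in exactly one group. Since by definition $\eps_i^{(t)}|S_i^{(t)}|$ counts the corrupted samples inside $S_i^{(t)}$, summing gives $\sum_i \eps_i^{(t)}|S_i^{(t)}| = \eps n$. Dividing by $n$ yields the identity and also shows that the weights $p_i \eqdef |S_i^{(t)}|/n$ form a probability vector summing to $1$.

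Then I would run the one-line convexity bound. Write $\Phi(t) = \sum_i p_i\, g_\gamma\big(\eps_i^{(t)}\big) = \E_{i\sim p}\big[g_\gamma(\eps_i^{(t)})\big]$, whose underlying random variable has mean $\E_{i\sim p}\big[\eps_i^{(t)}\big]=\eps$ by the previous step. Convexity of $g_\gamma$ together with $g_\gamma(0)=0$ gives, for every $x\in[0,1]$, $g_\gamma(x)=g_\gamma\big((1-x)\cdot 0 + x\cdot 1\big)\le (1-x)g_\gamma(0)+x\,g_\gamma(1)=x\,g_\gamma(1)$; applying this pointwise inside the expectation yields $\Phi(t)\le g_\gamma(1)\cdot\E_{i\sim p}\big[\eps_i^{(t)}\big]=\eps\, g_\gamma(1)$. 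Finally invoking Claim~\ref{clm:g-property}(iii) to bound $g_\gamma(1)=O(\min(\eps/\gamma,1))$ gives $\Phi(t)=O(\eps\cdot\min(\eps/\gamma,1))=O(\min(\eps^2/\gamma,\eps))$, uniformly over $t\in[T]$, which is the claim.

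There is essentially no obstacle here; the only point worth a moment's attention is that the conservation identity continues to hold when the groups are refined by many hyperplane-labels at once, and this is immediate because each refinement is still a partition of the sample set and hence preserves the total count of corrupted points. Intuitively, the extremal configuration for $\Phi$ is to concentrate all of the adversarial mass into an $\eps$-fraction of groups that are entirely corrupted, contributing exactly $\eps\, g_\gamma(1)$, and convexity is precisely what makes this heuristic rigorous.
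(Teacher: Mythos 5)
Your proposal is correct and matches the paper's own treatment: the paper simply notes that the proof of Claim~\ref{lem:modified-potential-bound-2} is identical to that of Claim~\ref{lem:modified-potential-bound}, relying only on the convexity of $g_{\gamma}$ and the identity $\sum_i \frac{|S_i^{(t)}|}{n}\,\eps_i^{(t)} = \eps$, which are exactly the ingredients you use. Your chord inequality $g_{\gamma}(x)\le x\,g_{\gamma}(1)$ is just a slightly more explicit rendering of the paper's ``extremal configuration'' argument, so this is essentially the same proof.
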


Now, we can conclude the proof of \Cref{lem:correlated-binary}.
\begin{proof}[Proof of~\Cref{lem:correlated-binary}]
Denote $\gamma = F(q)$.
From \Cref{lem:potential-increase-2} and \Cref{lem:concentration-block}, we know that
\begin{align*}
\sum_{t=1}^T \max_{v \in \mathcal V} \lp( \tilde Y(v,q)_t - \E\lp[ Y(v,q)_t \rp] \rp)^2 
&\leq \sum_{t=1}^T  O \lp( \min\lp(\eps, \gamma\rp)^2/T^2 \rp) +  O(\gamma) \cdot \lp( \Phi(t) - \Phi(t-1) \rp)  \\
&\leq O(\min\lp(\eps, \gamma\rp)^2/T) + O(\gamma) \cdot \Phi(T).    
\end{align*}
By \Cref{lem:modified-potential-bound-2}, we know $\Phi(T) \leq O(1) \cdot \min(\eps, \eps^2/\gamma )$.
Substituting that into the equation above then gives the desired bound on $\sum_{t=1}^T \max_{v \in \mathcal V} \lp( \tilde Y(v,q)_t - \E\lp[ Y(v,q)_t \rp] \rp)^2 $.
\end{proof}
Given these more powerful estimators $\tilde Y(v,q)$, the rest of the step is identical to that of  Algorithm~\ref{alg:meta-alg}. 
We provide the  pseudocode for the algorithm \textbf{Projection-Estimation} in Lemma~\ref{lem:correlated-product} below for completeness.
\begin{algorithm}[h] 
\caption{Projection-Estimation}
\label{alg:correlated-product}
\begin{algorithmic}[1]
\STATE \textbf{Input:} Threshold parameter $q \in R$,
unit vector sets $\mathcal V \in \R^d$,
round number $T$, $n$ samples $x^{(1)}, \cdots, x^{(n)}$ from $X$ such that the coordinate $x^{(i)}_t$ is revealed at the $t$-th round.
\STATE Set $Q_{F,\eps} = \int_{q=0}^{\infty} \min\lp(\eps, \sqrt{ \eps \cdot F(q) } \rp) dq$, $L = \inf_z \lp( \int_{q=z}^{\infty} F(q) dq  \leq Q_{F, \eps} / \sqrt{T} \rp)$,
$m = \floor{ L \cdot \sqrt{T} / Q_{F, \eps} }$.
\STATE Choose $q_0, \cdots, q_m$ such that the points partition $[0, L]$ into intervals of equal size.
\FOR{$i=1,\cdots,m$}
\STATE Initialize a process $\mathcal A_{q_i}$ which runs $\textbf{Correlated-Binary-Estimation}$ with the parameters $q = q_i$ (and respectively for $\mathcal A_{-q_i}$).
\ENDFOR
\FOR{$t=1,2,...,T$}
   \STATE In the $t$-th round, $x_t^{(1)}, \cdots, x_t^{(n)}$ are revealed.
   \FOR{$i=1,\cdots,m$}
   \STATE $\tilde Y(v, q_i) \gets \mathcal A_{q_i}\lp(x^{(1)}_t, \cdots, x^{(n)}_t\rp)$.
   \STATE $\tilde Y(v, -q_i) \gets \mathcal A_{-q_i}\lp(x^{(1)}_t, \cdots, x^{(n)}_t\rp)$.
   \ENDFOR
   \STATE For all $v \in \mathcal V$, compute
   $\mu(v)_t = 
   \sum_{i=1}^m  
   \tilde Y(v, q_{i})
   \cdot L/m
   - 
   \sum_{i=1}^m 
   \tilde Y(v, -q_{i})
   \cdot L/m.
   $
\STATE \textbf{Output}: $\mu(v)_t$ for all $v \in \mathcal V$.
\ENDFOR 
\end{algorithmic}
\end{algorithm}
\begin{proof}[Proof of \Cref{lem:correlated-product}]
At the $t$-th round, the estimator for $v^T \mu_t^*$ is given by
$$
 \mu(v)_t = \sum_{i=1}^m  \tilde Y(v, q_{i})_t
   \cdot L/m
   - \sum_{i=1}^m \tilde Y(v, -q_{i})_t
   \cdot L/m.
$$
For analysis purpose, we will also define the variables $\hat \mu_{t}(  v )$ which are similar to $ \mu_{t}(  v )$ but computed with the exact values of $\E\lp[ Y(v, q_j)_t \rp]$
$$
\hat \mu(v)_t = \sum_{i=1}^m  \E \lp[ Y(v, q_{i})_t \rp]
   \cdot L/m
   - \sum_{i=1}^m  \E \lp[ Y(v, -q_{i})_t \rp]
   \cdot L/m.
$$
Notice that $\hat \mu(v)_t$ can be viewed as the Riemann sum approximation of the integral
$$
v^T \mu_t^*
= 
\int_{ q=0 }^{\infty} \Pr[ v_i^T X_{t} \geq q ] dq
- 
\int_{ -\infty }^{ 0} \Pr[ v_i^T X_{t} \leq q ] dq.
$$
By our assumption of $X$, we have the tail bound $\Pr \lp[  v^T \cdot X_{t} \geq q\rp] \leq F(q)$. 
Hence, by \Cref{lem:Riemann}, it holds
$$
\abs{  v^T \mu_t^* - \hat \mu(v)_t  } \leq O(L/m) = O( Q_{F, \eps}/\sqrt{T} ).
$$
Using the inequality $(a+b)^2 \leq 2 (a^2 + b^2)$, we can then show
\begin{align} \label{eq:fundamental-inequality}
\sqrt{ \sum_{t=1}^T \max_{v \in \mathcal V} 
\lp(  \mu(v)_t -  v^T \mu^*_t \rp)^2
}
&=
\sqrt{ \sum_{t=1}^T \max_{v \in \mathcal V} 
\lp(  \mu(v)_t - \hat \mu(v)_t + \hat \mu(v)_t -  v^T \mu^*_t \rp)^2
} \nonumber \\
&\leq 
\sqrt{2} \cdot 
\sqrt{ \sum_{t=1}^T \max_{v \in \mathcal V} 
\lp( \mu(v)_t - \hat \mu_{t}(v) \rp)^2}
+ O( Q_{F, \eps} ).
\end{align}
It then remains to upper bound the first term.
Utilizing the fact that $ \mu( v )_t$ and $\hat \mu(  v )_t$ are both Riemann sums, we can then write
\begin{align*}
&\sqrt{ \sum_{t=1}^T \max_{v \in \mathcal V} 
\lp( \tilde \mu(v)_t - \hat \mu_{t}(v) \rp)^2
}  \\
&=
\sqrt{ \sum_{t=1}^T 
\max_{v \in \mathcal V}
\lp( 
\sum_{i=1}^m \lp( \E\lp[ Y(v, q_i  )_t \rp] -  \tilde Y(v, q_i  )_t\rp) \cdot L/m
- \sum_{i=1}^m \lp( \E\lp[ Y(v, -q_i  )_t \rp] -  \tilde Y(v, -q_i  )_t \rp) \cdot L/m
\rp)^2
} 
\\
&\leq
\sqrt{ \sum_{t=1}^T 
\max_{v \in \mathcal V}
\lp( 
\sum_{i=1}^m \lp( \E \lp[ Y(v, q_i )_t \rp] - \tilde Y(v, q_i  )_t \rp) \cdot L/m
\rp)^2
} +
\sqrt{ \sum_{t=1}^T 
\max_{v \in \mathcal V}
\lp( 
\sum_{i=1}^m \lp( \E \lp[ Y(v, -q_i )_t \rp] - \tilde Y(v, -q_i  )_t \rp) \cdot L/m
\rp)^2}.
\end{align*}
  We now focus on the first term as the argument for bounding the second term is similar. For convenience, we will denote the vector $v \in \mathcal V$ which maximizes the expression for each round $t$ as~$v^*(t)$.
\begin{align*}
&\sqrt{ \sum_{t=1}^T 
\lp( 
\sum_{i=1}^m \lp( Y(v^*(t), q_i  )_t - \tilde Y(v^*(t), q_i  )_t \rp) \cdot L/m
\rp)^2
} \\
&\leq 
\sum_{i=1}^m
\sqrt{ \sum_{t=1}^T 
\lp( Y(v^*(t), q_i  )_t - \tilde Y(v^*(t), q_i  )_t \rp)^2 \cdot \lp(L/m\rp)^2
} \\ 
&=
\sum_{i=1}^m
\frac{L}{m} \cdot 
\sqrt{ \sum_{t=1}^T
\lp( Y(v^*(t), q_i  )_t - \tilde Y(v^*(t), q_i)_t  \rp)^2
} \\
&\leq \sum_{i=1}^m
\frac{L}{m} \cdot 
\min \lp( \eps, \sqrt{  F\lp(q_i\rp) \cdot \eps} \rp) \\
&\leq
\int_{q=0}^{\infty} \min \lp( \eps, \sqrt{  F\lp(q\rp) \cdot \eps} \rp) dq = Q_{F, \eps} \, ,
\end{align*}
where the first inequality is the triangle's inequality, the second inequality is by the guarantees of our estimators 
$\tilde Y(v_{i^*(t)}, q)$ (\Cref{lem:correlated-binary}) and the last inequality is by the fact that $\min \lp( \eps, \sqrt{  F\lp(q\rp) \cdot \eps} \rp)$ is monotonically decreasing. The argument for upper bounding the second term involving $Y(v, -q_i)$ is symmetric. This then gives us
$$
\sqrt{ \sum_{t=1}^T \max_{v \in \mathcal V} 
\lp( \tilde \mu(v)_t - \hat \mu_{t}(v) \rp)^2
} \leq O \lp( Q_{F, \eps} \rp).
$$
Combining this with Equation~\eqref{eq:fundamental-inequality} then allows us to conclude our proof.
\end{proof}

\subsection{Proof of \Cref{thm:block-extension}}
Then, we are ready to conclude the proof.
Essentially, we use the algorithm in \Cref{lem:correlated-product} to output the estimators $\mu(v)_t$. Then, the final output $\mu_t$ is simply the solution to the program specified in Equation~\eqref{eq:lp}.
\begin{proof}[Proof of \Cref{thm:block-extension}]
By \Cref{clm:lp-error}, with probability at least $1 - \tau$, it holds
$$
\snorm{2}{ \mu_t - \mu_t^* }
\leq O(1) \cdot \max_{v \in \mathcal V} \abs{v^T \mu_t^* - \mu(v)_t }.
$$
for all $t \in [T]$.
By \Cref{lem:correlated-product},
with probability at least $1 - \tau$
it holds 
$$
\sum_{t=1}^T  \max_{v \in \mathcal V} 
\lp( \mu(v)_t - v^T \mu^*_t  \rp)^2
\leq O(1) \cdot \int_{0}^{\infty} \min\lp(\eps, \sqrt{ \eps F(q) } \rp) dq.
$$
By union bound, the above two inequalities are simultaneously true with probability at least $1 - 2\tau$.
Condition on that, 
we then have
$$
\sum_{t=1}^T \snorm{2}{ \mu_t - \mu_t^* }^2
\leq O(1) \cdot \sum_{t=1}^T  \max_{v \in \mathcal V} 
\lp( \mu(v)_t - v^T \mu^*_t  \rp)^2
\leq O(1) \cdot \int_{0}^{\infty} \min\lp(\eps, \sqrt{ \eps F(q) } \rp) dq.
$$
Setting $\tau = 1/20$ then concludes the proof. 
\end{proof}
\section{Lower Bound Against the Filter Algorithm} \label{sec:lower-bound}

Here we establish the following result: 

\begin{lemma}
Fix $\eps \in (0, 1)$ and $T \in \mathbb Z^+$ satisfying $\log T \ll 1/\eps$.
Let $C$ be a set of samples in $\R^T$ whose mean is $\mu^*$ and whose covariance is bounded above by a constant multiple of $I$.
Then, there exists a set $X$ which is an $\eps$-corrupted version of $C$ and a sequence of subsets $X^{(T)} \subseteq X^{(T-1)} \cdots X^{(1)} \subseteq X^{(0)} = X$ satisfying
\begin{enumerate}
    \item For $t = 1 \cdots T$, the covariance of the samples in each $X^{(t)}$, after truncated to the first $t$ coordinates, is bounded above by a constant multiple of $I$. 
    \item The set $X^{(t)} \backslash X^{(t+1)}$ consists of only corrupted samples.
    \item Define $\mu \in \R^T$ as the vector such that
    $\mu_t$ equals to the $t$-th coordinate of the mean of $X^{(t)}$.
    It holds $\snorm{2}{\mu_t - \mu^*} = \Omega( \eps \log T )$.
\end{enumerate}
\end{lemma}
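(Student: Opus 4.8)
The plan is to hand-craft a "fooling filtration." Throughout we may take $\mu^* = 0$. We choose $C$ to be a set with $\cov(C) \preceq I$ that is moreover well concentrated (e.g.\ $\poly(T/\eps)$ i.i.d.\ draws from $N(0, c_0 I)$ for a small constant $c_0$, conditioned on the usual regularity events); the only facts we will use about $C$ are that $\cov(C) \preceq I$ and that every subset of $C$ of size $\geq (1-\eps)|C|$ has mean of $\ell_2$-norm $O(\sqrt\eps)$. The adversary will leave $C$ untouched and only \emph{add} $\eps n$ corrupted points; hence $\mathcal H = C$, and along the filtration we will only ever delete corrupted points, so Property~2 holds by fiat and $X^{(t)} = C \cup B_t$ with $B_0 \supseteq B_1 \supseteq \cdots \supseteq B_T$.

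The corruption is organized into $L = \lfloor \log_2 T\rfloor$ \emph{scales}. For $k = 1,\dots,L$, let $J_k$ be the dyadic block of coordinates in $(T/2^{k}, T/2^{k-1}]$, and introduce a group $G_k$ of $\beta_k n$ identical corrupted points, all equal to a single vector $v^{(k)} \in \R^T$ supported on the first $\asymp T/2^{k}$ coordinates. We schedule $G_k$ to be removed all at once just after every coordinate of $J_k$ has been revealed, i.e.\ $G_k \subseteq X^{(t)}$ exactly for $t \leq T/2^{k-1}$. We then tune the scalars $\beta_k$ and the entries of $v^{(k)}$ so that three things hold at once: (a) $\sum_k \beta_k \le \eps$, so the initial corruption is legitimate; (b) for every round $t$, the corrupted points still present, restricted to the first $t$ coordinates, form a matrix of operator norm $O(1)$ — arranged by letting the magnitude of $v^{(k)}$ on a block $J_j$ decay geometrically in $k-j$, so that in \emph{any} direction the contributions of the surviving groups form a geometric series summing to $O(1)$; and (c) inside each block $J_k$ the group $G_k$ biases the coordinatewise means by a fixed amount $\eta_k$, taken as large as (b) permits.

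Granting this, the three conclusions follow. Property~2 is immediate. For Property~1, combine (b) with $\cov(C)\preceq I$ and with the fact (from the choice of $C$ and (a)) that the empirical mean of the first $t$ coordinates of $X^{(t)}$ is within $O(\sqrt\eps)$ of $0$, so recentering around that mean perturbs the second-moment bound only by $O(\eps)$; hence the truncation of $X^{(t)}$ has covariance bounded by a constant multiple of $I$. For Property~3, at each round $t$ in block $J_k$ the algorithm must output $\mu_t = $ (coordinate-$t$ mean of $X^{(t)}$), which by (c) satisfies $|\mu_t - \mu^*_t| = \Omega(\eta_k)$; summing $\eta_k^2$ over the $|J_k| \asymp T/2^{k}$ coordinates of the block and then over the $L \asymp \log T$ blocks, and comparing against the trivial per-round ceiling $|\mu_t-\mu_t^*| = O(\sqrt\eps)$ (this is where the hypothesis $\log T \ll 1/\eps$ enters), yields $\|\mu - \mu^*\|_2 = \Omega(\eps \log T)$.

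The step I expect to be the crux is (b): the simultaneous covariance bound across \emph{all} $T$ rounds. The difficulty is structural — a corrupted point that looks harmless on the first few coordinates becomes an outlier once enough coordinates are revealed, so it must be deleted before that happens, yet a deleted point can no longer bias later coordinates. Thus the removal schedule and the vectors $v^{(k)}$ are coupled, and one must check that the surviving corruption at every single round sits just inside the $O(1)$-operator-norm ball in \emph{every} direction (not merely the block-diagonal one) while still leaving each block enough room for its $\Omega(\eta_k)$ per-coordinate error. Making the geometric decay rates in (b) consistent with the magnitudes in (c), and verifying that the worst-case direction is the near-diagonal one, is where the real work lies.
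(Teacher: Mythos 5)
There is a genuine gap, and it is quantitative, not just a matter of unfinished details. In your scheme the corruption consists of only $L\asymp\log T$ point masses $v^{(1)},\dots,v^{(L)}$ with \emph{geometrically} decaying profiles across the dyadic blocks, and this structure provably cannot reach error $\Omega(\eps\log T)$ throughout the allowed regime. Indeed, if $G_j$ has mass $\beta_j$ and entry size $a_j$ on its peak block, the covariance constraint at $G_j$'s last active round forces $\beta_j a_j^2\cdot T/2^j=O(1)$, i.e.\ $\beta_j a_j\le\sqrt{\beta_j 2^j/T}$; with geometric decay ratio $r<1$ the bias at a coordinate of block $J_k$ is at most $\eta_k\le\sum_{j\le k}\sqrt{\beta_j2^j/T}\,r^{k-j}$, and Cauchy--Schwarz gives $\eta_k^2\,|J_k|\lesssim\sum_{j\le k}\beta_j (r/2)^{k-j}$, so summing over $k$ yields total squared error $O\lp(\sum_j\beta_j\rp)=O(\eps)$ \emph{no matter how you tune} $\beta_j,a_j,r$. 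Hence your construction tops out at error $O(\sqrt{\eps})$, which is $o(\eps\log T)$ whenever $\log T\gg\eps^{-1/2}$ --- a regime permitted by the hypothesis $\log T\ll1/\eps$, and in fact the only regime in which the lemma's bound exceeds the offline error $O(\sqrt\eps)$. The missing idea is that the per-coordinate bias must itself be a \emph{harmonic} sum over $\Theta(T)$ still-present corrupted groups: the paper uses one group per coordinate $i$, each of mass $\eps/T$, placed at $\sqrt{T}\,(1/i,1/(i-1),\dots,1/2,1,0,\dots,0)$ and deleted right after round $i$, so that coordinate $t$ of the mean of $X^{(t)}$ is shifted by $\frac{\eps}{\sqrt T}\sum_{i=1}^{T-t+1}\frac1i\approx\frac{\eps\log T}{\sqrt T}$; geometric cross-block decay is exactly what collapses this accumulation to $O(1)$ and kills the log.

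Separately, the step you yourself flag as the crux --- that the surviving corruption keeps the truncated covariance $O(I)$ at \emph{every} round and in \emph{every} direction --- is not an afterthought that ``geometric series'' handles; with the harmonic profiles that are actually needed for the bias, it is the technical heart of the proof. The paper establishes it as an operator-norm bound on the Hilbert-type matrix $\sum_{n\ge m}(1/n,\dots,1/(n-m+1))(1/n,\dots,1/(n-m+1))^\top$, proved by comparing with the kernel $1/\max(i,j)$ via Perron--Frobenius and a dyadic decomposition reducing to a matrix with entries $2^{-|k-\ell|/2}$; nothing in your sketch substitutes for this. (A minor further inconsistency: as written, $v^{(k)}$ is supported on the first $\asymp T/2^{k}$ coordinates, hence vanishes on $J_k=(T/2^k,T/2^{k-1}]$ and could not bias that block at all; presumably you meant support up to $T/2^{k-1}$, but the quantitative obstruction above remains either way.)
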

\begin{proof}
We state our construction for $X, X^{(1)}, \cdots, X^{(T)}$ only for $\mu^* = 0$ as one can easily obtain the constructions for other $\mu^*$ by applying a shift to all the sample points.

Consider the sets $B_1, \cdots, B_T$ each of size
$\frac{\eps}{T (1-\eps)} \cdot \abs{C}$. The set $B_{i}$ is made entirely of the point
$$ 
\sqrt{T} \; \lp(\frac{1}{i}, \frac{1}{i-1}, \cdots, \frac{1}{2}, 1, 0, \cdots, 0
\rp).
$$

We will set $X$ to be the union of $C$ and all $B_i$, 
and $X^{(t)}$ to be the union of $C$ and $\bigcup_{i=T-t}^T B_i$.
We first argue the covariance of the samples in each $X^{(t)}$, after truncated to the first $t$ coordinates, is bounded above by some constant multiples of $I$. 
Since we have the covariance of $C$ is bounded by some constant multiples of $I$, it suffices to argue for all $t$, we have
\begin{align} \label{eq:key-var-bound}
    \frac{1}{\abs{X^{(t)}}} \sum_{v \in \bigcup_{i=T-t}^{T} B_i  } v_{[t]} v_{[t]}^\top \cl \kappa \; I \, ,
\end{align}
for some constant $\kappa$.
Notice that the left hand side of Equation~\eqref{eq:key-var-bound} is exactly the matrix 
$$
\kappa_t \sum_{n=m}^{T} (1/n, 1/(n-1), \ldots, 1/(n-m+1)) (1/n, 1/(n-1), \ldots, 1/(n-m+1))^\top.
$$
for $m = T-t$ and $\kappa_t = O(1)$.
We claim the matrix is indeed bounded above by some constant multiples of $I$ and defer the proof to \Cref{clm:linear-algebra-bound}.

It is easy to see that we remove only coruppted points while going from $X^{(t)}$ to $X^{(t+1)}$ so the second property in the claim is satisfied. It suffices to show $\mu$, where $\mu_t$ is defined to be the $t$-th coordinate of the mean of $X^{(t)}$, is far from $\mu^* = 0$ in $\ell_2$ distance. 
By the definition of $
\mu$, we have
\begin{align*}
\snorm{2}{\mu}^2 
= \sum_{t=1}^T
\frac{\eps^2}{T} \; \lp(\sum_{i=1}^{T-t+1} \frac{1}{i} \rp)^2
\geq 
\frac{T}{2} \; 
\frac{\eps^2}{T} \; \lp(\sum_{i=1}^{T/2} \frac{1}{i} \rp)^2
\geq \Omega\lp( \eps^2 \log^2 T \rp) \;.
\end{align*}
This concludes the proof.
\end{proof}

\begin{lemma} \label{clm:linear-algebra-bound}
For all $m, T \in \mathbb Z^+$ such that $m < T$, 
the matrix
$$
\sum_{n=m}^{T} (1/n, 1/(n-1), \ldots, 1/(n-m+1)) (1/n, 1/(n-1), \ldots, 1/(n-m+1))^\top
$$
is bounded above by a constant multiple of $I$.
\end{lemma}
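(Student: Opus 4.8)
The plan is to prove $M_{m,T}\cl\pi^{2}I$ for all $m<T$ (indeed for all $m\le T$), where $M_{m,T}$ denotes the matrix in the statement, by identifying it up to a permutation of coordinates with a submatrix of the classical Hilbert matrix. Write $M_{m,T}=\sum_{n=m}^{T}v_{n}v_{n}^{\top}$ with $v_{n}=(1/n,1/(n-1),\ldots,1/(n-m+1))^{\top}\in\R^{m}$, so $v_{n}(j)=1/(n-j+1)$ for $j\in[m]$, and let $A$ be the $(T-m+1)\times m$ matrix whose $n$-th row is $v_{n}^{\top}$. Then $M_{m,T}=A^{\top}A$, hence $\|M_{m,T}\|_{2}=\|A\|_{2}^{2}$, and it suffices to bound the operator norm of $A$ by an absolute constant.

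First I would reindex to expose the structure: substituting $i=m-j$ for the columns ($i$ then ranges over $\{0,\ldots,m-1\}$) and $k=n-m$ for the rows ($k$ over $\{0,\ldots,T-m\}$), the $(k,i)$ entry of the column-permuted $A$ is precisely $1/(k+i+1)$. Thus, after permuting its columns, $A$ equals the submatrix of the infinite Hilbert matrix $H=\big(1/(k+i+1)\big)_{k,i\ge 0}$ obtained by keeping rows $\{0,\ldots,T-m\}$ and columns $\{0,\ldots,m-1\}$. A column permutation does not change the operator norm, and the operator norm of any submatrix of $H$ is at most $\|H\|_{\ell^{2}\to\ell^{2}}$, so $\|A\|_{2}\le\|H\|_{\ell^{2}\to\ell^{2}}$. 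By the classical Hilbert inequality (Hardy--Littlewood--P\'olya), $H$ is bounded on $\ell^{2}(\Z_{\ge 0})$ with $\|H\|_{\ell^{2}\to\ell^{2}}=\pi$; therefore $\|M_{m,T}\|_{2}=\|A\|_{2}^{2}\le\pi^{2}$, which is the claim with explicit constant $\pi^{2}$.

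I expect the only real obstacle to be keeping the index shifts straight, since one must land on exactly the kernel $1/(k+i+1)$: applying a crude row-sum (Schur) bound directly to $M_{m,T}$ yields only an $O(\log^{2}m)$ factor, so the reduction to the Hilbert matrix is what makes the constant absolute. If one prefers a self-contained argument avoiding the citation, the same bound (with a slightly larger constant) follows from a weighted Schur test: write $\frac{1}{k+i+1}=\big[\frac{1}{(k+i+1)^{1/2}}\big(\frac{k+1}{i+1}\big)^{1/4}\big]\cdot\big[\frac{1}{(k+i+1)^{1/2}}\big(\frac{i+1}{k+1}\big)^{1/4}\big]$, apply Cauchy--Schwarz to the bilinear form $\sum_{k,i}a_{k}b_{i}/(k+i+1)$, and bound the resulting one-variable sums $\sum_{i\ge0}\frac{1}{k+i+1}\sqrt{\frac{k+1}{i+1}}=O(1)$ uniformly in $k$ by comparison with $\int_{0}^{\infty}\frac{dx}{(k+x)\sqrt{x}}=\pi/\sqrt{k}$.
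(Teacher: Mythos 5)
Your proof is correct, and its first step is the same reduction the paper makes: both arguments recognize the matrix as a Gram matrix $A^{\top}A$ of a Hilbert-type array and, after reversing/permuting columns (which preserves the operator norm), embed $A$ as a submatrix of the Hilbert matrix with kernel $1/(k+i+1)$; your index bookkeeping ($i=m-j$, $k=n-m$) lands on exactly this kernel, and the submatrix-monotonicity of the operator norm is used correctly. Where you diverge is in how the Hilbert matrix itself is bounded. You invoke the classical Hilbert inequality $\|H\|_{\ell^2\to\ell^2}=\pi$ (or, as a self-contained fallback, the standard Schur test with weights $((k+1)/(i+1))^{1/4}$, which indeed gives a uniform $O(1)$ row bound via $\int_0^\infty \frac{dx}{(k+x)\sqrt{x}}=\pi/\sqrt{k}$). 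The paper instead avoids citing Hilbert's inequality: it majorizes the kernel entrywise by $1/\{\max(i,j)\}$ with $\{n\}$ the largest power of two at most $n$, justifies the entrywise comparison via Perron--Frobenius (the top singular value is attained at a nonnegative vector), and then diagonalizes the majorant over dyadic blocks, reducing to the small matrix $\tilde A_{k,\ell}=2^{-|k-\ell|/2}$ whose bounded row sums give the $O(1)$ bound. Your route is shorter and yields the explicit sharp-flavored constant $\pi^2$, at the cost of citing (or reproving via Schur) a classical inequality; the paper's route is fully elementary and self-contained, at the cost of a cruder constant and the extra Perron--Frobenius/dyadic machinery. Either argument suffices for the lemma as stated.
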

\begin{proof}
Note that the matrix in question is $BB^T$ where $B$ is the $m \times (T-m)$ matrix with entries $B_{i,j} = 1/(i+m-j)$. Therefore, it is enough to show that the singular values of $B$ are bounded. Let $N > T$ be one less than a power of $2$. By reversing the columns of $B$ and adding extra rows and columns, we get an $N\times N$ matrix $A$ with entries $A_{i,j} = 1/(i+j-1)$. We note that the singular values of $B$ are at most the singular values of $A$, so it suffices to bound the singular values of $A$. By the Perron-Frobenius theorem, the largest singular value of $A$ is equal to the eigenvalue of the unique eigenvector $v$ with non-negative entries. Note that if we replace $A$ by a matrix $A'$ which is entry-wise larger than $A$, we have that $v^\top Av \leq v^\top A' v$. Therefore, the largest singular vector of $A'$ is bigger than the largest singular vector of $A$. In particular, if we define $\{n\}$ to be the largest power of $2$ which is at most $n$, we will use the matrix
$$
(A')_{i,j} := 1/{\max(i,j)}.
$$
Let $e_i$ be the $ith$ standard basis vector. For integers $0\leq k < \log_2(N)$, we define the unit vectors
$$
v_k = 2^{-k/2} \sum_{i=2^k}^{2^{k+1}-1} e_i.
$$
We note that all of the entries of $A'$ whose row is in the support of $v_k$ and whose column is in the support of $v_\ell$ is $2^{-\max(k,\ell)}$. From this it is not hard to see that
$$
A' = \sum_{k,\ell} 2^{k/2+\ell/2-\max(k,\ell)} v_k v_\ell^\top = \sum_{k,\ell} 2^{-|k-\ell|/2} v_k v_\ell^\top.
$$
From this, we can see that $A'$ has the same singular values as the $\log_2(N+1)\times \log_2(N+1)$ matrix $\tilde A$ with $\tilde A_{k,\ell} = 2^{-|k-\ell|/2}$. However, it is easy to see that this is $O(1)$ since it is a symmetric matrix where the sum of the absolute values of the entries in each row are $O(1)$. In particular, this means that if $v$ is an eigenvector of $\tilde A$ with eigenvalue $\lambda$ we have that $\tilde A v = \lambda v$. Taking the $\ell^\infty$ norm of both sides, we have that $\lambda \|v\|_\infty = O(1) \|v\|_\infty$.

This completes our proof.
\end{proof}

\end{document}